\documentclass[11pt]{article}

\usepackage[utf8]{inputenc} % allow utf-8 input
\usepackage[T1]{fontenc}    % use 8-bit T1 fonts
\usepackage{hyperref}       % hyperlinks
\usepackage{url,authblk}            % simple URL typesetting
\usepackage{booktabs}       % professional-quality tables
\usepackage{amsfonts}       % blackboard math symbols
\usepackage{nicefrac}       % compact symbols for 1/2, etc.
\usepackage{microtype}      % microtypography
\usepackage{xcolor}         % colors

\usepackage[round]{natbib}
\usepackage{graphicx}
\usepackage{amsmath,amssymb,mathtools,amsthm}
\usepackage{mathrsfs}
\usepackage{dirtytalk}
\usepackage{tikz}
\usepackage{placeins}
\usepackage{csvsimple}
\usepackage{siunitx}
\usepackage{algorithm}
\usepackage{algorithmic}
\newcommand{\KLf}[2]{\operatorname{KL}(#1 \mid\mid #2)}
\newcommand{\E}{\mathbb{E}}
\newcommand{\R}{\mathbb{R}}

\newcommand{\SHK}{\mathrm{SHK}}
\newcommand{\Loc}{\operatorname{Loc}}
\newcommand{\amin}{a_{\min}}
\newcommand{\Dsep}{\Delta_{\min}}
\newcommand{\aspace}{\Delta_{M,\amin}^{\circ}}
\newcommand{\locspace}{\Loc_{M,\Dsep}}
\newcommand{\softmin}{\operatorname{softmin}}
\newcommand{\ProbM}{\mathcal{P}_{M,\amin,\Dsep}(\Omega)}
\newcommand{\rloc}{r^{\operatorname{loc}}}

\newcommand{\etaretarg}[1]{\eta_{\operatorname{ret},#1}}

\newcounter{lemmano}
\newcounter{theoremno}

\newtheorem{theorem}[theoremno]{Theorem}
\newtheorem{lemma}[lemmano]{Lemma}

\newenvironment{theorem*}{{\bf Lemma:}}

\title{Sinkhorn Based Associative Memory Retrieval Using Spherical Hellinger Kantorovich Dynamics}

\author{Aratrika Mustafi*}
\author{Soumya Mukherjee*}

\affil{Department of Statistics, Pennsylvania State University}

\newtheorem{asu}{Assumption}

\begin{document}

\maketitle

\begin{abstract}
  We propose a dense associative memory for empirical measures (weighted point clouds).
Stored patterns and queries are finitely supported probability measures, and retrieval is defined
by minimizing a Hopfield-style log-sum-exp energy built from the debiased Sinkhorn divergence.
We derive retrieval dynamics as a spherical Hellinger Kantorovich (SHK) gradient flow, which
updates both support locations and weights. Discretizing the flow yields
a deterministic algorithm that uses Sinkhorn potentials to compute barycentric transport steps
and a multiplicative simplex reweighting. Under local separation and PL-type conditions we prove
basin invariance, geometric convergence to a local minimizer, and a bound showing the minimizer
remains close to the corresponding stored pattern. Under a random
 pattern model, we further show that these Sinkhorn basins are disjoint
 with high probability, implying exponential capacity in the ambient 
dimension. Experiments on synthetic Gaussian point-cloud
memories demonstrate robust recovery from perturbed queries versus a Euclidean Hopfield-type baseline.
\end{abstract}

\section{Introduction}
Associative memories are a class of models designed to store and retrieve information through patterns of association instead of recalling data by specifying its location.  These models frame retrieval as a dynamical process :  given a partial or corrupted query, the system evolves toward an attractor that represents a stored pattern. The query evolves through a dynamical process that decreases an energy whose local minima encode the memories. This viewpoint goes back to Hopfield’s classical construction, where one designs recurrent interactions that induce a Lyapunov (energy) function whose local minima correspond to memories, yielding content-addressable recall from partial or corrupted cues (\cite{hopfield1982neural}). More recently, some modern Hopfield style models (\cite{krotov2016dense},\cite{ramsauer2020hopfield}) use the soft-min/ log-sum-exp (LSE) energy to create high-capacity retrieval dynamics and improve efficiency. In the continuous state setting, these retrieval updates can be related to the attention mechanism used in transformer architectures, suggesting a broad role for energy-based retrieval as a reusable computational primitive inside learned systems.

  A key limitation of much of the associative-memory literature is that patterns are typically vectors. A natural next step is to move beyond vector-valued patterns and treat probability measures as the objects being stored and retrieved. This shift is motivated by the growing role of distributional representations in modern learning, where uncertainty, multimodality, and population-level structure are often more naturally encoded by measures than by single points. In this direction, \cite{tankala2026denseassociativememorygaussian} develop a distributional dense associative memory for Gaussian distributions in the 2-Wasserstein (Bures-Wasserstein) geometry, defining a log-sum-exp energy over stored distributions and deriving a retrieval dynamics that aggregates optimal transport maps in a Gibbs-weighted manner; their stationary points correspond to self-consistent Wasserstein barycenters.  Many practical settings, however, are not well captured by a single Gaussian, instead distributions are often represented nonparametrically through weighted samples -  for example, weighted point clouds, histograms, particle approximations of posteriors, or sets of learned feature vectors,  suggesting the need for an associative-memory retrieval principle that operates directly on discrete measures.

In this paper we pursue this analogous extension of dense-associative-memory retrieval when both stored patterns and \textit{queries are empirical measures (weighted point clouds)}, using a log-sum-exp energy built from the debiased Sinkhorn divergence.  Sinkhorn divergences provide a computationally tractable way to compare discrete measures and yield deterministic barycentric projections from entropic optimal-transport couplings. A key issue is that transport only (Wasserstein) dynamics moves support locations while keeping the query weights fixed - an intrinsic limitation when the weights carry information such as saliency, mixture proportions, or coarse discretization effects. To evolve both support points and weights in a principled way while staying within probability measures, we therefore use a \textit{spherical Hellinger Kantorovich} (SHK) dynamics [see Appendix \ref{App:SHK flow}, \cite{liero2018optimal}], which couples Wasserstein (aka Kantorovich) transport with a spherical Hellinger-type reaction component. In the empirical setting, this leads to a fully deterministic retrieval operator that updates support points via Sinkhorn barycentric maps and updates weights via a multiplicative  reweighting on the simplex. This combination is attractive because it aligns the retrieval dynamics with the underlying geometry of measures while allowing both \textit{where mass} is located and \textit{how much mass} is assigned to each particle, to adapt during recall. With this objective in mind, our goal is to construct a dense associative memory (DAM) capable of storing discrete measures and accurately recovering the corresponding distribution when provided with perturbed/noisy queries.

Given a collection of finitely supported discrete distributions (stored patterns) $\{X_i\}_{i=1}^N$ and a query finitely supported discrete distribution $\xi$, we define the LSE functional using the Sinkhorn divergence $S_\varepsilon$ (defined in \ref{eq:sinkhorn-div}),  
\begin{equation}
\label{eq:energy}
E(\xi) \;=\; -\frac{1}{\beta}\log\!\left(\sum_{i=1}^N \exp\!\bigl(-\beta\,S_\varepsilon(\xi,X_i)\bigr)\right),
\end{equation}
with inverse temperature $\beta>0$. This is the functional we aim to minimize along the SHK gradient flow. To this end, we develop a practically implementable algorithm (SinkhornAlgo) to carry out the optimization. In parallel, we establish theoretical guarantees ensuring convergence and near-accurate retrieval, and we present empirical evidence to validate these findings.

\begin{itemize}
    \item \textbf{Local convergence and stability guarantees in SHK geometry -}
    Under standard local assumptions (margin separation, SHK smoothness, bounded gradients, and a
    local PL inequality), we prove descent and contraction properties for the softmin energy,
    quantify basin interference through separation margins, and establish geometric convergence of
    SHK gradient descent iterates to the unique local minimizer within a basin. We also bound the
    deviation between the basin minimizer and the corresponding stored pattern and provide a local
    stability bound showing stored patterns are approximate fixed points of one-step retrieval. 
    \item \textbf{Separation results under a random pattern model.}
  We introduce a sampling mechanism for generating general $M$-atom measures (random supports and
  weights) and prove that, with high probability, the induced Sinkhorn neighborhoods around
  stored patterns are pairwise disjoint. This yields an exponential-in-dimension scaling of the
  number of storable patterns under the stated conditions.
    \item \textbf{Empirical evidence against a Euclidean Hopfield-type baseline-}
    On synthetic point-cloud patterns sampled from Gaussians, we compare our SinkhornSHK retrieval
    against a vectorized Euclidean Hopfield-style baseline and observe robust recovery from noisy
    queries, particularly in regimes where Euclidean similarity is ambiguous but distributional
    (OT/Sinkhorn) geometry remains discriminative.
\end{itemize}

\section{Setting and notations}\label{Sec: Settings and Notations}

Let $\Omega\subset\mathbb{R}^d$ be open, bounded and convex with diameter $D\coloneqq\sup _{x, y \in \Omega}\|x-y\| < \infty$ and boundary $\partial \Omega$, and we equip it with the usual Euclidean geometry induced by the $\ell_2$ norm. For any $\omega \in \Omega$ and any set $S \subset \R^d$, define $\operatorname{dist}(\omega, S) = \min_{s \in S} \|\omega -s \|$. For any $x=(x_1,\dots,x_M) \in \Omega^M$, let $d_{\partial}(x) \coloneqq \min _{1 \leq m \leq M} \operatorname{dist}\left(x_m, \partial \Omega\right)$ denote the distance of $x$ from the boundary of $\Omega$ and $\operatorname{sep}(x)\coloneqq \min _{i \neq j}\left\|x_i-x_j\right\| $ denote the minimum pairwise separation between any two components of $x$. Let $\mathbb{B}(x, r) \coloneqq \left\{y \in \Omega : \|y-x\| < r \right\}$ and its closure $\bar{\mathbb{B}}(x, r) \coloneqq \left\{y \in \Omega : \|y-x\| \leq r \right\}$ be the open and closed ball of radius r around $x \in \Omega$. We denote by $\|\cdot\|_1$ and $\|\cdot\|_{\infty}$ the $\ell^1$ and $\ell^{\infty}$ norms, respectively and $\|\cdot\|$ will be interpreted in the natural context.

Fix any $M \in \mathbb{N}$, $\frac{1}{M} > \amin > 0$ and $\Dsep > 0$. Stored patterns are finitely supported discrete measures
$
X_i=\sum_{m=1}^{M} b_{i,m}\,\delta_{y_{i,m}},
b_{i,m}> \amin > 0, \sum_{m=1}^{M} b_{i,m}=1,
$
with all the locations $y_{i,m}$'s being separated from each other with minimum pairwise distance $\min _{m \neq k} \|y_{i,m}- y_{i,k}\| > \Dsep$,
and the query is also a finitely supported discrete measures
$\xi=\sum_{m=1}^{M} a_m\,\delta_{x_m},
a_m > \amin> 0, \sum_{m=1}^M a_m=1.
$ with all the locations $x_{m}$'s being pairwise separated with margin $\Dsep$.

For any $M \in \mathbb{N}$ $\frac{1}{M} >\amin > 0$ and $\Dsep > 0$, define the space of bounded weights
$
\aspace\coloneqq\left\{a \in \mathbb{R}^M: a_m > \amin, \sum_{m=1}^M a_m=1\right\} 
$ and the space of $M$ pairwise separated locations/atoms $\locspace(\Omega)\coloneqq\left\{x = (x_1,\dots,x_M) \in \Omega^M: \min _{m \neq n} \|x_m- x_n\| > \Dsep\right\} \in \aspace \times \locspace(\Omega)$. We will often consider the unordered particle parameterization of a finitely supported discrete measure with $M$ atoms $(a, x)=\left(\left(a_m\right)_{m=1}^M,\left(x_m\right)_{m=1}^M\right)$. Let $\mathcal{P}(\Omega)$ be the collection of probability measures on $\Omega$ and $$\ProbM = \left\{\xi \in \mathcal{P}(\Omega) : \xi = \sum_{l=1}^{M} a_l \delta_{x_l} \textrm{ for } (a_1,\dots,a_M) \in \aspace,  (x_1,\dots,x_M) \in \locspace(\Omega) \right\}.$$

Throughout the paper, we use the quadratic cost
$
c(x,y)=\frac12\|x-y\|^2
$
for defining the Wasserstein distance, spherical Hellinger Kantorovich distance and the Sinkhorn divergence.

For $\varepsilon>0$, define the entropic OT cost
$
\operatorname{OT}_\varepsilon(\mu,\nu)
\coloneqq\min_{\pi\in\Pi(\mu,\nu)}
\int_{\Omega\times\Omega} c(x,y)\,d\pi(x,y) \;+\;\varepsilon\,
\KLf{\pi}{\mu\otimes\nu},
$
where $\Pi(\mu,\nu)$ denotes couplings with marginals $\mu,\nu$.

The debiased \textit{Sinkhorn divergence} is defined as -
\begin{equation}
\label{eq:sinkhorn-div}
S_\varepsilon(\mu,\nu)
\coloneqq\operatorname{OT}_\varepsilon(\mu,\nu)
-\frac12\operatorname{OT}_\varepsilon(\mu,\mu)
-\frac12\operatorname{OT}_\varepsilon(\nu,\nu).
\end{equation}

A standard dual formulation of $\operatorname{OT}_\varepsilon$ uses Schr\"{o}dinger (entropic OT) potentials $(f_{\mu,\nu},g_{\mu,\nu})$, defined up to an additive constant.
We do not require a specific normalization; only gradients $\nabla f$ and differences of potentials will matter.
 One can define the entropic soft c-transform operator $A_\varepsilon$ via an expression of the form
\[
A_\varepsilon(g,\nu)(x)\coloneqq-\varepsilon\log\int_\Omega \exp\!\Big(\frac{g_{\mu,\nu}(y)-c(x,y)}{\varepsilon}\Big)\,d\nu(y)
\quad \text{(defined up to an additive constant)}.
\]
Then the optimal potentials $(f_{\mu,\nu},g_{\mu,\nu})$ (Schr\"{o}dinger potentials) satisfy the Schr\"{o}dinger system
\[
f_{\mu,\nu}=A_\varepsilon(g_{\mu,\nu},\nu)\quad \mu\text{-a.e.},
\qquad
g_{\mu,\nu}=A_\varepsilon(f_{\mu,\nu},\mu)\quad \nu\text{-a.e.}.
\]

These potentials are unique up to adding a constant to $f_{\mu,\nu}$ and subtracting the same constant from $g_{\mu,\nu}$ (gauge invariance). This does not affect any gradient $\nabla f_{\mu,\nu}(x)$ or $\nabla g_{\mu,\nu}(x)$, which is what we ultimately use.
The optimal entropic coupling has Gibbs form
\[
\frac{d\pi^\varepsilon_{\mu,\nu}}{d(\mu\otimes\nu)}(x,y)
=\exp\!\left(\frac{f_{\mu,\nu}(x)+g_{\mu,\nu}(y)-c(x,y)}{\varepsilon}\right).
\]
For more details on the Sinkhorn divergence, we refer  our readers to \cite{feydy2019interpolating,hardion2025gradient}.

\section{Theoretical Guarantees}
In this section, we state all our theoretical results. The detailed proofs are available in Section \ref{sec: Proof of theoretical results}. Retrieval guarantees largely rely on ensuring sufficient separation in terms of Sinkhorn divergence among the stored patterns. We propose a sampling mechanism that allows the number of patterns $N$ to be  exponentially large in the dimension $d$ and Theorem \ref{Exponential capacity and separation in Sinkhorn} establishes a high-probability separation guarantee for these $N$ randomly generated stored patterns, ensuring that the associated Sinkhorn neighborhoods (basins) are pairwise disjoint.

\begin{theorem}[Exponential storage capacity and high probability separation of patterns]\label{Exponential capacity and separation in Sinkhorn}
    Fix $d \geq 1, M \geq 2$, $\amin >0$, $\Dsep > 0$ and let $\Omega \subset \mathbb{R}^d$ be open, bounded and convex. Assume there exists $c \in \Omega$ and $R>0$ such that the closed ball $\bar{\mathbb{B}}(c, R) \subset \Omega$. Fix any $\sigma \in(0, R / 4)$, such that $\mathcal{Z}_{\sigma, \Dsep}\coloneqq\left\{\left(z_1, \ldots, z_M\right) \in \mathbb{B}(0, \sigma)^M: \min _{n \neq m}\left\|z_n-z_m\right\|>\Dsep\right\}$ is non-empty, and set $R_0\coloneqq R-2 \sigma$.
    Let $\gamma, p \in(0,1)$ and choose
    $$
    N\coloneqq\left\lfloor\sqrt{2 p} \exp \left(\frac{\gamma^2}{4} d\right)\right\rfloor .
    $$
    Let $X_1,\dots,X_N$ be generated by the sampling mechanism described as SampAlgo (see Sec \ref{SampAlgo}). Assume $\varepsilon>0$ is chosen small enough so that
$$
\varepsilon \log M<\frac{1-\gamma}{16} R_0^2,
$$
and define
$$
d_{\min }\coloneqq\sqrt{2(1-\gamma)} R_0, \quad r\coloneqq\frac{d_{\min }^2}{32}-\varepsilon \log M, \quad \Delta\coloneqq\frac{d_{\min }^2}{4} .
$$

Then, with probability atleast $1-p$, the following pairwise separation of Sinkhorn neighbourhoods/basins holds true (i.e. Assumption \ref{Ass: margin sep in Sinkhorn}):

For each $i$, for every $\xi \in B_i(r)=\left\{\nu \in \ProbM: S_{\varepsilon}\left(\nu, X_i\right) \leq r\right\}$, and every $j \neq i$,
$$
S_{\varepsilon}\left(\xi, X_j\right)-S_{\varepsilon}\left(\xi, X_i\right) \geq \Delta.
$$ In particular, $B_i(r) \cap B_i(j) = \emptyset, \quad \forall i\neq j$ i.e. the Sinkhorn neighborhoods of $X_i$'s are pairwise disjoint.
\end{theorem}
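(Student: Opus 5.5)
The plan is to reduce everything to plain (unregularized) optimal transport with cost $c$. I would first prove a Wasserstein separation of the random patterns with high probability. Then I would sandwich $S_\varepsilon$ between the quadratic transport cost and that cost plus or minus $\varepsilon\log M$. Finally I would apply the triangle inequality for $W_2$. Write $W_c(\mu,\nu)=\min_{\pi\in\Pi(\mu,\nu)}\int c\,d\pi$, so that $W_2=\sqrt{2W_c}$.

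\textbf{Step 1 (probabilistic separation).} I expect SampAlgo to place pattern $X_i$ around a random center $c_i=c+R_0u_i$, with $u_i$ uniform on the unit sphere. The atoms would be $y_{i,m}=c_i+z_{i,m}$ with $(z_{i,m})_m\in\mathcal{Z}_{\sigma,\Dsep}$, and the weights would lie in $\aspace$; the choice $R_0=R-2\sigma$ keeps all atoms inside $\bar{\mathbb{B}}(c,R)\subset\Omega$. Spherical concentration gives $\mathbb{P}(\langle u_i,u_j\rangle>\gamma)\le e^{-\gamma^2 d/2}$, up to a constant factor that the $\sqrt{2p}$ normalisation is designed to absorb. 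A union bound over at most $N^2/2$ pairs, with $N\le\sqrt{2p}\,e^{\gamma^2 d/4}$, then gives probability at least $1-p$ that $\langle u_i,u_j\rangle\le\gamma$ for all $i\ne j$. On that event
\[
\|c_i-c_j\|^2=2R_0^2\bigl(1-\langle u_i,u_j\rangle\bigr)\ge 2(1-\gamma)R_0^2=d_{\min}^2 .
\]
From this I need $W_2(X_i,X_j)\ge d_{\min}$. If SampAlgo uses a common offset and weight template, this is immediate because $X_j$ is a translate of $X_i$ by $c_j-c_i$. Otherwise I would use the mean bound $W_2(\mu,\nu)\ge\|\int x\,d\mu-\int y\,d\nu\|$ from Jensen's inequality, together with the structure of the offsets.

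\textbf{Step 2 (entropic sandwich).} For measures with at most $M$ atoms I will show
\[
W_c(\mu,\nu)-\varepsilon\log M\;\le\; S_\varepsilon(\mu,\nu)\;\le\; W_c(\mu,\nu)+\varepsilon\log M .
\]
Since the KL term is nonnegative, $\operatorname{OT}_\varepsilon\ge W_c$. Conversely, $\KLf{\pi}{\mu\otimes\nu}\le\min(H(\mu),H(\nu))\le\log M$ for any coupling, which gives $\operatorname{OT}_\varepsilon(\mu,\nu)\le W_c(\mu,\nu)+\varepsilon\log M$. For the self terms, the diagonal coupling yields $0\le\operatorname{OT}_\varepsilon(\mu,\mu)\le\varepsilon\log M$. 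Combining these with the $\tfrac12$ factors in \eqref{eq:sinkhorn-div} gives both inequalities.

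\textbf{Step 3 (margin).} Take $\xi\in B_i(r)$. The upper half of Step 2, applied to $(\xi,X_i)$, gives $W_c(\xi,X_i)\le r+\varepsilon\log M=d_{\min}^2/32$, so $W_2(\xi,X_i)\le d_{\min}/4$. The triangle inequality and Step 1 then give $W_2(\xi,X_j)\ge \tfrac34 d_{\min}$, so $W_c(\xi,X_j)\ge \tfrac{9}{32}d_{\min}^2$. The lower half of Step 2 gives $S_\varepsilon(\xi,X_j)\ge \tfrac{9}{32}d_{\min}^2-\varepsilon\log M$. Since $S_\varepsilon(\xi,X_i)\le r=\tfrac1{32}d_{\min}^2-\varepsilon\log M$, subtracting yields
\[
S_\varepsilon(\xi,X_j)-S_\varepsilon(\xi,X_i)\ge \tfrac{8}{32}d_{\min}^2=\Delta .
\]
The hypothesis on $\varepsilon$ makes $r>0$, so the basins are nonempty. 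Disjointness follows because $\xi\in B_i(r)\cap B_j(r)$ would force $S_\varepsilon(\xi,X_j)\ge r+\Delta>r$.

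\textbf{Main obstacle.} The delicate step is the deterministic link in Step 1 from center separation to $W_2(X_i,X_j)\ge d_{\min}$. The exact constants in the theorem ($\Delta=d_{\min}^2/4$, $r=d_{\min}^2/32-\varepsilon\log M$) leave no slack for a $2\sigma$ loss. So I would first check whether SampAlgo uses shared or centered offsets and weights, which gives exact translation or mean separation. If it does not, I would enlarge $d_{\min}$ to absorb the $\sigma$-corrections. The concentration constant in the union bound also needs care so that it matches $\sqrt{2p}$.
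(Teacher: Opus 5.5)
Your argument is correct and gives exactly the paper's constants, but the key comparison step differs. The paper never passes to $W_2$. It proves directly, using Jensen on an arbitrary coupling plus the diagonal-coupling bound $\operatorname{OT}_\varepsilon(\mu,\mu)\le\varepsilon\log M$, that $S_\varepsilon(\mu,\nu)\ge\frac12\|m(\mu)-m(\nu)\|^2-\varepsilon\log M$. From this it deduces $\|m(\xi)-\mu_i\|\le d_{\min}/4$ for $\xi\in B_i(r)$, and then applies the triangle inequality to the means in $\mathbb{R}^d$. You instead use $S_\varepsilon\ge W_c-\varepsilon\log M$ together with the triangle inequality for the metric $W_2$. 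This needs $W_2(X_i,X_j)\ge d_{\min}$ as an input. The paper's route is more elementary: it uses nothing beyond Jensen (no metric property of $W_2$) and is tailored to a mean-centred sampler. Yours yields a strictly more general deterministic margin lemma, because $W_2$-separation of the stored patterns is implied by, and weaker than, mean separation. It would still apply to patterns with (nearly) coincident means, as in Experiment 2, where the mean-based lemma is essentially vacuous.

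Your ``main obstacle'' disappears once you use the actual SampAlgo. The centres are $\mu_i=c+\frac{R_0}{\sqrt d}s_i$ with Rademacher $s_i$, so $u_i=s_i/\sqrt d$ is a unit vector. Hoeffding applied to $\sum_k s_{ik}s_{jk}$ gives exactly $e^{-\gamma^2 d/2}$. The factor $\sqrt{2p}$ is chosen precisely so that $\frac{N^2}{2}e^{-\gamma^2 d/2}\le p$, so there is no extra constant to absorb. Step 4 then sets $y_{i,m}=\mu_i+z_{i,m}-\bar z_i$ with $\bar z_i=\sum_m b_{i,m}z_{i,m}$. Hence $m(X_i)=\mu_i$ exactly, which is also why $R_0=R-2\sigma$. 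Your Jensen branch therefore gives $W_2(X_i,X_j)\ge\|\mu_i-\mu_j\|\ge d_{\min}$ with no $\sigma$-loss.

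One small correction: in Step 3, the bound $W_c(\xi,X_i)\le r+\varepsilon\log M$ comes from the \emph{lower} half of your sandwich, not the upper half. The upper half, including the bound $\KLf{\pi}{\mu\otimes\nu}\le\log M$, is never used and can be dropped.
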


The Sinkhorn margin separation property provides the foundation required for reliable retrieval, which ensures that if a query is close to a stored pattern, it is separated enough from other patterns to ensure accurate retrieval. In fact, we can prove retrieval guarantees in much greater generality than afforded by the particular sampling algorithm we propose. We can establish these results under some regularity assumptions regarding the energy functional $E$ and the local Sinkhorn energies $F_i(\cdot) \coloneqq S_{\varepsilon}(\cdot,X_i)$, which are presented in Section \ref{sec: Assumptions}. 

Given the energy functional $E$ along with a query $\xi \in \ProbM$ and a step-size $\eta$, we can define a gradient based evolution of $E$ by equipping the space $\ProbM$ of finitely supported $M$-atom discrete probability measures with a choice of geometry. We choose the spherical Hellinger-Kantorovich (SHK) geometry and the gradient descent based one-step retrieval operator can be defined as
\begin{equation}\label{Gradient descent update}
\Phi_\eta(\xi)=\operatorname{Ret}_{\xi}(-\eta \operatorname{grad}_{\operatorname{SHK}}E(\xi))
\end{equation}
where $\operatorname{Ret}_{\xi}$ is a retraction map that evolves the query $\xi$ along the negative SHK gradient of E, given by $\operatorname{grad}_{\operatorname{SHK}}E(\xi)$, for a small step controlled by $\eta$ and ensures that the resulting object is still a probability distribution (in fact, an element of $\ProbM$). We relegate all details to the Appendix.

\begin{theorem}[Geometric convergence in Sinkhorn divergence to the local minimizer and local basin invariance of gradient descent iterates]\label{Geometric convergence in Sinkhorn divergence to the local minimizer}
Let Assumptions \ref{Ass: L-smoothness of energy functional E in SHK geometry}, \ref{Ass: Existence of minimizer of E in local basin} and \ref{Ass: PL inequality in local basin} hold. Define the stored pattern margins $w_i\coloneqq\min _{1 \leq m \leq M}\left(b_{i, m}-\amin\right)>0, d_i^{\partial}\coloneqq\min _{1 \leq m \leq M} \operatorname{dist}\left(y_{i, m}, \partial \Omega\right)>0$ and $s_i \coloneqq \operatorname{sep}\left(y_i\right)=\min _{m \neq n}\left\|y_{i, m}-y_{i, n}\right\|_2>\Dsep$. Let $\delta_i, \tau_i>0$ such that $0<\delta_i<\bar{\delta}_i\coloneqq\min \left\{d_i^{\partial}, \frac{s_i-\Dsep}{2}\right\}>0$ and $0<\tau_i<w_i$. Let $r>0$ be such that $0<r<\rloc_i\left(\delta_i, \tau_i\right)=\min \left\{\frac{\amin\delta_i^2}{2}-\varepsilon \log M, \frac{\tau_i\left(s_i-\delta_i\right)^2}{4}-\varepsilon \log M\right\}$. Let $E_i^{*}(r)$ be the minimum value of $E(\xi)$ in the local basin $B_i(r)$ and $X_i^{*}(r)$ be a minimizer of $E$ in $B_i(r)$ i.e. $E(X_i^{*}(r)) = E_i^{*}(r) \coloneqq \inf_{\xi \in B_i(r)} E(\xi)$. Finally, define $\etaretarg{i}\coloneqq \min \left\{\frac{\lambda^2}{2 D^2} \log \frac{\min _m b_{i, m}-\tau_i}{\amin}, \frac{1}{2 D} \min \left\{d_i^{\partial}-\delta_i, s_i-2 \delta_i-\Dsep\right\}\right\}$.

Then for any step size $0<\eta < \min \{\frac{1}{L},\frac{1}{\mu},\etaretarg{i}\}$, the following hold true:
\begin{enumerate}
        \item If $\xi^{(k)} \in B_i(r)$ for all $k \in \mathbb{N} \cup \{0\}$, then the sequence of SHK gradient iterates $\left(\xi^{(k)}\right)_{k \geq 0}$ satisfy the explicit geometric bound
        $$
        S_{\varepsilon}\left(\xi^{(k)}, X_i^*(r)\right) \leq \frac{G S_{\eta} \sqrt{2 \eta\left(E\left(\xi^{(0)}\right)-E_i^*(r)\right)}}{1-\left(1-\mu \eta\right)^{\frac{1}{2}}} \cdot (1-\eta \mu)^\frac{k}{2},
        $$
        where
        $S_\eta\leq \min\left\{e^\frac{\eta D^2}{\lambda^2},\frac{1}{\sqrt{\amin}}\right\}$ and $G = D \sqrt{1+\frac{D^2}{\lambda^2}}$ with $D = \sup_{x,y \in \Omega}\|x-y\|$ and $\lambda$ being the relative strength scale of the spherical Hellinger component.
        \item  If $\xi^{(k)} \in B_i(r)$ for all $k \in \mathbb{N} \cup \{0\}$, the sequence of SHK gradient iterates $\left(\xi^{(k)}\right)_{k \geq 0}$ converges weakly to $X_i^{*}(r)$ in $\mathcal{P}(\Omega)$.
        \item If $\xi^{(k)} \in B_i(r)$ for all $k \in \mathbb{N} \cup \{0\}$, then for any $\delta >0$, the error bound $S_{\varepsilon}\left(\xi^{(k)}, X_i^*(r)\right) \leq \delta$ is guaranteed to be achieved once the number of iterations $k$ is greater than or equal to $\frac{\min\left\{\frac{2\eta D^2}{\lambda^2}, - \log \amin\right\} + 2\log G+\log(2\eta \left(E\left(\xi^{(0)}\right)-E_i^{*}(r)\right)) + 2\log (\frac{1}{\delta(1- \sqrt{1- \mu\eta})})}{- \log (1-\mu \eta)}$. A simpler sufficient condition on the no. of iterations to achieve the same error bound is $k \geq \min\left\{\frac{2D^2}{\mu\lambda^2}, - \frac{\log \amin}{\mu\eta}\right\} + \frac{1}{\mu \eta} \log(2\eta \left(E\left(\xi^{(0)}\right)-E_i^{*}(r)\right)) + \frac{2}{\mu \eta}\log (\frac{2G}{\delta \mu \eta})$.
\end{enumerate}

In addition, for $\rho_i(\eta,r,\xi^{(0)})\coloneqq G\frac{\sqrt{2\eta\left(E\left(\xi^{(0)}\right)-E_i^{*}(r)\right)}}{\sqrt{\amin}(1-\sqrt{1-\mu \eta})}$ and $\alpha(r,\xi^{(0)}) \coloneqq r^2 \times\frac{\mu \amin}{2 G^2 \left(E\left(\xi^{(0)}\right)-E_i^{*}(r)\right)} $, if the step-size $\eta$  and the initial iterate $\xi^{(0)}$ satisfies the conditions $\alpha(r,\xi^{(0)}) >1$, $\frac{4 \alpha(r,\xi^{(0)})}{\mu(\alpha(r,\xi^{(0)})+1)^2}  \leq \eta < \min \{1 / L, 1 / \mu,\etaretarg{i}\}$ and $S_{\varepsilon}(\xi^{(0)},X_i) \leq r - \rho_i(\eta,r,\xi^{(0)})$ i.e. $\xi^{(0)} \in B_i(r - \rho_i)$, then the sequence of SHK gradient descent iterates $(\xi^{(k)})_{k \geq 0}$ all belong to the local basin $B_i(r)$. Consequently, the above 3 properties hold true without the apriori basin invariance assumption $\xi^{(k)} \in B_i(r)$ for all $k \in \mathbb{N} \cup \{0\}$.

\end{theorem}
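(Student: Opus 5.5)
The plan follows the standard Polyak–Łojasiewicz template for gradient descent on a Riemannian-type manifold with a retraction. The one genuinely paper-specific ingredient is a comparison between SHK path length and Sinkhorn divergence.

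\textbf{Step 1: one-step descent.} Assumption \ref{Ass: L-smoothness of energy functional E in SHK geometry} (SHK $L$-smoothness of $E$ along the retraction) gives, for $\eta<1/L$,
\[
E(\xi^{(k+1)})\le E(\xi^{(k)})-\tfrac{\eta}{2}\|\operatorname{grad}_{\SHK}E(\xi^{(k)})\|^2 .
\]
The restriction $\eta<\etaretarg{i}$ keeps $\Phi_\eta(\xi^{(k)})$ inside $\ProbM$, and keeps it near the $i$-th pattern. For the weights, the multiplicative reweighting changes $\log a_m$ by at most $\eta D^2/\lambda^2$ times a bounded factor, so weights stay above $\amin$. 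For the locations, each barycentric move has length at most $2\eta D$, so atoms remain at distance greater than $\delta_i$ from the boundary and remain $\Dsep$-separated.

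\textbf{Step 2: linear rate for the energy gap.} Combining Step 1 with the local PL inequality $\|\operatorname{grad}E\|^2\ge 2\mu(E-E_i^*(r))$ on $B_i(r)$ (Assumption \ref{Ass: PL inequality in local basin}) gives
\[
E(\xi^{(k)})-E_i^*(r)\le(1-\mu\eta)^k\bigl(E(\xi^{(0)})-E_i^*(r)\bigr).
\]
Here $\eta<1/\mu$ ensures the contraction factor $1-\mu\eta$ lies in $(0,1)$. Assumption \ref{Ass: Existence of minimizer of E in local basin} guarantees that $E_i^*(r)$ is attained.

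\textbf{Step 3: bounding each step's length.} The descent inequality also gives
\[
\eta\,\|\operatorname{grad}E(\xi^{(k)})\|\le\sqrt{2\eta\bigl(E(\xi^{(k)})-E(\xi^{(k+1)})\bigr)}\le\sqrt{2\eta(E(\xi^{(0)})-E_i^*)}\,(1-\mu\eta)^{k/2}.
\]
Summing this geometric series from $k$ to $\infty$ shows that the iterates form a Cauchy sequence in the SHK metric, with tail length
\[
\frac{\sqrt{2\eta(E_0-E^*)}}{1-\sqrt{1-\mu\eta}}(1-\mu\eta)^{k/2}.
\]

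\textbf{Step 4: converting SHK length into Sinkhorn divergence.} This is the main obstacle. The goal is a Lipschitz-type bound $S_\varepsilon(\xi,\nu)\le G S_\eta\cdot(\text{SHK step length})$ for measures in $\ProbM$.

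I would first bound the first variation of $S_\varepsilon(\cdot,\nu)$. Sinkhorn potentials on $\Omega$ have gradients bounded by $D$ and oscillation bounded by $D^2$. Transport and reaction then contribute $D$ and $D^2/\lambda$ respectively, giving $G=D\sqrt{1+D^2/\lambda^2}$. The factor $S_\eta$ converts tangent-vector norms, which are weighted by $a_m$, into displacements of the actual iterates. It is bounded either through the multiplicative weight update, giving $e^{\eta D^2/\lambda^2}$, or through $a_m\ge\amin$, giving $1/\sqrt{\amin}$.

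With this bound, and since $S_\varepsilon(\cdot,X_i^*)$ is continuous and vanishes at $X_i^*$, I would telescope along the tail to obtain item 1. Item 2 follows because $S_\varepsilon$ metrizes weak convergence. Item 3 is obtained by taking logarithms in item 1, and the simpler sufficient condition uses $-\log(1-\mu\eta)\ge\mu\eta$ and $1-\sqrt{1-\mu\eta}\ge\mu\eta/2$.

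\textbf{Step 5: basin invariance.} I would argue by induction on $k$. The same path-length bound, now measured from $\xi^{(0)}$ and taken with the $1/\sqrt{\amin}$ choice of $S_\eta$, gives $S_\varepsilon(\xi^{(k)},\xi^{(0)})\le\rho_i$. A triangle-type control of $S_\varepsilon(\cdot,X_i)$ along the path then gives
\[
S_\varepsilon(\xi^{(k)},X_i)\le S_\varepsilon(\xi^{(0)},X_i)+\rho_i\le r.
\]
This step needs care, because $S_\varepsilon$ is not a metric. I would instead integrate the first-variation bound for $F_i$ directly along the piecewise path, which gives the same additive $\rho_i$. The conditions $\alpha>1$ and $\eta\ge 4\alpha/(\mu(\alpha+1)^2)$ are used to check that $\rho_i<r$, so the hypothesis $\xi^{(0)}\in B_i(r-\rho_i)$ is non-vacuous. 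They should come from solving the quadratic inequality in $\sqrt{1-\mu\eta}$ that $\rho_i<r$ reduces to.
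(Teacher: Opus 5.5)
Your overall template matches the paper's: the descent inequality from Assumption \ref{Ass: L-smoothness of energy functional E in SHK geometry}, the PL contraction, the retraction-curve length bound that supplies $S_\eta$, the $G$-Lipschitz bound for $F_i$ in $d_{\SHK}$, induction for basin invariance, and the quadratic in $\sqrt{1-\mu\eta}$ for the lower bound on $\eta$.

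The gap is between Steps 3 and 4. Knowing that $(\xi^{(k)})$ is Cauchy in $d_{\SHK}$ does not give you a limit, let alone show that the limit is $X_i^*(r)$. The space $\ProbM$ is not complete under $d_{\SHK}$: weights can drift down to $\amin$, and atoms can approach separation $\Dsep$ or approach $\partial\Omega$. Also, $E(\xi^{(k)})\to E_i^*(r)$ on its own does not pin down a point. Your telescoping ``along the tail'' uses that $S_\varepsilon(\cdot,X_i^*)$ is continuous and vanishes at $X_i^*$, but that presupposes $\xi^{(k)}\to X_i^*(r)$, which is exactly what must be proved. Without it you only bound $S_\varepsilon(\xi^{(k)},\xi^\infty)$ for an unidentified endpoint $\xi^\infty$, if one exists at all. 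Your proposal never uses the hypothesis $r<\rloc_i(\delta_i,\tau_i)$, nor the uniqueness half of Assumption \ref{Ass: Existence of minimizer of E in local basin}. These are precisely what close the gap.

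The missing ingredient is a localization lemma, which has two parts:
\begin{enumerate}
\item \emph{Localization.} Use $\operatorname{OT}_\varepsilon(\xi,\xi)\le\varepsilon\log M$ together with lower bounds on the transport cost. Then $r<\rloc_i(\delta_i,\tau_i)$ forces every $\xi\in B_i(r)$ to admit a labeling with $x_m\in\bar{\mathbb{B}}(y_{i,m},\delta_i)$ and $\|a-b_i\|_1\le\tau_i$. Hence $B_i(r)\subset\Xi(K_i(\delta_i,\tau_i))$, where $K_i$ is a compact convex set inside $\aspace\times\locspace(\Omega)$. On $K_i$, $d_{\SHK}$ is dominated by a multiple of the Euclidean parameter distance.
\item \emph{Identifying the limit.} A parameter subsequence converges in $K_i$, hence in $d_{\SHK}$, so the whole Cauchy sequence converges to some $\xi^\infty$. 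Lipschitz continuity of $F_i$ and $E$ gives $F_i(\xi^\infty)\le r$ and $E(\xi^\infty)=E_i^*(r)$. Uniqueness of the minimizer then gives $\xi^\infty=X_i^*(r)$. Only at this point does $S_\varepsilon(\xi^{(k)},X_i^*(r))=S_\varepsilon(\xi^{(k)},X_i^*(r))-S_\varepsilon(X_i^*(r),X_i^*(r))\le G\,d_{\SHK}(\xi^{(k)},X_i^*(r))$ get bounded by your tail sum.
\end{enumerate}

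Your Step 1 silently relies on the same localization. The condition $\eta<\etaretarg{i}$ makes the retraction well defined only at points with $a_m\ge\min_m b_{i,m}-\tau_i$, $d_\partial(x)\ge d_i^\partial-\delta_i$ and $\operatorname{sep}(x)\ge s_i-2\delta_i$, i.e.\ at points of $\Xi(K_i)$. The step-size restriction by itself does not keep iterates near $X_i$.

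Separately, the location displacement is at most $\eta D$, since $\|\nabla u_\xi\|\le D$. With your bound $2\eta D$, the condition $\eta<\etaretarg{i}$ would not guarantee $\operatorname{sep}>\Dsep$ after the step.
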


Theorem \ref{Geometric convergence in Sinkhorn divergence to the local minimizer} provides the core algorithmic guarantee, showing that the SHK gradient descent dynamics of $E$ converges geometrically (in terms of no. of iterations) to the unique local minimizer within that basin and remain invariant inside it. Together, these results connect statistical separation with dynamical stability, yielding rigorous theoretical guarantees for accurate associative-memory retrieval.

\begin{theorem}[Sinkhorn distance between minimizer in local basin and stored pattern and stability of stored pattern]\label{Sinkhorn distance between minimizer in local basin and stored pattern and stability of stored pattern}
    Let Assumptions \ref{Ass: margin sep in Sinkhorn} and \ref{Ass: Existence of minimizer of E in local basin} hold true. Then, we have that

    \begin{enumerate}
            \item $S_{\varepsilon}(X_i^{*}(r),X_i) \leq \frac{1}{\beta} \log \left(1+(N-1) e^{-\beta \Delta}\right) \leq \frac{N-1}{\beta} e^{-\beta \Delta}.$

        \item for $\eta \leq \etaretarg{i}$ as defined in Theorem \ref{Geometric convergence in Sinkhorn divergence to the local minimizer}, $S_{\varepsilon}(X_i,\Phi_\eta(X_i)) 
   \leq \frac{\min\left\{e^{\eta D^2 / \lambda^2},\frac{1}{\sqrt{\amin}}\right\} \eta G^2(N-1) e^{-\beta \Delta}}{1+(N-1) e^{-\beta \Delta}}$.
    \end{enumerate}
    
\end{theorem}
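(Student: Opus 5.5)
For part 1, I will compare the energy at the basin minimizer with the energy at the stored pattern, exploiting that $E$ is a soft-min of the $F_j = S_\varepsilon(\cdot,X_j)$. The stored pattern lies in its own basin: $S_\varepsilon(X_i,X_i)=0\le r$, so $X_i\in B_i(r)$. By Assumption \ref{Ass: Existence of minimizer of E in local basin}, $E(X_i^*(r))\le E(X_i)$. Since $S_\varepsilon\ge 0$ for the debiased divergence, dropping all terms $j\neq i$ from the log-sum-exp gives the lower bound $E(\xi)\ge$ \ldots{} in the wrong direction, so I will instead use the elementary sandwich
\[
F_i(\xi) - \tfrac{1}{\beta}\log\Bigl(1+\textstyle\sum_{j\ne i} e^{-\beta(F_j(\xi)-F_i(\xi))}\Bigr) = E(\xi) \le F_i(\xi).
\]
Evaluated at $\xi = X_i^*(r)\in B_i(r)$, Assumption \ref{Ass: margin sep in Sinkhorn} gives $F_j - F_i\ge\Delta$ for all $j\ne i$. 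Hence
\[
E(X_i^*(r))\ge F_i(X_i^*(r)) - \tfrac1\beta\log\bigl(1+(N-1)e^{-\beta\Delta}\bigr).
\]
For the upper bound, $E(X_i)\le F_i(X_i)=0$ directly from the log-sum-exp, because the $i$-th summand equals $1$ and the others are positive. Chaining $F_i(X_i^*) - \frac1\beta\log(\cdot)\le E(X_i^*)\le E(X_i)\le 0$ yields the first inequality. The second follows from $\log(1+t)\le t$.

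For part 2, I will write the SHK gradient of $E$ as a Gibbs-weighted combination
\[
\operatorname{grad}E=\sum_j w_j(\xi)\operatorname{grad}F_j, \qquad w_j = \frac{e^{-\beta F_j}}{\sum_k e^{-\beta F_k}}.
\]
At $\xi = X_i$ the divergence $F_i$ attains its minimum value $0$, so $\operatorname{grad}F_i(X_i)=0$. This uses the first-variation formula for the debiased Sinkhorn divergence, whose potential differences $f_{\xi,X_i}-f_{\xi,\xi}$ are constant on the support when $\xi=X_i$. Therefore
\[
\operatorname{grad}E(X_i)=\sum_{j\ne i}w_j\operatorname{grad}F_j(X_i).
\]
Using $F_j(X_i)\ge\Delta$ (margin separation applied at $\xi=X_i$, with $F_i(X_i)=0$), the total weight satisfies
\[
\sum_{j\ne i}w_j \le \frac{(N-1)e^{-\beta\Delta}}{1+(N-1)e^{-\beta\Delta}},
\]
since $t\mapsto t/(1+t)$ is increasing. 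Each $\|\operatorname{grad}F_j\|_{\SHK}$ is bounded by $G=D\sqrt{1+D^2/\lambda^2}$: the transport part of the gradient, $\nabla f$, has norm at most $D$ because Sinkhorn potential gradients are $x$ minus a barycentric projection in $\Omega$, and the reaction part is bounded by $D^2/\lambda$ scaling from potential oscillation $\le D^2$.

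Finally, I will use the retraction estimate already employed in Theorem \ref{Geometric convergence in Sinkhorn divergence to the local minimizer}. For $\eta\le\etaretarg{i}$, the iterate stays in $\ProbM$, and a one-step displacement $v$ satisfies
\[
S_\varepsilon(\xi,\operatorname{Ret}_\xi(-\eta v))\le S_\eta\,\eta\,G\|v\|,
\]
with the factor $S_\eta \le \min\{e^{\eta D^2/\lambda^2},1/\sqrt{\amin}\}$ controlling the weight distortion from multiplicative reweighting. Plugging in $\|v\|\le G\sum_{j\ne i}w_j$ gives the claimed bound.

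The main obstacle is this last transfer from the SHK tangent norm to the Sinkhorn divergence of a single retraction step. I would extract it as a standalone lemma from the proof of Theorem \ref{Geometric convergence in Sinkhorn divergence to the local minimizer}. The intended route is to bound $S_\varepsilon$ by a Lipschitz estimate in the particle parameters (locations move by at most $\eta D\|v\|$, and weight ratios are within $e^{\pm\eta D^2/\lambda^2}$), combined with $S_\varepsilon\le \operatorname{OT}_\varepsilon$-type comparisons.
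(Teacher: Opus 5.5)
Your part 1 and the reduction in part 2 follow the paper's proof essentially line by line. In part 1 you use $X_i\in B_i(r)$, the softmin sandwich at $X_i^*(r)$ under margin separation, and the chain $E(X_i^*(r))\le E(X_i)\le F_i(X_i)=0$. In part 2 you use $\operatorname{grad}_{\operatorname{SHK}}E(X_i)=\sum_{j\neq i}w_j(X_i)\operatorname{grad}_{\operatorname{SHK}}F_j(X_i)$, the Gibbs-weight bound, and the per-pattern bound $G$.

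A small precision on $\operatorname{grad}_{\operatorname{SHK}}F_i(X_i)=0$. The transport component is $\nabla u(y_{i,m})$, so constancy of the centered first variation on the finite support alone would not suffice. It holds anyway because at $\xi=X_i$ the first variation is constant on all of $\Omega$. Alternatively, as the paper argues, $X_i$ is a global minimizer of $F_i\ge 0$ on the open manifold.

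The step you flag as the main obstacle is where your plan would not work as written.

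\begin{itemize}
\item Taken literally, an $S_\varepsilon\le\operatorname{OT}_\varepsilon$ comparison cannot give a bound of order $\eta$. As $\eta\to0$, $\operatorname{OT}_\varepsilon(X_i,\Phi_\eta(X_i))\to\operatorname{OT}_\varepsilon(X_i,X_i)$, which is strictly positive for $M\ge 2$ (entropic bias).
\item A Lipschitz estimate in Euclidean particle coordinates would at best give parameterization-dependent constants. It would not yield the stated factor $G\min\{e^{\eta D^2/\lambda^2},1/\sqrt{\amin}\}$ times the SHK norm of the step.
\end{itemize}

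The paper closes this step using only ingredients you already have.

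\begin{enumerate}
\item Integrate $\|\operatorname{grad}_{\operatorname{SHK}}S_\varepsilon(\cdot,X_i)\|_{\operatorname{SHK}}\le G$ along a curve in $\ProbM$, using the chain rule and Cauchy--Schwarz. This gives $S_\varepsilon(\Phi_\eta(X_i),X_i)-S_\varepsilon(X_i,X_i)\le G\,\operatorname{Length}(\gamma)$ for the retraction curve $\gamma(t)=\operatorname{Ret}_{X_i}(-t\eta\operatorname{grad}_{\operatorname{SHK}}E(X_i))$, $t\in[0,1]$.
\item Along $\gamma$ the positions move linearly. The weights are $a_m(t)=a_me^{ts_m}/\sum_\ell a_\ell e^{ts_\ell}$ with $s=\delta a/a$.
\item Hence $a_me^{-2S}\le a_m(t)\le a_me^{2S}$, where $S=\|s\|_\infty\le\eta D^2/\lambda^2$ because $\sup|u_{X_i}|\le D^2$. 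This bounds the speed by $e^{S}\eta\|\operatorname{grad}_{\operatorname{SHK}}E(X_i)\|_{\operatorname{SHK},X_i}$.
\item The alternative factor $1/\sqrt{\amin}$ comes from $\sum_m a_m(t)\|\delta x_m\|^2\le\max_m\|\delta x_m\|^2\le\amin^{-1}\sum_m a_m\|\delta x_m\|^2$. The weighted variance $\sum_m a_m(t)(s_m-\bar s(t))^2$ is bounded analogously.
\end{enumerate}

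Replacing your sketched route with this argument (gradient bound integrated along the retraction curve, plus the length estimate) completes the proof with exactly the stated constant.
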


While Theorem \ref{Geometric convergence in Sinkhorn divergence to the local minimizer} establishes geometric convergence of the SHK gradient descent iterates to the unique local minimizer within a Sinkhorn basin, it does not yet quantify how well this minimizer approximates the original stored pattern. Theorem \ref{Sinkhorn distance between minimizer in local basin and stored pattern and stability of stored pattern} closes this gap by showing that the basin minimizer remains exponentially close (in terms of the inverse temperature $\beta$ and margin separation $\Delta$) to the corresponding stored pattern in Sinkhorn divergence, and that each stored pattern is an approximate fixed point of the retrieval operator. Thus, beyond dynamical convergence, Theorem \ref{Sinkhorn distance between minimizer in local basin and stored pattern and stability of stored pattern} provides a fidelity guarantee: the attractor reached by the algorithm is not merely stable, but provably close to the intended memory, ensuring accurate and stable associative recall.

\section{Conclusion}
We developed a dense associative memory for empirical measures based on a Sinkhorn log-sum-exp energy and spherical Hellinger Kantorovich gradient dynamics, yielding deterministic transport-reaction retrieval with provable local convergence and separation guarantees. The full retrieval algorithm (pseudo-code and implementation
details) along with the numerical experiments is provided in the Appendix (see Sections \ref{Discrete retrieval algorithm for empirical measures} and \ref{Numerical Experiments}). Future directions include faster retrieval implementations, adaptive regularization and applying our algorithm on real point clouds.

\section*{References}
\bibliographystyle{plainnat}
\bibliography{ref.bib}
\nocite{*}

%%%%%%%%%%%%%%%%%%%%%%%%%%%%%%%%%%%%%%%%%%%%%%%%%%%%%%%%%%%%
\newpage

\appendix

\section*{Appendix : }

\section{First variation of the Sinkhorn divergence}

Let $F_\nu(\mu)\coloneqq S_\varepsilon(\mu,\nu)$ with $\nu$ fixed.
A first variation $\delta F_\nu/\delta\mu$ is defined (up to an additive constant) by
\[
\left.\frac{d}{dt}F_\nu(\mu+t\chi)\right|_{t=0}
=\int_\Omega \left(\frac{\delta F_\nu}{\delta\mu}(\mu)(x)\right)\,d\chi(x),
\qquad \int d\chi=0.
\]
It is standard to see that (derivation available in \cite{hardion2025gradient}) 
\begin{equation}
\label{eq:first-variation-sinkhorn}
\frac{\delta S_\varepsilon(\mu,\nu)}{\delta\mu}
= f_{\mu,\nu}-\tfrac12\bigl(f_{\mu,\mu}+g_{\mu,\mu}\bigr)
\qquad \text{in }C(\Omega)/\mathbb{R}.
\end{equation}
Intuitively, $\frac{\delta \operatorname{OT}_\varepsilon(\mu,\nu)}{\delta \mu} = f_{\mu,\nu}$ and $\frac{\delta \operatorname{OT}_\varepsilon(\mu,\nu)}{\delta \nu} = g_{\mu,\nu}$. Since $\mu$ appears in \emph{both} marginals of the self term $\operatorname{OT}_\varepsilon(\mu,\mu)$, the first variation of $\tfrac12\operatorname{OT}_\varepsilon(\mu,\mu)$ w.r.t.\ $\mu$ is the average potential
\[
f^{\mathrm{sym}}_{\mu,\mu}\coloneqq\tfrac12\bigl(f_{\mu,\mu}+g_{\mu,\mu}\bigr),
\]
which is invariant under the Sinkhorn gauge transformation $(f,g)\mapsto(f+c,g-c)$. When one chooses a symmetric gauge for the self problem (i.e. \ $f_{\mu,\mu}=g_{\mu,\mu}$), which is possible for symmetric costs),~\eqref{eq:first-variation-sinkhorn} reduces to the commonly stated formula $f_{\mu,\nu}-f_{\mu,\mu}$ in $C(\Omega)/\mathbb{R}$.

\section{First variation of the log-sum-exp energy}

Let $Z(\xi)=\sum_{i=1}^N \exp(-\beta S_\varepsilon(\xi,X_i))$ and define Gibbs weights
\begin{equation}\label{softmax weights of energy E}
w_i(\xi)\coloneqq\frac{\exp(-\beta S_\varepsilon(\xi,X_i))}{\sum_{j=1}^N \exp(-\beta S_\varepsilon(\xi,X_j))}.
\end{equation}
Differentiating~\eqref{eq:energy} and using~\eqref{eq:first-variation-sinkhorn} gives
\begin{equation}
\label{eq:first-variation-energy}
\frac{\delta E}{\delta\xi}(\xi)
=\sum_{i=1}^N w_i(\xi)\Bigl(f_{\xi,X_i}-\tfrac12\bigl(f_{\xi,\xi}+g_{\xi,\xi}\bigr)\Bigr)
=\left(\sum_{i=1}^N w_i(\xi)f_{\xi,X_i}\right)-\tfrac12\bigl(f_{\xi,\xi}+g_{\xi,\xi}\bigr),
\qquad \text{in }C(\Omega)/\mathbb{R}.
\end{equation}

We will denote the centered first variation of the energy functional $E(\cdot)$ as \[u_{\xi} \coloneqq \frac{\delta E}{\delta\xi}(\xi) - \Big\langle \frac{\delta E}{\delta\xi}(\xi), \xi \Big\rangle = \frac{\delta E}{\delta\xi}(\xi) - \int \frac{\delta E}{\delta\xi}(\xi) d \xi\] where $\langle \cdot , \cdot \rangle$ is the duality pairing.
\section{From entropic potentials to deterministic barycentric maps}\label{App: entropic potentials}

For quadratic cost $c(x,y)=\tfrac12\|x-y\|^2$, the gradient of a Schr\"{o}dinger potential has the barycentric form
\begin{equation}
\label{eq:grad-potential-bary}
\nabla f_{\mu,\nu}(x)=x-T^\varepsilon_{\mu\to\nu}(x),
\qquad
T^\varepsilon_{\mu\to\nu}(x)\coloneqq\int y\,\pi^\varepsilon_{\mu,\nu}(dy\mid x),
\end{equation}
where $\pi^\varepsilon_{\mu,\nu}(dy\mid x)$ is the conditional distribution under the optimal entropic coupling. The full derivation is provided in Section \ref{sec: Computing gradient of Schrödinger potentials explicitly for quadratic costs}. Thus, using~\eqref{eq:first-variation-energy}, the transport velocity becomes
\begin{equation}
\label{eq:velocity-bary}
v(x)=\sum_{i=1}^N w_i(\xi)\,T^\varepsilon_{\xi\to X_i}(x)-T^\varepsilon_{\xi\to\xi}(x).
\end{equation}
Even though~\eqref{eq:first-variation-energy} involves the symmetric self potential $\tfrac12(f_{\xi,\xi}+g_{\xi,\xi})$, the \emph{transport} velocity depends only on gradients. For symmetric costs (such as $\tfrac12\|x-y\|^2$) and a self-coupling, the optimal entropic plan is symmetric and one has $\nabla f_{\xi,\xi}=\nabla g_{\xi,\xi}$, so
\[
\nabla\tfrac12\bigl(f_{\xi,\xi}+g_{\xi,\xi}\bigr)=\nabla f_{\xi,\xi},
\]
and the self-correction in~\eqref{eq:velocity-bary} remains the usual barycentric map $T^\varepsilon_{\xi\to\xi}$.
This is a deterministic vector field on the query support computed from barycentric projections of Sinkhorn plans.

\section{Spherical Hellinger-Kantorovich gradient flow (Continuous-time)} \label{App:SHK flow}

\subsection{Transport$+$reaction based continuity equation}

We consider a transport-reaction dynamics of the form
\begin{equation}
\label{eq:transport-reaction}
\partial_t \xi_t + \nabla\cdot(\xi_t v_t) = \xi_t\,r_t,
\end{equation}
where $v_t$ is a velocity field (transport) and $r_t$ is a scalar reaction rate (mass reweighting).
A SHK gradient flow of $E$ sets
\begin{equation}
\label{eq:SHK-v-r}
v_t(x)=-\nabla\!\left(\frac{\delta E}{\delta\xi}(\xi_t)(x)\right) = -\nabla u_{\xi_t}(x),
\quad
r_t(x)=-\frac{1}{\lambda^2}\left(\frac{\delta E}{\delta\xi}(\xi_t)(x)-\left\langle \frac{\delta E}{\delta\xi}(\xi_t),\xi_t\right\rangle\right)=-\frac{1}{\lambda^2}u_{\xi_t}(x),
\end{equation}
with a scale parameter $\lambda>0$ controlling the relative strength of the spherical Hellinger component.
The subtraction of the mean ensures $\frac{d}{dt}\int d\xi_t=0$, so probability mass is preserved.

\subsection{Riemmanian structure induced by Spherical Hellinger-Kantorovich geometry on the space of probability measures}

Let $\xi \in \mathcal{P}(\Omega)$ where $\mathcal{P}(\Omega)$ is the set of all probability measures defined on $\Omega$ equipped with the Spherical Hellinger-Kantorovich geometry. A (sufficiently regular) tangent vector at $\xi$ can be represented by a pair $(r, v)$ consisting of a vector field $v: \Omega \rightarrow \mathbb{R}^d$ (transport velocity) and a scalar field $r: \Omega \rightarrow \mathbb{R}$ (reaction rate),
subject to the mass constraint $\int_{\Omega} r(x) d \xi(x)=0$

Given such $(v, r)$, the induced infinitesimal change of measure is the distribution $\dot{\xi}$ defined by the weak form
\begin{equation}\label{eq: weak form of SHK evolution PDE}
\frac{d}{d t} \int_{\Omega} \varphi d \xi_t=\int_{\Omega} \nabla \varphi(x) \cdot v(x) d \xi_t(x)+\int_{\Omega} \varphi(x) r(x) d \xi_t(x)
\end{equation}
which corresponds to the PDE
\begin{equation}\label{eq: SHK evolution PDE}
\partial_t \xi_t+\nabla \cdot\left(\xi_t v\right)=\xi_t r .
\end{equation}

Fix $\lambda>0$. Define the inner product on the tangent space at $\xi$ by
$$
\left\langle(r, v),\left(r^{\prime}, v^{\prime}\right)\right\rangle_{\operatorname{SHK},\xi}\coloneqq\int_{\Omega}\left(v \cdot v^{\prime}+\lambda^2 r r^{\prime}\right) d \xi
$$
for pairs satisfying $\int r d \xi=\int r^{\prime} d \xi=0$ and this induces the metric tensor at $\xi$ to be
$$g_{\xi}^{\SHK}\left((r, v),\left(r^{\prime}, v^{\prime}\right)\right) = \left\langle(r, v),\left(r^{\prime}, v^{\prime}\right)\right\rangle_{\operatorname{SHK},\xi}.
$$
The induced norm on the tangent space at $\xi$ is
$$
\|(r, v)\|_{\operatorname{SHK},\xi}^2=\int_{\Omega}\left(\|v\|_2^2+\lambda^2 r^2\right) d \xi.
$$

\subsection{The manifold of finitely supported discrete measures equipped with SHK geometry}

For any $M \in \mathbb{N}$, $\frac{1}{M} > \amin > 0$ and $\Dsep > 0$, define \\$\locspace(\Omega)\coloneqq\left\{x = (x_1,\dots,x_M) \in \Omega^M: \min _{m \neq n} \|x_m- x_n\| > \Dsep\right\} \in \aspace \times \locspace(\Omega)$ as the space of $M$ pairwise separated locations/atoms corresponding to the space of finitely supported discrete probability distributions with exactly $M$ atoms and let\\
$
\aspace\coloneqq\left\{a \in \mathbb{R}^M: a_m > \amin, \sum_{m=1}^M a_m=1\right\} 
$ be the space of bounded probability weights associated with the $M$ atoms. Further, define the associated parameter space (ordered particles) to be
$$
\mathcal{M}_M\coloneqq\aspace \times \locspace(\Omega) .
$$
For any $(a, x) \in \mathcal{M}_M$, we associate to it the discrete measure $\sum_{m=1}^M a_m \delta_{x_m}$ through the parametrization mapping $\Xi : \mathcal{M}_M \to \mathcal{P}(\Omega)\subset \left(C^{1}(\Omega)\right)^{*}$, defined as
$$
\Xi(a, x)\coloneqq\sum_{m=1}^M a_m \delta_{x_m} \in \mathcal{P}(\Omega) .
$$
Here $C^{k}(\Omega)$ is the class of $k$-times continuously differentiable functions on the domain $\Omega$, and $\left(C^{k}(\Omega)\right)^{*}$ represents the space of all continuous linear functionals on $C^{k}(\Omega)$ i.e. the dual space of $C^{k}(\Omega)$. Similarly, $C_c^{\infty}\left(\Omega\right)$ represents the the class of compactly supported infinitely differentiable functions on the domain $\Omega$, with $C_c^{\infty}\left(\Omega\right)$, with its corresponding continuous dual being $\left(C_c^{\infty}\left(\Omega\right)\right)^{*}$. Since all of our theory is focused on the discrete measures $\Xi(a, x)$ and functionals defined using such measures, and all such objects are invariant under permutations of the labels associated with the weight-location pairs, we will consider the quotient space $\widetilde{\mathcal{M}}_M = \mathcal{M}_M / \mathfrak{S}_M$ where $\mathfrak{S}_M$ is the symmetric group that acts freely on $\mathcal{M}_M$ by relabeling:
$$
\sigma \cdot\left(a_1, \ldots, a_M, x_1, \ldots, x_M\right)=\left(a_{\sigma^{-1}(1)}, \ldots, a_{\sigma^{-1}(M)}, x_{\sigma^{-1}(1)}, \ldots, x_{\sigma^{-1}(M)}\right) .
$$
Since the action is free and $\mathfrak{S}_M$ is finite, the quotient $\widetilde{\mathcal{M}}_M$ is a smooth finite-dimensional manifold, and it identifies canonically with the set of probability measures on $\Omega$ having exactly $M$ pairwise separated support points and positive weights.

For computations and developing the theory without introducing extra notational overhead, it is simplest to work on the ordered cover $\mathcal{M}_M$. The mathematical objects of interest are permutation invariant, so the theory developed on $\mathcal{M}_M$ descends to the quotient space $\widetilde{\mathcal{M}}_M$. Hence, from here on we identify $\widetilde{\mathcal{M}}_{M}$ with $\mathcal{M}_M$ itself, which is equivalent to treating $(a,x) \in \mathcal{M}_M$ as unordered tuples. We also identify $(a, x)$ with $\xi=\Xi(a, x)$ when convenient.

A tangent vector at $(a, x)$ is a pair $(\delta a, \delta x) \in \mathbb{R}^M \times\left(\mathbb{R}^d\right)^M$ with the simplex constraint $\sum_{m=1}^M \delta a_m=0 .$ Thus
$$
T_{(a, x)} \mathcal{M}_M=\left\{(\delta a, \delta x): \sum_m \delta a_m=0\right\}
$$

For a given $(a,x) \in \mathcal{M}_M$ and $(\delta a, \delta x) \in T_{(a, x)} \mathcal{M}_M $, we now proceed to compute the differential of $\Xi$ at $(a, x)$ in the direction $(\delta a, \delta x)$. Let $(a(t), x(t))$ be any $C^1$ curve in $\mathcal{M}_M$ such that
$$
(a(0), x(0))=(a, x) \,\ \textrm{ and }\,\ \frac{d}{dt}(a(t),x(t)) \Big \rvert_{t=0}=(\dot{a}(0), \dot{x}(0))=(\delta a, \delta x) .
$$
Let us define the induced curve of measures
$$
\xi_t\coloneqq\Xi(a(t), x(t))=\sum_{m=1}^M a_m(t) \delta _{x_{m(t)}}.
$$
For every test function $\varphi \in C^1(\Omega)$, we have that
$$
\int_{\Omega} \varphi d \xi_t=\sum_{m=1}^M a_m(t) \varphi\left(x_m(t)\right) .
$$
Differentiating at $t=0$, we obtain
$$
\begin{aligned}
\left.\frac{d}{d t}\right|_{t=0} \int_{\Omega} \varphi d \xi_t=&\sum_{m=1}^M \dot{a}_m(0) \varphi\left(x_m\right)+\sum_{m=1}^M a_m \nabla \varphi\left(x_m\right) \cdot \dot{x}_m(0)\\=&\sum_{m=1}^M \delta a_m \varphi\left(x_m\right)+\sum_{m=1}^M a_m \nabla \varphi\left(x_m\right) \cdot \delta x_m.
\end{aligned}
$$
Hence
$$
d \Xi_{(a, x)}(\delta a, \delta x)
$$
is the distribution characterized by
\begin{equation}\label{eq: differential of measure mapping}
\left\langle d \Xi_{(a, x)}(\delta a, \delta x), \varphi\right\rangle=\sum_{m=1}^M \delta a_m \varphi\left(x_m\right)+\sum_{m=1}^M a_m \nabla \varphi\left(x_m\right) \cdot \delta x_m .
\end{equation}
Equivalently,
$$
d \Xi_{(a, x)}(\delta a, \delta x)=\sum_{m=1}^M \delta a_m \delta_{x_m}-\nabla \cdot\left(\sum_{m=1}^M a_m \delta x_m \delta_{x_m}\right)
$$
in the sense of distributions.

Comparing Equation \ref{eq: differential of measure mapping} with the weak form of the SHK evolution PDE in Equation \ref{eq: weak form of SHK evolution PDE}, we can uniquely represent the tangent vector (distribution) $d \Xi_{(a, x)}(\delta a, \delta x)$ using any pair $(r,v)$ such that $r(x_m) = \frac{\delta a_m}{a_m}$ and $v(x_m) = \delta x_m$. The mass constraint is automatically satisfied since $\int_{\Omega} r d \xi=\sum_{m=1}^M a_m \frac{\delta a_m}{a_m}=\sum_{m=1}^M \delta a_m=0$. The uniqueness on the support points $x_1,\dots,x_M$ can be established using Lemma \ref{Linear independence of Dirac distribution and its distributional derivative}.

Now, we define the pullback metric tensor on $\mathcal{M}_M$ at $(a,x)$ induced by $g_{\Xi(a,x)}^{\SHK}$ as
$$
g_{(a,x)}\coloneqq \Xi^* g_{\Xi(a,x)}^{\SHK},
$$
which means that, for $\zeta=(\delta a, \delta x)$ and $\eta=\left(\delta a^{\prime}, \delta x^{\prime}\right)$ belonging to $ T_{(a, x)} \mathcal{M}_M$,
$$
g_{(a, x)}(\zeta, \eta)=g_{\xi}^{\SHK}\left(d \Xi_{(a, x)} \zeta, d \Xi_{(a, x)} \eta\right), \quad \xi=\Xi(a, x) .
$$
Using the identification above, we have that
$$
r\left(x_m\right)=\frac{\delta a_m}{a_m}, \quad v\left(x_m\right)=\delta x_m, \quad r^{\prime}\left(x_m\right)=\frac{\delta a_m^{\prime}}{a_m}, \quad v^{\prime}\left(x_m\right)=\delta x_m^{\prime} .
$$
Therefore, 
$$
\begin{aligned}
g_{(a, x)}\left((\delta a, \delta x),\left(\delta a^{\prime}, \delta x^{\prime}\right)\right) & =\sum_{m=1}^M a_m\left(\delta x_m \cdot \delta x_m^{\prime}+\lambda^2 \frac{\delta a_m}{a_m} \frac{\delta a_m^{\prime}}{a_m}\right) \\
& =\sum_{m=1}^M\left(a_m \delta x_m \cdot \delta x_m^{\prime}+\frac{\lambda^2}{a_m} \delta a_m \delta a_m^{\prime}\right) .
\end{aligned}
$$
Consequently, the SHK-induced Riemannian structure on $\mathcal{M}_M$ can be described as follows. Define for $(\delta a, \delta x),\left(\delta a^{\prime}, \delta x^{\prime}\right) \in T_{(a, x)} \mathcal{M}_M$ :
$$
\left\langle(\delta a, \delta x),\left(\delta a^{\prime}, \delta x^{\prime}\right)\right\rangle_{(a, x)}\coloneqq g_{(a, x)}\left((\delta a, \delta x),\left(\delta a^{\prime}, \delta x^{\prime}\right)\right) =\sum_{m=1}^M\left(\frac{\lambda^2}{a_m} \delta a_m \delta a_m^{\prime}+a_m \delta x_m \cdot \delta x_m^{\prime}\right) .
$$
The corresponding norm is
$$
\|(\delta a, \delta x)\|_{(a, x)}^2=\sum_{m=1}^M\left(\frac{\lambda^2}{a_m}\left(\delta a_m\right)^2+a_m\left\|\delta x_m\right\|_2^2\right) .
$$

\subsection{Defining an appropriate retraction map for $\mathcal{M}_M$}\label{sec: defining retraction map}

A local retraction is the standard way to define a \say{first-order accurate exponential map} used in Riemannian gradient descent. Let $T \mathcal{M}_M$ denote the tangent bundle corresponding to the manifold $\mathcal{M}_M$, given by $T \mathcal{M}_M \coloneqq \sqcup _{(a,x) \in \mathcal{M}_M} T_{(a,x)}\mathcal{M}_M = \left\{((a,x),(\delta a, \delta x)) : (a,x) \in \mathcal{M}_M, (\delta a, \delta x) \in T_{(a,x)}\mathcal{M}_M\right\}$, where $T_{(a,x)}\mathcal{M}_M$ is the tangent space at the parameter $(a,x) \in \mathcal{M}_M$. Following the general definition (and consistent with retractions are used in particle evolutions for gradient descent, for e.g. see \cite{chizat2022sparse}) we use the following definition of a retraction map:

A smooth map Ret : $T \mathcal{M}_M \rightarrow \mathcal{M}_M$ is a retraction if for each base point $z=(a, x) \in \mathcal{M}_M$, its restriction $\operatorname{Ret}_z$ :
$T_z \mathcal{M}_M \rightarrow \mathcal{M}_M$ satisfies
(i) $\operatorname{Ret}_z(0)=z$,
(ii) $\operatorname{DRet}_z(0)=\operatorname{Id}$ on $T_z \mathcal{M}_M$. It need not be well-defined everywhere but there exists an open set $U_{(a,x)} \subset T_{(a,x)} \mathcal{M}_M$ containing the origin on which it is well-defined.

For our purpose, we will build Ret as a product of a retraction on $\aspace$ and one on $\locspace(\Omega)$.

For the retraction map on $\locspace(\Omega)$, define:
$$
\operatorname{Ret}_x^{\mathrm{pos}}(\delta x)\coloneqq x+\delta x, \quad \text { i.e. } \quad \operatorname{Ret}_{\left(x_1, \ldots, x_M\right)}^{\mathrm{pos}}\left(\delta x_1, \ldots, \delta x_M\right)=\left(x_1+\delta x_1, \ldots, x_M+\delta x_M\right) .
$$
and $\operatorname{Ret}_x^{\mathrm{pos}}(\cdot)$ is well-defined on $U_{x} = \left\{ \delta x \in T_{x} \Loc_{M}(\Omega) :\|\delta x\|_{\infty}<\frac{1}{2} \min \left(d_{\partial}(x), \operatorname{sep}(x)-\Dsep\right)\right\} \subset T_{x} \Loc_{M}$. Clearly $\operatorname{Ret}_x^{\mathrm{pos}}(0)=x$ and $\operatorname{DRet}_x^{\mathrm{pos}}(0)=\mathrm{Id}$. Therefore, this is a retraction on $\locspace(\Omega)$.

Let $a \in \aspace$. Its tangent space is
$$
T_a \aspace=\left\{\delta a \in \mathbb{R}^M: \sum_m \delta a_m=0\right\}.
$$
Define, for $\delta a \in T_a \aspace$, the retraction map
\begin{equation}
\operatorname{Ret}_a^{\mathrm{w}}(\delta a)\coloneqq\frac{a \odot \exp (\delta a / a)}{\sum_{j=1}^M a_j \exp \left(\delta a_j / a_j\right)} 
\end{equation}
where $(\delta a / a)_m\coloneqq\delta a_m / a_m$,
$\exp$ acts coordinatewise and $\odot$ is coordinatewise product. This is well-defined for all $\delta a \in U_a$ where $U_a=\left\{\delta a \in T_a \aspace:\left\|\frac{\delta a}{a}\right\|_{\infty}<\frac{1}{2} \min _m \log \frac{a_m}{\amin}\right\} .$. Further, it can be shown that $\operatorname{Ret}_a^{\mathrm{w}}(\delta a)$ is indeed a retraction map for $\aspace$.

Finally, the product retraction on $\mathcal{M}_M$ is defined as
$$\label{Retraction map}
\operatorname{Ret}_{(a, x)}(\delta a, \delta x)\equiv \operatorname{Ret}_{\Xi(a, x)}(\delta a, \delta x)\coloneqq\left(\operatorname{Ret}_a^{\mathrm{w}}(\delta a), \operatorname{Ret}_x^{\mathrm{pos}}(\delta x)\right) 
$$
which is a retraction on $U_{(a,x)} = \left\{(\delta a, \delta x) \in T_{(a,x)} \mathcal{M}_M : \delta a \in U_a \textrm{ and } \delta x \in U_x\right\} \subset T_{(a,x)} \mathcal{M}_M$.

\subsection{Riemannian gradient of the energy functional $E$ on $\mathcal{M}_M$}

Let $\xi=\Xi(a, x)$. For discrete $\xi$, write $u_m\coloneqq u_{\xi}\left(x_m\right)$.

\paragraph{Differential of $E$ in particle coordinates :} Consider a tangent perturbation $(\delta a, \delta x)$. Assume that the functional $E$ defined on $\mathcal{P}(\Omega)$ admits a $C^1(\Omega)$ first variation $\frac{\delta E}{\delta \xi}(\xi)$ for distributions of interest $\xi \in \mathcal{P}(\Omega)$, whose centered version is $u_{\xi} \coloneqq \frac{\delta E}{\delta \xi}(\xi) - \langle \frac{\delta E}{\delta \xi}(\xi),\xi\rangle = \frac{\delta E}{\delta \xi}(\xi) -\int_{\Omega} \frac{\delta E}{\delta \xi}(\xi) d\xi $. Now, for a given $(a,x) \in \mathcal{M}_M$ and $(\delta a, \delta x) \in T_{(a, x)} \mathcal{M}_M $, $(a(t), x(t))$ be any $C^1$ curve in $\mathcal{M}_M$ such that
$$
(a(0), x(0))=(a, x) \,\ \textrm{ and }\,\ \frac{d}{dt}(a(t),x(t)) \Big \rvert_{t=0}=(\dot{a}(0), \dot{x}(0))=(\delta a, \delta x) .
$$ and let the induced curve of measures be
$$
\xi_t\coloneqq\Xi(a(t), x(t))=\sum_{m=1}^M a_m(t) \delta _{x_{m(t)}}.
$$
Along the curve of measures $\xi_t$, by the chain rule, we have that the differential of $E \circ \Xi$ is given by 
\begin{equation}\label{eq: differential of E composed with measure mapping}
\begin{aligned}
    d \left(E \circ \Xi\right)_{(a,x)}\left[\delta a, \delta x\right] =& \frac{d}{d t}\left(E \circ \Xi\right)\left(a(t),x(t)\right)\Big\lvert_{t=0} \\
    =& \frac{d}{d t} E\left(\xi_t\right)\Big\lvert_{t=0}\\
    =&\left\langle\frac{\delta E}{\delta \xi}(\xi_0), \dot{\xi}_0\right\rangle \\
    =& \left\langle\frac{\delta E}{\delta \xi}(\xi_0), d \Xi(a,x)(\delta a, \delta x)\right\rangle\\
    =& \left\langle \frac{\delta E}{\delta \xi}(\xi_0),\sum_{m=1}^M \delta a_m \delta_{x_m}-\nabla \cdot\left(\sum_{m=1}^M a_m \delta x_m \delta_{x_m}\right)\right\rangle\\
    =& \sum_{m=1}^M \frac{\delta E}{\delta \xi}(\xi_0)\left(x_m\right) \delta a_m+\sum_{m=1}^M a_m \nabla \frac{\delta E}{\delta \xi}(\xi_0)\left(x_m\right) \cdot \delta x_m ..
\end{aligned}
\end{equation}

We note that, since $\sum_{m=1}^{M} \delta_m =0$, one can replace $\frac{\delta E}{\delta \xi}(\xi_0)$ by $\frac{\delta E}{\delta \xi}(\xi_0) + c$ in Equation \ref{eq: differential of E composed with measure mapping} for any constant $c$ without changing the result, which is often termed as gauge-invariance. In particular,
\begin{equation}\label{eq: differential of E composed with measure mapping using centered first variation}
    d \left(E \circ \Xi\right)_{(a,x)}\left[\delta a, \delta x\right] = \sum_{m=1}^M u_{\xi}\left(x_m\right) \delta a_m+\sum_{m=1}^M a_m \nabla u_{\xi}\left(x_m\right) \cdot \delta x_m .
\end{equation}

We will often abuse notation by using the shorthand $\xi \equiv \Xi[a,x]$ and treating $E$ as a functional directly over $\mathcal{M}_M$, in which case we will represent the differential of $E$ as
$$
d E(\xi)[\delta a, \delta x]=\sum_{m=1}^M u_{\xi}\left(x_m\right) \delta a_m+\sum_{m=1}^M a_m \nabla u_{\xi}\left(x_m\right) \cdot \delta x_m
$$

\paragraph{Riemannian gradient of E :} The Riemannian gradient $\operatorname{grad}_{\operatorname{SHK}} E(a, x) \in T_{(a, x)} \mathcal{M}_M$ is the unique tangent vector $\left(\beta, \zeta\right)$ such that for all $(\delta a, \delta x)$,

$$
g_{(a,x)} \left(\left(\beta, \zeta\right),(\delta a, \delta x)\right)=\left\langle\left(\beta, \zeta\right),(\delta a, \delta x)\right\rangle_{(a, x)}=d E(\xi)[\delta a, \delta x].
$$
With $\xi \equiv \Xi (a,x)$, define $t_{\xi,m} = \frac{\delta E}{\delta \xi}(\xi)(x_m) + c$ for any constant $c \in \R$. Then, we have that
$$
\sum_m\left(\frac{\lambda^2}{a_m} \beta_{m} \delta a_m+a_m \zeta_{m} \cdot \delta x_m\right)=\sum_m\left(t_{\xi,m} \delta a_m+a_m \nabla t_{\xi,m} \cdot \delta x_m\right) .
$$

Matching the $\delta x_m$ terms gives

$$
\zeta_{m}=\nabla t_{\xi,m}=\nabla u_{\xi}\left(x_m\right) .
$$

For weights, note $\delta a$ is constrained by $\sum_m \delta a_m=0$. The identity

$$
\sum_m \left(\frac{\lambda^2}{a_m} \beta_{m} - t_{\xi,m} \right) \delta a_m=0 \quad \forall \delta a: \sum_m \delta a_m=0
$$
holds if and only if the coefficients
$$
k_m\coloneqq\frac{\lambda^2}{a_m} \beta_m-t_{\xi,m}
$$
are all equal to the same constant $k$, which can be verified by taking $\delta a=e_i-e_j$ for all $i \neq j$. Hence
$$
\frac{\lambda^2}{a_m} \beta_m=t_{\xi,m}+k .
$$
Since, we must have $\sum_{m=1}^{M} \beta_m = 0$, we must have that
$$
0=\sum_{m=1}^M \beta_m=\frac{1}{\lambda^2} \sum_{m=1}^M a_m\left(t_{\xi,m}+k\right)=\frac{1}{\lambda^2}(\bar{t}+k),
$$
where
$$
\bar{t}\coloneqq\sum_{m=1}^M a_m t_{\xi,m} .
$$
Therefore $k=-\bar{t}$, so
$$
\beta_m=\frac{a_m}{\lambda^2}\left(t_{\xi,m}-\bar{t}\right) =\frac{a_m}{\lambda^2}u_m =\frac{a_m}{\lambda^2} u_{\xi}(x_m).
$$
So the Riemannian gradient of $E$ is
$$
\operatorname{grad}_{\operatorname{SHK}} E(a, x)=\left(\left(\frac{a_m}{\lambda^2}u_m\right)_{m=1}^M,\left(\nabla u_m\right)_{m=1}^M \right).
$$
\subsection{SHK gradient descent updates in terms of retraction maps}

Fix a step size $\eta>0$. The retraction-based SHK gradient descent update is as follows. Given $\xi^{(k)}=\Xi\left(a^{(k)}, x^{(k)}\right)$, define
$$
\left(a^{(k+1)}, x^{(k+1)}\right)\coloneqq\operatorname{Ret}_{\left(a^{(k)}, x^{(k)}\right)}\left(-\eta \operatorname{grad}_{\operatorname{SHK}} E\left(a^{(k)}, x^{(k)}\right)\right)=\operatorname{Ret}_{\left(a^{(k)}, x^{(k)}\right)}\left(-\eta \operatorname{grad}_{\operatorname{SHK}} E\left(\xi^{(k)}\right)\right),
$$
and set
$$
\xi^{(k+1)}\coloneqq\Xi\left(a^{(k+1)}, x^{(k+1)}\right)
$$
We can express the SHK gradient descent update using the operator
$\Phi_\eta\left(\xi^{(k)}\right)=\xi^{(k+1)}$.

For a measurable map $T: \Omega \rightarrow \Omega$, the pushforward $T_{\#} \xi$ is defined by
$$
\left(T_{\#} \xi\right)(A)=\xi\left(T^{-1}(A)\right), \quad \text { equivalently } \quad \int \varphi d\left(T_{\#} \xi\right)=\int \varphi \circ T d \xi.
$$
For a measurable function $\rho: \Omega \rightarrow(0, \infty)$, the reweighted measure $\rho \xi$ is defined by
$$
(\rho \xi)(A)\coloneqq\int_A \rho(x) d \xi(x).
$$
For a given $\xi$, define the pushforward map
$$
T_\eta^{\xi}(x)\coloneqq x-\eta \nabla u_{\xi}(x) .
$$
and the reweighting map
$$\rho_\eta^{\xi}(x)\coloneqq\frac{\exp \left(-\frac{\eta}{\lambda^2} u_{\xi}(x)\right)}{\int_{\Omega} \exp \left(-\frac{\eta}{\lambda^2} u_{\xi}(z)\right) d \xi(z)}.$$
Then the SHK gradient descent update is
$$
\Phi_\eta(\xi)=\left(T_\eta^{\xi}\right)_{\#}\left(\rho_\eta^{\xi} \xi\right) .
$$

\subsection{Particle (empirical-measure) dynamics}

Let $\xi_t=\sum_{m=1}^M a_m(t)\delta_{x_m(t)}$.
Plugging this ansatz into~\eqref{eq:transport-reaction} yields the Lagrangian system
\begin{equation}
\label{eq:particle-ode}
\dot x_m(t)=v_t(x_m(t)),
\qquad
\dot a_m(t)=a_m(t)\,r_t(x_m(t)),
\qquad m=1,\dots,M.
\end{equation}
Define the particle-wise first-variation values
\begin{equation}
\label{eq:z-def}
z_m \;\coloneqq\;\left(\frac{\delta E}{\delta\xi}(\xi)\right)(x_m)
=\sum_{i=1}^N w_i(\xi)\Bigl(f_{\xi,X_i}(x_m)-\tfrac12\bigl(f_{\xi,\xi}(x_m)+g_{\xi,\xi}(x_m)\bigr)\Bigr),
\end{equation}
and their $\xi$-average $\bar z\coloneqq\sum_{m=1}^M a_m z_m$.
Then~\eqref{eq:SHK-v-r} gives the weight dynamics
\begin{equation}
\label{eq:replicator-ode}
\dot a_m(t)=-\frac{1}{\lambda^2}\,a_m(t)\,\bigl(z_m-\bar z\bigr),
\qquad \sum_m a_m(t)=1.
\end{equation}
Equation~\eqref{eq:replicator-ode} is the (mass-preserving) \emph{replicator} equation and can be viewed as a spherical Hellinger / natural-gradient flow on the simplex.

\section{Assumptions for analysis of the Sinkhorn and SHK gradient flow based retrieval}\label{sec: Assumptions}

In this section, we list down some useful assumptions are useful to prove certain properties of the Spherical Hellinger-Kantorovich gradient descent. Define $F_i(\xi) = S_{\varepsilon}(\xi,X_i)$ for $=1,\dots,N$.

\begin{asu}[Margin separation]
    \label{Ass: margin sep in Sinkhorn} The stored patterns $X_1,\dots,X_N$ and $\varepsilon$ are such that there exists a radius $r>0$ and a corresponding $\Delta > 0$ such that for any\\ $\xi \in B_i(r)\coloneqq \left\{\mu \in \ProbM: S_{\varepsilon}(\mu,X_i) \leq r \right\}$, we have that, for all $j \neq i$,
    \begin{equation}\label{eq: margin sep in Sinkhorn}
    F_j(\xi) - F_{i}(\xi) \geq \Delta.
    \end{equation}
\end{asu}

\begin{asu}[Local Retraction L-smoothness of energy functional E in SHK geometry]\label{Ass: L-smoothness of energy functional E in SHK geometry} 
For the same choice of $r$ as in Assumption \ref{Ass: margin sep in Sinkhorn}, there exists $L>0$ such that for every $\xi \in B_i(r)$, every tangent vector $(w,v) \in T_{\xi}(w,v)$ and every $\eta>0$ small enough so that $\operatorname{Ret}_{\xi}(\eta(w, v))$ is well-defined, we have

\begin{equation}\label{eq: L-smoothness of energy functional E in SHK geometry}
E\left(\operatorname{Ret}_{\xi}(\eta(r, v))\right) \leq E(\xi)+\eta\langle\operatorname{grad}_{\operatorname{SHK}} E(\xi),(r, v)\rangle_{\operatorname{SHK},\xi}+\frac{L \eta^2}{2}\|(r, v)\|_{\operatorname{SHK},\xi}^2
\end{equation}

where $\langle\cdot, \cdot\rangle_{\operatorname{SHK},\xi}$ and $\|\cdot\|_{\operatorname{SHK},\xi}$ is the SHK inner product and norm respectively .
\end{asu}

Let $E_i^{*}(r) \coloneqq \inf_{\xi \in \ B_i(r)} E(\xi)$. We will often denote $E_i^{*}(r)$ using $E_i^{*}$ for convenience. 

\begin{asu}[Existence and uniqueness of minimizer of E in local basin]\label{Ass: Existence of minimizer of E in local basin}
For the same choice of $r$ as in Assumption \ref{Ass: margin sep in Sinkhorn}, there exists a minimizer $X_i^{*}(r) \in B_i(r)$  of E i.e. $E(X_i^{*}(r)) = E_i^{*}(r) \coloneqq \inf_{\xi \in B_i(r)} E(\xi)$. Further the minimizer $X_i^{*}(r)$ is unique.
\end{asu}

\begin{asu}[PL inequality in local basin] \label{Ass: PL inequality in local basin} For the same choice of $r$ as in Assumption \ref{Ass: margin sep in Sinkhorn} and conditional on Assumption \ref{Ass: Existence of minimizer of E in local basin} being true, there exists $\mu>0$ such that for every $\xi \in B_i(r)$, 

\begin{equation}\label{eq: PL inequality in local basin}
\frac{1}{2}\|\operatorname{grad}_{\operatorname{SHK}} E(\xi)\|_{\operatorname{SHK},\xi}^2 \geq \mu\left(E(\xi)-E_i^{*}(r)\right)
\end{equation}
\end{asu}

\section{Sampling Algorithm to ensure high probability separation of measures in $\ProbM$}\label{SampAlgo}

We construct a sampling model that (i) produces fully general $M$-atom measures (random weights, random supports) that belong to $\ProbM$, and (ii) ensures pattern separation with high probability for number of patterns $N$ exponentially large in $d$. For simplicity and concreteness, we assume that the domain $\Omega$ is such that there exists some point $c \in \Omega$ and some $R>0$ such that the Euclidean ball
$$
\bar{\mathbb{B}}(c, R)\coloneqq\{x \in \Omega :\|x-c\| \leq R\} \subset \Omega .
$$

\paragraph{Sampling algorithm (SampAlgo) :}\label{Sampling Algorithm} Consider the domain $\Omega$ such that $\mathbb{B}(c, R)\coloneqq\{x:\|x-c\| \leq R\} \subseteq \Omega  \subset \mathbb{R}^d$. Consider the shape radius $\sigma \in (0 , \frac{R}{4})$, margin parameter $\gamma \in (0,1)$ and inradius parameter $R_0 = R - 2\sigma$. Fix any $\varepsilon>0$ such that $\varepsilon \log M<\frac{1-\gamma}{16} R_0^2$. For each pattern $i=1,\dots,N$, 
\begin{enumerate}
    \item\textbf{Generate random mean using Rademacher random variables :} Sample $s_i \in\{ \pm 1\}^d$ with i.i.d. coordinates, $\mathbb{P}\left(s_{i, k}=+1\right)=\mathbb{P}\left(s_{i, k}=-1\right)=1 / 2$. Define $\mu_i \coloneqq c+\frac{R_0}{\sqrt{d}} s_i$.
    \item\textbf{Generate random weights :} Sample $b_i=\left(b_{i, 1}, \ldots, b_{i, M}\right)$ in i.i.d manner using any probability distribution supported on $\aspace$ for some fixed $\amin>0$. 
    \item\textbf{Sample random point cloud around means :} Sample $z_i = (z_{i, 1}, \ldots, z_{i, M}) \in (\R^d)^M$ from any probability distribution supported in\\ $\mathcal{Z}_{\sigma, \Dsep}\coloneqq\left\{\left(z_1, \ldots, z_M\right) \in \mathbb{B}(0, \sigma)^M: \min _{n \neq m}\left\|z_n-z_m\right\|>\Dsep\right\}$. 
    \item\textbf{Set means using mean correction and define support points:} Compute the weighted shape mean
    $\bar{z}_i\coloneqq\sum_{m=1}^M b_{i, m} z_{i, m}$ and define support points $y_{i, m}\coloneqq\mu_i+\left(z_{i, m}-\bar{z}_i\right), \quad m=1, \ldots, M$.
    \item\textbf{Define the discrete measure :} Set the pattern as the discrete $X_i\coloneqq\sum_{m=1}^M b_{i, m} \delta_{y_{i, m}} \in \ProbM$
\end{enumerate}

\section{Proof of main theoretical results}\label{sec: Proof of theoretical results}

\subsection{Proof of Theorem \ref{Exponential capacity and separation in Sinkhorn}}

\begin{proof}    
    We first verify that the probability distributions $X_1,\dots,X_N$ generated by the sampling algorithm SampAlgo (see Sec \ref{SampAlgo}) belong to $\ProbM$.  First, by construction of Step 2 in SampAlgo (see Sec \ref{SampAlgo}), we have that, for $i=1,\dots,N$, $b_i \in \aspace$. Next, we prove strict pairwise separation of the support points $y_{i, m}$. For $m \neq n$,
    $$
    y_{i, n}-y_{i, m}=\left(\mu_i+z_{i, n}-\bar{z}_i\right)-\left(\mu_i+z_{i, m}-\bar{z}_i\right)=z_{i, n}-z_{i, m}.
    $$
    and hence $\left\|y_{i, n}-y_{i, m}\right\|=\left\|z_{i, n}-z_{i, m}\right\|$  for $i=1,\dots,N$. Since $z_i \in \mathcal{Z}_{\sigma, \Dsep}$, we have that $\min _{m \neq n}\left\|z_{i, n}-z_{i, m}\right\|>\Dsep$. Therefore, we have that
    $$
    \min _{m \neq n}\left\|y_{i, n}-y_{i, m}\right\|>\Dsep .
    $$
    Finally, we prove that every support point $y_{i, m}$ lies in $\Omega$. Since each $z_{i, m} \in \bar{\mathbb{B}}(0, \sigma)$, we have that $\left\|z_{i, m}\right\| \leq \sigma$. Further, since $\bar{z}_i=\sum_{m=1}^{M} b_{i, m} z_{i, m}$ is a convex combination of the $z_{i, m}$, we have that
    $$
    \left\|\bar{z}_i\right\| \leq \sum_{m=1}^M b_{i, m}\left\|z_{i, m}\right\| \leq \sum_{m=1}^M b_{i, m} \sigma=\sigma .
    $$
    Further,
    $$
    \left\|\mu_i-c\right\|=\left\|\frac{R_0}{\sqrt{d}} s_i\right\|=\frac{R_0}{\sqrt{d}}\left\|s_i\right\|=\frac{R_0}{\sqrt{d}} \sqrt{d}=R_0
    $$
    Hence, for each $m$, we have that
    $$
    \left\|y_{i, m}-c\right\| \leq\left\|\mu_i-c\right\|+\left\|z_{i, m}\right\|+\left\|\bar{z}_i\right\| \leq R_0+\sigma+\sigma=R .
    $$
    Therefore, we have that 
    $$
    y_{i, m} \in \bar{\mathbb{B}}(c, R) \subset \Omega.
    $$
    Therefore, for $i=1,\dots,N$, the generated pattern $X_i$ indeed belongs to $\ProbM$. Now, we compute its mean:
    $$
    \begin{aligned}
    m\left(X_i\right)=\sum_{m=1}^M b_{i, m} y_{i, m}=&\sum_{m=1}^M b_{i, m}\left(\mu_i+z_{i, m}-\bar{z}_i\right) \\
    =& \mu_i+\sum_{m=1}^M b_{i, m} z_{i, m}-\sum_{m=1}^M b_{i, m} \bar{z}_i\\
    =&\mu_i+\bar{z}_i-\bar{z}_i=\mu_i .
    \end{aligned}
    $$
    Now, define the event
    $$
    A\coloneqq\left\{\forall 1 \leq i<j \leq N,\left\|\mu_i-\mu_j\right\| \geq d_{\min }=\sqrt{2(1-\gamma)} R_0\right\} .
    $$
    choosing $N\coloneqq\left\lfloor\sqrt{2 p} \exp \left(\frac{\gamma^2}{4} d\right)\right\rfloor$, we have that $\binom{N}{2} \exp\left(-\frac{\gamma^2 d}{2} \right) =\frac{N(N-1)}{2} \exp\left(-\frac{\gamma^2 d}{2} \right) \leq p$. Therefore, by Lemma \ref{uniform separation for all pairs of means of stored patterns}, we have that
    $$
    \mathbb{P}(A) \geq 1-p .
    $$ 
    Now , under the event A, the means $\mu_i$ are pairwise $d_{\min }$-separated, Lemma \ref{mean separation of patterns implies margin separation of Sinkhorn divergences} applies. Thus, for every $i$, every $\xi \in \ProbM$ with $S_{\varepsilon}\left(\xi, X_i\right) \leq r$, and every $j \neq i$,
    $$
    S_{\varepsilon}\left(\xi, X_j\right)-S_{\varepsilon}\left(\xi, X_i\right) \geq \Delta = \frac{d_{\min}^{2}}{4}.
    $$
    This proves the margin-separation statement. It remains to prove pairwise disjointness of the basins. Suppose for contradiction that for some $i \neq j$ there exists
    $$
    \xi \in B_i(r) \cap B_j(r) .
    $$
    Since $\xi \in B_i(r)$, the margin-separation statement with indices $(i, j)$ gives
    $$
    S_{\varepsilon}\left(\xi, X_j\right)-S_{\varepsilon}\left(\xi, X_i\right) \geq \Delta .
    $$
    Since $\xi \in B_j(r)$, the same statement with indices $(j, i)$ gives
    $$
    S_{\varepsilon}\left(\xi, X_i\right)-S_{\varepsilon}\left(\xi, X_j\right) \geq \Delta .
    $$
    Adding these two inequalities yields
    $$
    0 \geq 2 \Delta,
    $$
    which is impossible since $\Delta=\frac{d_{\min}^{2}}{4}>0$.
    Hence
    $$
    B_i(r) \cap B_j(r)=\emptyset \quad \text { for all } i \neq j .
    $$
    This completes the proof.

\end{proof}

\subsection{ Proof of Theorem \ref{Geometric convergence in Sinkhorn divergence to the local minimizer}}

\begin{proof}
    Assume first that $\xi^{(k)} \in B_i(r)$ for all  $k \geq 0$. By Lemma \ref{local basin compactness inside parameter space}, after relabeling the atoms of each iterate, there is a unique ordered representative
    $$
    z^{(k)}=\left(a^{(k)}, x^{(k)}\right) \in K_i\left(\delta_i, \tau_i\right)
    $$
    such that
    $$
    \xi^{(k)}=\Xi\left(z^{(k)}\right).
    $$
    Since $\eta<\etaretarg{i}$, Lemma \ref{uniform retraction domain and local metric upper bound} implies that the local retraction is well-defined at each iterate. 
    
    Let $\xi^{(k+1)}=\Phi_\eta\left(\xi^{(k)}\right)$. For each $k$, define the one-step retraction curve
    $$
    \gamma_k(t)\coloneqq\operatorname{Ret}_{\xi^{(k)}}\left(-t \eta \operatorname{grad}_{\operatorname{SHK}} E\left(\xi^{(k)}\right)\right), \quad t \in[0,1] .
    $$
    
    Then $\gamma_k(0)=\xi^{(k)}$ and $\gamma_k(1)=\xi^{(k+1)}$. Applying Lemma \ref{Bounding SHK distance in terms of SHK gradient norm along retraction curve} with $(\delta a, \delta x)=-\eta \operatorname{grad}_{\operatorname{SHK}} E\left(\xi^{(k)}\right)$ in particle coordinates yields

    $$
    \operatorname{Length}\left(\gamma_k\right) \leq e^{\|\delta a / a\|_{\infty}} \eta\left\|\operatorname{grad}_{\operatorname{SHK}} E\left(\xi^{(k)}\right)\right\|_{\operatorname{SHK},\xi^{(k)}} .
    $$
    Now we proceed to bound $\|\delta a / a\|_{\infty}$. For the SHK gradient descent direction, the weight update follows
    $$
    \delta a_m=-\eta \frac{a_m}{\lambda^2} u_{\xi^{(k)}}\left(x_m\right) \quad \Longrightarrow \quad \frac{\delta a_m}{a_m}=-\frac{\eta}{\lambda^2} u_{\xi^{(k)}}\left(x_m\right) 
    $$
    where \[u_{\xi} \coloneqq \frac{\delta E}{\delta\xi}(\xi) - \Big\langle \frac{\delta E}{\delta\xi}(\xi), \xi \Big\rangle = \frac{\delta E}{\delta\xi}(\xi) - \int \frac{\delta E}{\delta\xi}(\xi) d \xi.\]
    Thus
    $$
    \left\|\frac{\delta a}{a}\right\|_{\infty} \leq \frac{\eta}{\lambda^2} \sup _{x \in \Omega}\left|u_{\xi^{(k)}}(x)\right| .
    $$
    Now, $\frac{\delta E}{\delta\xi}(\xi) = \sum_{j=1}^{N}w_j(\xi) \frac{\delta S_{\varepsilon}(\xi,X_j)}{\delta\xi}$ and hence $u_{\xi}(x) = \sum_{j=1}^{N}w_j(\xi) \left(\phi_i(x)- \int \phi_i(y) d\xi(y)\right)$ where 
    $$
    \phi_i(x) \coloneqq \left(\frac{\delta S_{\varepsilon}(\xi,X_i)}{\delta\xi}\right)(x)
    $$
    From Lemma \ref{uniform retraction domain and local metric upper bound} Part (ii), we have that $\sup_{x \in \Omega} |u_{\xi}(x)| \leq D^2$. Consequently, we have that
    $$
    \left\|\frac{\delta a}{a}\right\|_{\infty} \leq \frac{\eta D^2}{\lambda^2}.
    $$
    Therefore, we have that
    $$
    \operatorname{Length}\left(\gamma_k\right) \leq e^{\eta D^2 / \lambda^2} \eta\left\|\operatorname{grad}_{\operatorname{SHK}} E\left(\xi^{(k)}\right)\right\|_{\operatorname{SHK},\xi^{(k)}} .
    $$
    Now, using Lemmas \ref{Bounding SHK distance in terms of SHK gradient norm along retraction curve} and \ref{Energy gap contraction}, we have that
    $$\begin{aligned}
    d_{\operatorname{SHK}}\left(\xi^{(k+1)}, \xi^{(k)}\right) \leq & \operatorname{Length}\left(\gamma_k\right) \leq \min\left\{e^{\eta D^2 / \lambda^2},\frac{1}{\sqrt{\amin}}\right\} \eta\left\|\operatorname{grad}_{\operatorname{SHK}} E\left(\xi^{(k)}\right)\right\|_{\operatorname{SHK},\xi^{(k)}}\\
    \leq & \min\left\{e^{\eta D^2 / \lambda^2},\frac{1}{\sqrt{\amin}}\right\} \eta \cdot \sqrt{\frac{2 \left(E\left(\xi^{(0)}\right)-E_i^{*}(r)\right)}{\eta}} (1-\eta \mu)^\frac{k}{2} 
    = C q^\frac{k}{2}
    \end{aligned}
    $$
    where $C \coloneqq  \min\left\{e^{\eta D^2 / \lambda^2},\frac{1}{\sqrt{\amin}}\right\} \cdot \sqrt{2\eta \left(E\left(\xi^{(0)}\right)-E_i^{*}(r)\right)}$ and $q\coloneqq1-\mu \eta \in(0,1)$.

    Now, for integers $m>k$,
    $$
d_{\operatorname{SHK}}\left(\xi^{(m)}, \xi^{(k)}\right) \leq \sum_{t=k}^{m-1} d_{\operatorname{SHK}}\left(\xi^{(t+1)}, \xi^{(t)}\right) \leq \sum_{t=k}^{m-1} C q^{t / 2} \leq C \sum_{t=k}^{\infty} q^{t / 2}=C \frac{q^{k / 2}}{1-\sqrt{q}} .
    $$
    As $k \rightarrow \infty$, the RHS converges to zero. Therefore, $\left(\xi^{(k)}\right)$ is Cauchy in $d_{\SHK}$.

    The set $K_i\left(\delta_i, \tau_i\right)$ is compact by Lemma \ref{local basin compactness inside parameter space}. Therefore the sequence $z^{(k)}$ has a Euclidean-convergent subsequence:
    $$
    z^{\left(k_j\right)} \rightarrow z^{\infty}=\left(a^{\infty}, x^{\infty}\right) \in K_i\left(\delta_i, \tau_i\right) .
    $$
    Define
    $$
    \xi^{\infty}\coloneqq\Xi\left(z^{\infty}\right) \in P_M(\Omega)
    $$
    Because $K_i\left(\delta_i, \tau_i\right) \subset \aspace \times \locspace(\Omega)$, the limit belongs to the parameter space $\aspace \times \locspace(\Omega)$.

    Using Lemma \ref{local basin compactness inside parameter space}, we have that
    $$
    d_{\SHK}\left(\xi^{\left(k_j\right)}, \xi^{\infty}\right) \leq L_i\left\|z^{\left(k_j\right)}-z^{\infty}\right\|_E \rightarrow 0 .
    $$
    Hence, the subsequence converges to $\xi^{\infty}$ in $d_{\SHK}$.

    Let $\delta>0$. Since $\left(\xi^{(k)}\right)$ is $d_{\SHK}$-Cauchy, there exists $N$ such that
    $d_{\SHK}\left(\xi^{(m)}, \xi^{(n)}\right)<\delta / 2 \quad \forall m, n \geq N$.
    Choose $j$ so large that $k_j \geq N$ and $d_{\SHK}\left(\xi^{\left(k_j\right)}, \xi^{\infty}\right)<\delta / 2.
    $
    Then, for every $k \geq N$,
    $$
    d_{\SHK}\left(\xi^{(k)}, \xi^{\infty}\right) \leq d_{\SHK}\left(\xi^{(k)}, \xi^{\left(k_j\right)}\right)+d_{\SHK}\left(\xi^{\left(k_j\right)}, \xi^{\infty}\right)<\delta .
    $$
    Therefore, $\xi^{(k)} $ converges to $\xi^{\infty}$ in $d_{\SHK}$.

    Using Lemma \ref{Lipschitz continuity of E with respect to SHK metric}, each $F_i = S_{\varepsilon}(\cdot,X_i)$ is $G$-Lipschitz in $d_{\SHK}$, and hence,
    $$
        \left|F_i\left(\xi^{(k)}\right)-F_i\left(\xi^{\infty}\right)\right| \leq G d_{\SHK}\left(\xi^{(k)}, \xi^{\infty}\right) \rightarrow 0.
    $$
    Since $F_i\left(\xi^{(k)}\right) \leq r$ for all $k$, $F_i\left(\xi^{\infty}\right) \leq r$. Hence, we have that
    $\xi^{\infty} \in B_i(r)$.

    Again, using Lemma  \ref{Lipschitz continuity of E with respect to SHK metric}, we have that
    $$
        \left|E\left(\xi^{(k)}\right)-E\left(\xi^{\infty}\right)\right| \leq G d_{\SHK}\left(\xi^{(k)}, \xi^{\infty}\right) \rightarrow 0.
    $$

    On the other hand, Lemma \ref{Energy gap contraction} gives $E\left(\xi^{(k)}\right) \rightarrow E_i^*(r)$. Therefore, we have that $E\left(\xi^{\infty}\right)=E_i^*(r)$.
    Therefore, $\xi^{\infty}$ is a minimizer of $E$ on $B_i(r)$. Under Assumption \ref{Ass: Existence of minimizer of E in local basin}, by uniqueness, we have that
    $$
    \xi^{\infty}=X_i^*(r)
    $$
    This proves the convergence of the SHK gradient descent iterates  in $d_{\SHK}$ to $X_i*(r)$.

    Now, since $d_{\operatorname{SHK}}\left(\xi^{(m)}, \xi^{(k)}\right) \leq  C \frac{q^{k / 2}}{1-\sqrt{q}} $ and $d_{\operatorname{SHK}}$ is continuous, we have that
    $$
        d_{\operatorname{SHK}}\left(X_i^{*}(r), \xi^{(k)}\right) = d_{\operatorname{SHK}}\left(\xi^{(\infty)}, \xi^{(k)}\right) = \lim_{m \to \infty} d_{\operatorname{SHK}}\left(\xi^{(m)}, \xi^{(k)}\right)  \leq  C \frac{q^{k / 2}}{1-\sqrt{q}}. 
    $$

    Finally, using Lemma \ref{Open bounded convex domain implies boundedness of SHK gradient of Sinkhorn divergence} and the fact that $S_{\varepsilon}\left(X_i^*(r), X_i^*(r)\right)=0$, we have that
    $$
    S_{\varepsilon}\left(\xi^{(k)}, X_i^*(r)\right) \leq G d_{\operatorname{SHK}}\left(X_i^{*}(r), \xi^{(k)}\right) \leq  GC \frac{q^{k / 2}}{1-\sqrt{q}}.
    $$

    Consequently, $\lim_{k \to \infty} S_{\varepsilon}(\xi^{(k)}, X_i^{*}(r)) = 0$ and since the Sinkhorn divergence metrizes weak convergence (Theorem 1 of \cite{feydy2019interpolating}), we have that $\xi^{(k)}$ converges weakly to $X_i^{*}(r)$.

    Now, $S_{\varepsilon}\left(\xi^{(k)}, X_i^*(r)\right) \leq  G C \frac{q^{k / 2}}{1-\sqrt{q}} \leq \delta$ if 
    $$
    \begin{aligned}
    k \geq & \frac{2\log \left(\frac{ \min\left\{e^{\eta D^2 / \lambda^2},\frac{1}{\sqrt{\amin}}\right\} \cdot G\sqrt{2\eta \left(E\left(\xi^{(0)}\right)-E_i^{*}(r)\right)}}{\delta(1- \sqrt{1- \mu\eta})}\right)}{- \log (1-\mu \eta)}\\
    =& \frac{\min\left\{\frac{2\eta D^2}{\lambda^2}, - \log \amin\right\} + 2\log G+\log(2\eta \left(E\left(\xi^{(0)}\right)-E_i^{*}(r)\right)) + 2\log (\frac{1}{\delta(1- \sqrt{1- \mu\eta})})}{- \log (1-\mu \eta)}.
    \end{aligned}
    $$ Using $1 + \log x \leq x$ for $0 < x< 1$, 
    we have that $\frac{1}{-\log (1-\mu \eta)} \leq \frac{1}{\mu \eta}$. Further, we have that $1-\sqrt{1-\mu \eta}=\frac{\mu \eta}{1+\sqrt{1-\mu \eta}} \geq \frac{\mu \eta}{2} \implies \frac{1}{1-\sqrt{1-\mu \eta}} \leq \frac{2}{\mu \eta}$. Consequently,
    we have the sufficient condition $k \geq \min\left\{\frac{2D^2}{\mu\lambda^2}, - \frac{\log \amin}{\mu\eta}\right\} + \frac{1}{\mu \eta} \log(2\eta \left(E\left(\xi^{(0)}\right)-E_i^{*}(r)\right)) + \frac{2}{\mu \eta}\log (\frac{2G}{\delta \mu \eta})$.

    We will now establish that if $$
F_i\left(\xi^{(0)}\right) \leq r-\rho_i(\eta,r,\xi^{(0)}),
$$

then $\xi^{(k)} \in B_i(r)$ for every $k \in \mathbb{N}$. We will prove by strong induction that $F_i\left(\xi^{(k)}\right) \leq r$ for all $k$.

Note that, by assumption $F_i\left(\xi^{(0)}\right) \leq r-\rho_i(\eta,r,\xi^{(0)})<r$.

Now, assume that $F_i\left(\xi^{(t)}\right) \leq r$ for all $t=0,1, \ldots, k$.
Then $\xi^{(t)} \in B_i(r)$ for those $t$, so Assumptions \ref{Ass: L-smoothness of energy functional E in SHK geometry} and \ref{Ass: PL inequality in local basin} apply on each of these iterates, and consequently Lemma \ref{Energy gap contraction} applies up to time $k$, yielding

$$
E\left(\xi^{(t)}\right)-E_i^{\star}(r) \leq \left(1-\mu \eta\right)^t \left(E\left(\xi^{(0)}\right)-E_i^{*}(r)\right), \quad t=0, \ldots, k
$$

We now bound the increment of $F_i$ from $\xi^{(t)}$ to $\xi^{(t+1)}$. To do so, we construct an explicit smooth curve in $\mathcal{M}_M$ connecting them and bound its SHK length.

Along any absolutely continuous curve $\zeta(s)$ connecting $\zeta(0)=\xi^{(t)}$ to $\zeta(1) = \xi^{(t+1)}$ restricted to $\ProbM$,

$$
\frac{d}{d s} F_i(\zeta(s))=\left\langle\operatorname{grad}_{\operatorname{SHK}} F_i(\zeta(s)), \frac{d}{ds}\zeta(s)\right\rangle_{\operatorname{SHK},\zeta(s)} \leq\left\|\operatorname{grad}_{\operatorname{SHK}} F_i(\zeta(s))\right\|_{\operatorname{SHK},\zeta(s)} \cdot\left\|\frac{d}{ds}\zeta(s)\right\|_{\operatorname{SHK},\zeta(s)} .
$$

Integrating between 0 and 1 and using the global bound $\left\|\operatorname{grad}_{\operatorname{SHK}} F_i(\xi)\right\|_{\operatorname{SHK},\xi} \leq G$ from Lemma \ref{Open bounded convex domain implies boundedness of SHK gradient of Sinkhorn divergence}, we obtain
$$
F_i\left(\xi^{(t+1)}\right)-F_i\left(\xi^{(t)}\right) \leq G \cdot \operatorname{Length}(\zeta) .
$$
Using Lemmas \ref{Energy gap contraction} and \ref{Bounding SHK distance in terms of SHK gradient norm along retraction curve}, we have that
$$
\begin{aligned}
F_i\left(\xi^{(t+1)}\right)-F_i\left(\xi^{(t)}\right) \leq & G \cdot \frac{\eta}{\sqrt{\amin}} \cdot \sqrt{\frac{2 \left(E\left(\xi^{(0)}\right)-E_i^{*}(r)\right)}{\eta}} (1-\mu \eta)^\frac{t}{2}\\
=&G \cdot \sqrt{\frac{2 \eta}{\amin}} \sqrt{\left(E\left(\xi^{(0)}\right)-E_i^{*}(r)\right)} (1-\mu \eta)^\frac{t}{2} .
\end{aligned}
$$
Now summing from $t=0$ to $t=k$ :
$$
F_i\left(\xi^{(k+1)}\right) \leq F_i\left(\xi^{(0)}\right)+G \sqrt{\frac{2 \eta}{\amin}} \sqrt{\left(E\left(\xi^{(0)}\right)-E_i^{*}(r)\right)} \sum_{t=0}^k (1-\mu \eta)^\frac{t}{2} .
$$

Since $\sum_{t=0}^k (1-\mu \eta)^\frac{t}{2} \leq \sum_{t=0}^{\infty}(1-\mu \eta)^\frac{t}{2}=\frac{1}{1-\sqrt{1-\mu \eta}}$,

$$
F_i\left(\xi^{(k+1)}\right) \leq F_i\left(\xi^{(0)}\right)+G\frac{\sqrt{2\eta\left(E\left(\xi^{(0)}\right)-E_i^{*}(r)\right)}}{\sqrt{\amin}(1-\sqrt{1-\mu \eta})}=F_i\left(\xi^{(0)}\right)+\rho_i(\eta,r,\xi^{(0)}) .
$$

By the assumed initialization condition $F_i\left(\xi^{(0)}\right) \leq r-\rho_i(\eta,r,\xi^{(0)})$, we get $F_i\left(\xi^{(k+1)}\right) \leq r$. This completes the induction.
Thus all iterates remain in $B_i(r)$.

The lower bound on $\eta$ can be derived based on the natural constraint that $\rho_i(\eta,r,\xi^{(0)}) \leq r$ must be satisfied. The, we must have that
$$
G\frac{\sqrt{2\eta\left(E\left(\xi^{(0)}\right)-E_i^{*}(r)\right)}}{\sqrt{\amin}(1-\sqrt{1-\mu \eta})} \leq r \iff \frac{2 G^2 \eta \left(E\left(\xi^{(0)}\right)-E_i^{*}(r)\right)}{\amin(1-\sqrt{1-\mu \eta})^2} \leq r^2.
$$

Define $u\coloneqq \sqrt{1-\mu \eta}$. Since $0<\eta<\frac{1}{\mu}$, we have that $0<u<1$. Further,
$\eta=\frac{1-u^2}{\mu}=\frac{(1-u)(1+u)}{\mu}$. Therefore, the condition reduces to
$$
\frac{1+u}{1-u} \leq r^2 \times\frac{\mu \amin}{2 G^2 \left(E\left(\xi^{(0)}\right)-E_i^{*}(r)\right)} \eqqcolon \alpha(r,\xi^{(0)}).
$$
Because $0<u<1$, we have that $\frac{1+u}{1-u}>1$. Therefore, a necessary condition is $\alpha(r,\xi^{(0)})>1$. 
Moreover, if $\alpha(r,\xi^{(0)})>1$, then the condition is equivalent to
$$
1+u \leq \alpha(r,\xi^{(0)})(1-u) \iff u \leq \frac{\alpha(r,\xi^{(0)}) - 1}{\alpha(r,\xi^{(0)})+1} \iff 1-\mu \eta \leq\left(\frac{\alpha(r,\xi^{(0)})-1}{\alpha(r,\xi^{(0)})+1}\right)^2.
$$ 

Consequently, the condition reduces to $\alpha(r,\xi^{(0)})>1$ and $\eta \geq \frac{1}{\mu} \times \left[1-\left(\frac{\alpha(r,\xi^{(0)})-1}{\alpha(r,\xi^{(0)})+1}\right)^2\right] = \frac{4 \alpha(r,\xi^{(0)})}{\mu(\alpha(r,\xi^{(0)})+1)^2} .$

    This completes the proof.

\end{proof}

\subsubsection{Auxiliary results for proving Theorem \ref{Geometric convergence in Sinkhorn divergence to the local minimizer}}

For any $i=1,\dots,N$ consider the stored pattern $X_i=\sum_{m=1}^M b_{i, m} \delta_{y_{i, m}} \in \ProbM$ whose weight and locations parameters are denoted as $b_i\coloneqq\left(b_{i, 1}, \dots, b_{i, M}\right)$ and $y_i\coloneqq\left(y_{i, 1}, \ldots, y_{i, M}\right)$, respectively. Let us define the stored pattern margins
$$
\begin{gathered}
w_i\coloneqq\min _{1 \leq m \leq M}\left(b_{i, m}-\amin\right)>0, \\
d_i^{\partial}\coloneqq\min _{1 \leq m \leq M} \operatorname{dist}\left(y_{i, m}, \partial \Omega\right)>0, \\
s_i \coloneqq \operatorname{sep}\left(y_i\right)=\min _{m \neq n}\left\|y_{i, m}-y_{i, n}\right\|_2>\Dsep.
\end{gathered}
$$
Further, define $\bar{\delta}_i\coloneqq\min \left\{d_i^{\partial}, \frac{s_i-\Dsep}{2}\right\}>0$. Then, for any $0 < \delta < \bar{\delta}_i$ and $0 < \tau < w_i$, let us define
\begin{equation}\label{Definition of local radius boudn determined by stored pattern}
    \rloc_i(\delta, \tau)\coloneqq\min \left\{\frac{\amin \delta^2}{2}-\varepsilon \log M, \frac{\tau\left(s_i-\delta\right)^2}{4}-\varepsilon \log M\right\} .
\end{equation}

\begin{lemma}[local basin compactness inside parameter space]\label{local basin compactness inside parameter space}
    Fix $i \in\{1, \dots, N\}$, and choose numbers $\delta_i, \tau_i$ such that $0<\delta_i<\bar{\delta}_i, \quad 0<\tau_i<w_i$.
    Assume $0<r<r_i^{\mathrm{loc}}\left(\delta_i, \tau_i\right)$.
    Then every $\xi=\sum_{m=1}^M a_m \delta_{x_m} \in B_i(r)$ admits, after relabeling of its atoms, a unique ordered representative $(a,x)$ satisfying
    $$
    \left\|a-b_i\right\|_1 \leq \tau_i, \quad x_m \in \bar{\mathbb{B}}\left(y_{i, m}, \delta_i\right) \quad \text { for every } m=1, \ldots, M .
    $$
    Consequently,
    $$
    B_i(r) \subset \Xi\left(K_i\left(\delta_i, \tau_i\right)\right),
    $$
    where
    $$
    K_i\left(\delta_i, \tau_i\right)\coloneqq\left\{(a, x): \sum_{m=1}^M a_m=1,\left\|a-b_i\right\|_1 \leq \tau_i, x_m \in \bar{\mathbb{B}}\left(y_{i, m}, \delta_i\right) \forall m\right\},
    $$ 
    and $K_i\left(\delta_i, \tau_i\right)$ is a compact subset of the parameter space $\mathcal{M}_M = \aspace \times \locspace(\Omega)$.
        \end{lemma}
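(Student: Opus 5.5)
The plan is to reduce everything to an unregularized transport lower bound, and then read off the atom locations and weights from elementary cost estimates.

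\textbf{Step 1: Sinkhorn-to-OT comparison.} First I will show that for any $\xi=\sum_m a_m\delta_{x_m}\in\ProbM$,
\[
S_\varepsilon(\xi,X_i)\;\geq\;\operatorname{OT}_0(\xi,X_i)-\varepsilon\log M .
\]
\begin{itemize}
\item Since $\operatorname{KL}\geq 0$, we have $\operatorname{OT}_\varepsilon(\xi,X_i)\geq \operatorname{OT}_0(\xi,X_i)$, the unregularized cost $\min_\pi\int c\,d\pi$.
\item For the self terms, plug the diagonal coupling $\pi=\sum_m a_m\delta_{(x_m,x_m)}$ into the definition. Its transport cost is $0$ and its relative entropy is $\operatorname{KL}(\pi\,\|\,\xi\otimes\xi)=\sum_m a_m\log(1/a_m)\le\log M$. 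Hence $\operatorname{OT}_\varepsilon(\xi,\xi)\le\varepsilon\log M$, and likewise for $X_i$.
\item The two halves of the self terms therefore subtract at most $\varepsilon\log M$ in total.
\end{itemize}
For $\xi\in B_i(r)$ with $r<\rloc_i(\delta_i,\tau_i)$ this gives the key estimate
\[
\operatorname{OT}_0(\xi,X_i)<\min\Bigl\{\tfrac{\amin\delta_i^2}{2},\ \tfrac{\tau_i(s_i-\delta_i)^2}{4}\Bigr\}.
\]

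\textbf{Step 2: locations.} Fix an optimal plan $\pi$ between $\xi$ and $X_i$. Suppose some stored atom $y_{i,m}$ had no query atom in $\bar{\mathbb{B}}(y_{i,m},\delta_i)$. All of its mass $b_{i,m}>\amin$ would then travel distance greater than $\delta_i$, costing more than $\amin\delta_i^2/2$. This contradicts the key estimate.

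So each ball $\bar{\mathbb{B}}(y_{i,m},\delta_i)$ contains at least one $x_n$. The balls are pairwise disjoint because $2\delta_i<s_i-\Dsep<s_i$. There are $M$ query atoms and $M$ disjoint balls, so each ball contains exactly one atom. This yields a bijection, hence the relabeling. It is unique because the balls are disjoint.

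\textbf{Step 3: weights.} With this labeling, take $n\neq m$. Then $\|x_n-y_{i,m}\|\ge \|y_{i,n}-y_{i,m}\|-\delta_i\ge s_i-\delta_i$, so any mass sent off the diagonal costs at least $(s_i-\delta_i)^2/2$ per unit. Any coupling of $a$ and $b_i$ moves at least $\tfrac12\|a-b_i\|_1$ of mass off the diagonal. Therefore
\[
\operatorname{OT}_0(\xi,X_i)\ \ge\ \frac{\|a-b_i\|_1(s_i-\delta_i)^2}{4},
\]
and the key estimate forces $\|a-b_i\|_1<\tau_i$.

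\textbf{Step 4: compactness and membership.} $K_i(\delta_i,\tau_i)$ is closed and bounded. Since $\delta_i<d_i^\partial$, each closed ball $\bar{\mathbb{B}}(y_{i,m},\delta_i)$ lies in $\Omega$, so $K_i$ is compact. It lies inside $\mathcal{M}_M$ for three reasons:
\begin{itemize}
\item $|a_m-b_{i,m}|\le\tau_i<w_i\le b_{i,m}-\amin$, so $a_m>\amin$.
\item Each $x_m$ lies in $\Omega$ by the ball containment above.
\item $\|x_m-x_n\|\ge s_i-2\delta_i>\Dsep$.
\end{itemize}

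\textbf{Main obstacle.} I expect the main care to be needed in Step 1, namely the entropy bound on the self terms. It holds only because the mass-splitting on the diagonal is bounded by the Shannon entropy $\le\log M$. One must also make sure the gauge conventions in the paper's definition of $\operatorname{OT}_\varepsilon$ (relative entropy with respect to $\mu\otimes\nu$) match this computation.
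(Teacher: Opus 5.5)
Your proposal is correct and follows essentially the same route as the paper. You lower-bound $S_\varepsilon(\xi,X_i)$ by the transport cost minus $\varepsilon\log M$ using the diagonal-coupling entropy bound, place the atoms by a cost argument, control the weights through the off-diagonal mass $\geq\frac12\|a-b_i\|_1$, and then check compactness and membership in $\mathcal{M}_M$. The only real difference is that you skip the paper's separate ``row'' argument (every query atom lies in some ball), which is legitimately redundant: disjointness of the $M$ balls, together with each ball containing at least one of the $M$ distinct atoms, already forces a bijection by counting.
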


    \begin{proof}
        From Lemma \ref{mean separation of patterns implies margin separation of Sinkhorn divergences}, we have that
        $ \operatorname{OT}_{\varepsilon}(\xi, \xi) \leq \varepsilon \log M$ and $\operatorname{OT}_{\varepsilon}\left(X_i, X_i\right) \leq \varepsilon \log M$.

        Therefore,
        \begin{equation}\label{lower bound on Sinkhorn divergence in local radius}
        S_{\varepsilon}\left(\xi, X_i\right) \geq \operatorname{OT}_{\varepsilon}\left(\xi, X_i\right)-\varepsilon \log M .
        \end{equation}
        Now suppose $\xi \in B_i(r)$. Assume for the sake of contradiction that some query atom $x_{m_0}$ does not belong to $\bigcup_{m=1}^M \bar{\mathbb{B}}\left(y_{i, m}, \delta_i\right)$. Then, for every target atom $y_{i, m}$,
        $$
        \left\|x_{m_0}-y_{i, m}\right\|_2>\delta_i
        $$
        and hence every unit of mass transported out of row $m_0$ must pay least $\delta_i^2 / 2$ in transport cost. Since the row mass equals $a_{m_0} \geq \amin$, we must have that
        $$
        \operatorname{OT}_{\varepsilon}\left(\xi, X_i\right) \geq \frac{\amin \delta_i^2}{2}.
        $$
        Using ~\eqref{lower bound on Sinkhorn divergence in local radius},
        $$
        S_{\varepsilon}\left(\xi, X_i\right) \geq \frac{\amin \delta_i^2}{2}-\varepsilon \log M.
        $$
        But $r<\rloc_i\left(\delta_i, \tau_i\right) \leq \frac{\amin \delta_i^2}{2}-\varepsilon \log M$, contradicting $S_{\varepsilon}\left(\xi, X_i\right) \leq r$.
        Hence, each query atom lies in $\bigcup_m \bar{\mathbb{B}}\left(y_{i, m}, \delta_i\right)$.

        Now, assume for the sake of contradiction that for some $n_0$, no query atom belongs to $\bar{\mathbb{B}}\left(y_{i, n_0}, \delta_i\right)$. Then every unit of mass transported into column $n_0$ pays at least $\delta_i^2 / 2$. Since the column mass equals $b_{i, n_0}>a_{\text {min }}$,
        $$
        \operatorname{OT}_{\varepsilon}\left(\xi, X_i\right) \geq \frac{b_{i, n_0} \delta_i^2}{2} \geq \frac{\amin \delta_i^2}{2}.
        $$
        Again ~\eqref{lower bound on Sinkhorn divergence in local radius} gives
        $$
        S_{\varepsilon}\left(\xi, X_i\right) \geq \frac{\amin \delta_i^2}{2}-\varepsilon \log M>r,
        $$
        which is a contradiction. So every closed ball $\bar{\mathbb{B}}\left(y_{i, n}, \delta_i\right)$ contains at least one query atom.

        Since $\delta_i<\bar{\delta}_i \leq \frac{s_i-\Dsep}{2}<\frac{s_i}{2}$, the balls $\bar{\mathbb{B}}\left(y_{i, n}, \delta_i\right)$ are pairwise disjoint. Based on our arguments above, all $M$ query atoms lie in the union of these $M$ disjoint balls, and each ball contains at least one query atom. Since there are exactly $M$ query atoms, each ball contains exactly one. Therefore, after a unique relabeling, we may assume
        \begin{equation}\label{uniqueness of atom labellings}
        x_m \in \bar{\mathbb{B}}\left(y_{i, m}, \delta_i\right), \quad m=1, \ldots, M .
        \end{equation}

        Let $P=\left(P_{m n}\right)_{m=1, n=1}^{M,M}$ be any coupling matrix between $\xi$ and $X_i$, i.e.
        $$
        P \geq 0, \quad P \mathbf{1}=a, \quad P^{\top} \mathbf{1}=b_i .
        $$
        Then, we have that $\sum_{m=1}^M P_{m m} \leq \sum_{m=1}^M \min \left(a_m, b_{i, m}\right)=1-\frac{1}{2}\left\|a-b_i\right\|_1$ and the off-diagonal mass satisfies 
        \begin{equation}\label{off diagonal}
        \sum_{m \neq n} P_{m n}=1-\sum_{m=1}^M P_{m m} \geq \frac{1}{2}\left\|a-b_i\right\|_1 . 
        \end{equation}

        Now, fix some $m \neq n$. Then, using ~\eqref{uniqueness of atom labellings}, we have that, $\left\|x_m-y_{i, n}\right\|_2 \geq\left\|y_{i, m}-y_{i, n}\right\|_2-\left\|x_m-y_{i, m}\right\|_2 \geq s_i-\delta_i$. Therefore
        $$
        c\left(x_m, y_{i, n}\right)=\frac{1}{2}\left\|x_m-y_{i, n}\right\|_2^2 \geq \frac{1}{2}\left(s_i-\delta_i\right)^2 .
        $$
        So every unit of off-diagonal mass contributes a transport cost of at least $\frac{1}{2}\left(s_i-\delta_i\right)^2$. Using ~\eqref{uniqueness of atom labellings}, we have that
        $$
        \operatorname{OT}_{\varepsilon}\left(\xi, X_i\right) \geq \frac{1}{2}\left(s_i-\delta_i\right)^2 \sum_{m \neq n} P_{m n} \geq \frac{\left(s_i-\delta_i\right)^2}{4}\left\|a-b_i\right\|_1 .
        $$
        Combining with ~\eqref{lower bound on Sinkhorn divergence in local radius}, we have that
        $$
        S_{\varepsilon}\left(\xi, X_i\right) \geq \frac{\left(s_i-\delta_i\right)^2}{4}\left\|a-b_i\right\|_1-\varepsilon \log M .
        $$
        If $\left\|a-b_i\right\|_1>\tau_i$, then
        $$
        S_{\varepsilon}\left(\xi, X_i\right)>\frac{\left(s_i-\delta_i\right)^2}{4} \tau_i-\varepsilon \log M \geq r_i^{\mathrm{loc}}\left(\delta_i, \tau_i\right)>r
        $$
        which leads to a contradiction. Thus, we must have that
        $$
        \left\|a-b_i\right\|_1 \leq \tau_i .
        $$

        The defining constraints of $K_i\left(\delta_i, \tau_i\right)$ are closed and bounded in the finite-dimensional Euclidean space $\mathbb{R}^M \times\left(\mathbb{R}^d\right)^M$, so $K_i\left(\delta_i, \tau_i\right)$ is compact. We only need to verify that $K_i\left(\delta_i, \tau_i\right)$ is indeed a subset of the parameter space $\aspace \times \locspace(\Omega)$. If $(a, x) \in K_i\left(\delta_i, \tau_i\right)$, then, for all $m =1,\dots,M$,
        $$
        a_m \geq b_{i, m}-\tau_i>\amin,
        $$
        since $\tau_i<w_i=\min _m\left(b_{i, m}-\amin\right)$.
        Also, since $\delta_i<d_i^{\partial}$, we have that, for every $m=1,\dots,M$
        $$
        \bar{\mathbb{B}}\left(y_{i, m}, \delta_i\right) \subset \Omega \quad \text { for every } m .
        $$
        Finally, for $m \neq n$, we have that
        $$
        \left\|x_m-x_n\right\| \geq\left\|y_{i, m}-y_{i, n}\right\|-2 \delta_i \geq s_i-2 \delta_i>\Dsep
        $$
        since $\delta_i<\frac{s_i-\Dsep}{2}$. Therefore, we finally have that
        $$
        K_i\left(\delta_i, \tau_i\right) \subset \aspace \times \locspace(\Omega) .
        $$
        This completes the proof.
    \end{proof}

    \begin{lemma}[uniform retraction domain and local metric upper bound]\label{uniform retraction domain and local metric upper bound}
        Assume $0<r<r_i^{\mathrm{loc}}\left(\delta_i, \tau_i\right)$, and let $K_i\left(\delta_i, \tau_i\right)$ be as in Lemma \ref{local basin compactness inside parameter space}. Define
        $$
        a_i^{-}\coloneqq\min _{1 \leq m \leq M} b_{i, m}-\tau_i>\amin
        $$
        and
        $$
        \etaretarg{i}\coloneqq\min \left\{\frac{\lambda^2}{2 D^2} \log \frac{a_i^{-}}{\amin}, \frac{1}{2 D} \min \left\{d_i^{\partial}-\delta_i, s_i-2 \delta_i-\Dsep\right\}\right\} .
        $$
        Then, the following hold true
        \begin{enumerate}
            \item For every $\xi=\Xi(a, x) \in \Xi\left(K_i\left(\delta_i, \tau_i\right)\right)$,
            $$
            \sup _{z \in \Omega}\left|u_{\xi}(z)\right| \leq D^2, \quad \sup _{z \in \Omega}\left\|\nabla u_{\xi}(z)\right\| \leq D .
            $$
            \item If $0<\eta<\etaretarg{i}$, then for every $\xi=\Xi(a, x) \in \Xi\left(K_i\left(\delta_i, \tau_i\right)\right)$, the retraction
            $\operatorname{Ret}_{\xi}\left(-\eta \operatorname{grad}_{\operatorname{SHK}} E(\xi)\right)$ is well-defined.
            \item If
            $$
            \|(\delta a, \delta x)\|_E^2\coloneqq\sum_{m=1}^M\left(\delta a_m\right)^2+\sum_{m=1}^M\left\|\delta x_m\right\|^2,
            $$
            then on $K_i\left(\delta_i, \tau_i\right)$,
            $$
            \|(\delta a, \delta x)\|_{\mathrm{SHK},(a, x)}^2 \leq L_i^2\|(\delta a, \delta x)\|_E^2, \quad L_i\coloneqq\sqrt{\max \left\{\frac{\lambda^2}{a_i^{-}}, 1\right\}} .
            $$
            Consequently, for all $(a, x),\left(a^{\prime}, x^{\prime}\right) \in K_i\left(\delta_i, \tau_i\right)$,
            $$
            d_{\operatorname{SHK}}\left(\Xi(a, x), \Xi\left(a^{\prime}, x^{\prime}\right)\right) \leq L_i\left\|\left(a-a^{\prime}, x-x^{\prime}\right)\right\|_E .
            $$
            \end{enumerate}
    \end{lemma}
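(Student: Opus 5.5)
The plan is to prove the three parts in order, with Part 1 supplying the uniform bounds that Parts 2 and 3 use.

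\textbf{Part 1.} I would bound the gradient first and obtain the sup bound on $u_\xi$ from it. By \eqref{eq:first-variation-energy} and \eqref{eq:grad-potential-bary} (together with the symmetry remark $\nabla\tfrac12(f_{\xi,\xi}+g_{\xi,\xi})=\nabla f_{\xi,\xi}$ in Section \ref{App: entropic potentials}), for every $z\in\Omega$
$$
\nabla u_\xi(z)=\sum_{j=1}^N w_j(\xi)\bigl(T^\varepsilon_{\xi\to\xi}(z)-T^\varepsilon_{\xi\to X_j}(z)\bigr).
$$
Each barycentric map is the mean of a probability measure supported on the atoms of $\xi$ or of $X_j$, all of which lie in $\Omega$. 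Since $\Omega$ is convex, both barycenters lie in $\Omega$, so each difference has norm at most $D$. The Gibbs weights sum to one, so $\|\nabla u_\xi(z)\|\le D$.

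For the sup bound, $u_\xi$ is then $D$-Lipschitz on the convex set $\Omega$, so its oscillation there is at most $D\cdot D=D^2$. Because $u_\xi$ is centered, $\int u_\xi\,d\xi=0$, it takes both signs on $\operatorname{supp}\xi$ (or vanishes there). Hence $|u_\xi(z)|\le\operatorname{osc}_\Omega u_\xi\le D^2$. The point I expect to need most care is that the potentials are used off the support of $\xi$. I would handle this by taking $f$ to be its soft $c$-transform extension, which is smooth on all of $\Omega$ and satisfies \eqref{eq:grad-potential-bary} everywhere.

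\textbf{Part 2.} Here I would check the two domain conditions of Section \ref{sec: defining retraction map} directly.

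For the weights, the descent direction gives $\delta a_m/a_m=-\eta u_\xi(x_m)/\lambda^2$, so $\|\delta a/a\|_\infty\le\eta D^2/\lambda^2$ by Part 1. On $K_i$ we have $|a_m-b_{i,m}|\le\|a-b_i\|_1\le\tau_i$, hence $a_m\ge a_i^-$. The choice $\eta<\etaretarg{i}$ then gives
$$
\frac{\eta D^2}{\lambda^2}<\frac12\log\frac{a_i^-}{\amin}\le\frac12\min_m\log\frac{a_m}{\amin}.
$$

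For the positions, $\|\delta x\|_\infty=\eta\max_m\|\nabla u_\xi(x_m)\|\le\eta D$. Since $x_m\in\bar{\mathbb{B}}(y_{i,m},\delta_i)$, we have
$$
d_\partial(x)\ge d_i^\partial-\delta_i,\qquad \operatorname{sep}(x)\ge s_i-2\delta_i .
$$
Therefore $\eta D<\tfrac12\min\{d_\partial(x),\operatorname{sep}(x)-\Dsep\}$, and $-\eta\operatorname{grad}_{\SHK}E(\xi)\in U_{(a,x)}$.

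\textbf{Part 3.} On $K_i$ each weight satisfies $a_i^-\le a_m\le1$. Consequently
$$
\frac{\lambda^2}{a_m}(\delta a_m)^2+a_m\|\delta x_m\|^2\le\max\Bigl\{\frac{\lambda^2}{a_i^-},1\Bigr\}\bigl((\delta a_m)^2+\|\delta x_m\|^2\bigr),
$$
and summing over $m$ gives the metric comparison.

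For the distance bound I would observe that $K_i$ is convex: it is the intersection of an affine hyperplane with an $\ell^1$-ball in the weights, times a product of convex closed balls in the locations. The straight segment $t\mapsto(1-t)(a,x)+t(a',x')$ therefore stays in $K_i\subset\mathcal{M}_M$. Its SHK length is at most $L_i$ times its Euclidean length $\|(a-a',x-x')\|_E$. Since $d_{\SHK}$ is bounded by the length of any admissible curve, the distance bound follows.
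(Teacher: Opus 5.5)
Your proposal is correct and follows the paper's proof essentially step for step. Part 1 uses the barycentric-map representation of $\nabla u_\xi$ with the diameter bound $D$, followed by the oscillation bound $D^2$ on convex $\Omega$. Part 2 checks $\eta D^2/\lambda^2$ and $\eta D$ against the retraction domains using $a_m\ge a_i^-$, $d_\partial(x)\ge d_i^\partial-\delta_i$ and $\operatorname{sep}(x)\ge s_i-2\delta_i$. Part 3 integrates the pointwise metric comparison along a straight segment in the convex set $K_i$. The only cosmetic difference is that you bound the centered $u_\xi$ directly, via the sign change on $\operatorname{supp}\xi$, whereas the paper bounds each centered $\bar\phi_{j,\xi}$ and then takes the Gibbs-weighted convex combination.
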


    \begin{proof}
        For $j=1, \ldots, N$, let
        $$
        \phi_{j, \xi}(z)\coloneqq\frac{\delta S_{\varepsilon}\left(\xi, X_j\right)}{\delta \xi}(z)
        $$
        and let
        $$
        \bar{\phi}_{j, \xi}(z)\coloneqq\phi_{j, \xi}(z)-\int \phi_{j, \xi} d \xi .
        $$
        Then
        $$
        u_{\xi}(z)=\sum_{j=1}^N w_j(\xi) \bar{\phi}_{j, \xi}(z), \quad \sum_{j=1}^N w_j(\xi)=1, \quad w_j(\xi) \geq 0 .
        $$
        Following the derivations in the proof of Lemma \ref{Open bounded convex domain implies boundedness of SHK gradient of Sinkhorn divergence}, specifically~\eqref{Bound on norm of gradient of first variation} and~\eqref{Bound on norm of oscillation of first variation}, for each $j=1,\dots,N$, we have that
        $$
        \sup _{z \in \Omega}\left\|\nabla \phi_{j, \xi}(z)\right\| \leq D, \quad \sup _{z \in \Omega} \phi_{j, \xi}(z)-\inf _{z \in \Omega} \phi_{j, \xi}(z) \leq D^2 .
        $$
        Hence, we obtain
        $$
        \begin{aligned}
        &\sup _{z \in \Omega}\left|\bar{\phi}_{j, \xi}(z)\right| \leq D^2 \,\ \textrm{ and }\,\ \sup _{z \in \Omega}\left\|\nabla \bar{\phi}_{j, \xi}(z)\right\| \leq D .
        \end{aligned}
        $$
        Taking the convex combination with weights $w_j(\xi)$, we have that, 
        $$
        \begin{aligned}
        &\sup _{z \in \Omega}\left|u_{\xi}(z)\right| \leq D^2 \,\ \textrm{ and }\,\ \sup _{z \in \Omega}\left\|\nabla u_{\xi}(z)\right\| \leq D .
        \end{aligned}
        $$
        Let
        $$
        \operatorname{grad}_{\SHK} E(a, x)=\left(\left(\frac{a_m}{\lambda^2} u_{\xi}\left(x_m\right)\right)_{m=1}^M,\left(\nabla u_{\xi}\left(x_m\right)\right)_{m=1}^M\right) .
        $$
        The actual retraction step is therefore
        $$
        \delta a_m=-\eta \frac{a_m}{\lambda^2} u_{\xi}\left(x_m\right), \quad \delta x_m=-\eta \nabla u_{\xi}\left(x_m\right) .
        $$
        Using the above bound,
        \begin{equation}
            \left\|\frac{\delta a}{a}\right\|_{\infty}=\max _m \frac{\left|\delta a_m\right|}{a_m} \leq \eta \frac{D^2}{\lambda^2}, \quad\|\delta x\|_{\infty, 2}\coloneqq\max _m\left\|\delta x_m\right\| \leq \eta D .
        \end{equation}
        By Lemma \ref{local basin compactness inside parameter space}, every $(a, x) \in K_i\left(\delta_i, \tau_i\right)$ satisfies $a_m \geq a_i^{-}$  for all $m=1,\dots,M$, $d_{\partial}(x) \geq d_i^{\partial}-\delta_i$ and $\operatorname{sep}(x)-\Delta_{\min } \geq s_i-2 \delta_i-\Dsep$.
        
        Hence, under the local region of validity of the retraction map, as introduced in Section \ref{sec: defining retraction map},
        $$
        \left\|\frac{\delta a}{a}\right\|_{\infty}<\frac{1}{2} \log \frac{a_i^{-}}{\amin}
        $$
        and
        $$
        \|\delta x\|_{\infty, 2}<\frac{1}{2} \min \left\{d_i^{\partial}-\delta_i, s_i-2 \delta_i-\Dsep\right\}
        $$
        are sufficient to make the weight and position retractions well-defined. This is guaranteed by $0<\eta<\etaretarg{i}$.
        
        On $K_i\left(\delta_i, \tau_i\right)$, we have $a_m \geq a_i^{-}$and $a_m \leq 1$. Therefore,
        $$
        \|(\delta a, \delta x)\|_{\mathrm{SHK},(a, x)}^2=\sum_{m=1}^M \frac{\lambda^2}{a_m}\left(\delta a_m\right)^2+\sum_{m=1}^M a_m\left\|\delta x_m\right\|^2 \leq \frac{\lambda^2}{a_i^{-}} \sum_m\left(\delta a_m\right)^2+\sum_m\left\|\delta x_m\right\|^2 \leq L_i^2\|(\delta a, \delta x)\|_E^2 .
        $$
        Now $K_i\left(\delta_i, \tau_i\right)$ is convex, since the weight constraints define a convex set, and each position constraint $x_m \in \bar{B}\left(y_{i, m}, \delta_i\right)$ defines a convex set, and intersection of convex sets is a convex set again. Hence the straight segment joining two points of $K_i\left(\delta_i, \tau_i\right)$ stays in $K_i\left(\delta_i, \tau_i\right)$. Integrating the above pointwise metric upper bound along that straight segment yields
        $$
        d_{\SHK}\left(\Xi(a, x), \Xi\left(a^{\prime}, x^{\prime}\right)\right) \leq L_i\left\|\left(a-a^{\prime}, x-x^{\prime}\right)\right\|_E .
        $$
        This completes the proof.
    \end{proof}

\subsection{Proof of Theorem \ref{Sinkhorn distance between minimizer in local basin and stored pattern and stability of stored pattern}}

\begin{proof}
    Since $F_i\left(X_i\right)=S_{\varepsilon}\left(X_i, X_i\right)=0 \leq r$, we have $X_i \in B_i(r)$. Assumption \ref{Ass: Existence of minimizer of E in local basin} guarantees the existence and uniqueness of the minimizer $X_i^{*}(r) \in B_i(r)$. Now, we apply Lemma \ref{Control on basin interference and softmin perturbation bounds of energy E inside local basin using separation margin} to $\xi=X_i^{*}(r)$, giving
    $$
    0 \leq F_i\left(X_i^{*}(r)\right)-E\left(X_i^{*}(r)\right) \leq \frac{1}{\beta} \log \left(1+(N-1) e^{-\beta \Delta}\right)
    $$
    Thus
    $$
    F_i\left(X_i^{*}(r)\right) \leq E\left(X_i^{*}(r)\right)+\frac{1}{\beta} \log \left(1+(N-1) e^{-\beta \Delta}\right) .
    $$
    But $X_i^{*}(r)$ minimizes $E$ over $B_i(r)$, and $X_i \in B_i(r)$, hence
    $$
    E\left(X_i^{*}(r)\right) \leq E\left(X_i\right) .
    $$
    Also $E(\xi) \leq F_i(\xi)$ for every $\xi \in B_i(r)$, so
    $$
    E\left(X_i\right) \leq F_i\left(X_i\right)=0 .
    $$
    Combining, we have that
    $$
    \begin{aligned}
    S_{\varepsilon}(X_i^{*}(r),X_i)  =& F_i\left(X_i^{*}(r)\right)\\
    \leq & E\left(X_i^{*}(r)\right)+\frac{1}{\beta} \log \left(1+(N-1) e^{-\beta \Delta}\right)\\
    \leq & E\left(X_i\right)+\frac{1}{\beta} \log \left(1+(N-1) e^{-\beta \Delta}\right)\\
    \leq & 0+\frac{1}{\beta} \log \left(1+(N-1) e^{-\beta \Delta}\right)\\
    =&\frac{1}{\beta} \log \left(1+(N-1) e^{-\beta \Delta}\right).
    \end{aligned}
    $$
    Now, consider the one-step retraction curve
    $$
    \gamma_i(t)\coloneqq\operatorname{Ret}_{X_i}\left(-t \eta \operatorname{grad}_{\operatorname{SHK}} E\left(X_i\right)\right), \quad t \in[0,1] .
    $$
    Then $\gamma_i(0)=X_i$ and $\gamma_i(1)=\Phi_\eta(X_i)$. Note that, as derived in Theorem \ref{Geometric convergence in Sinkhorn divergence to the local minimizer}, when $\eta \leq \etaretarg{i}$, the retraction $\Phi_\eta(X_i)$ is well defined. Applying Lemma \ref{Bounding SHK distance in terms of SHK gradient norm along retraction curve} with $(\delta a, \delta x)=-\eta \operatorname{grad}_{\operatorname{SHK}} E\left(X_i\right)$ in particle coordinates yields
    $$
    \operatorname{Length}\left(\gamma_i\right) \leq \min\left\{e^{\|\frac{\delta a}{a}\|_{\infty}},\frac{1}{\sqrt{\amin}}\right\} \eta\left\|\operatorname{grad}_{\operatorname{SHK}} E\left(X_i\right)\right\|_{\operatorname{SHK},X_i} .
    $$
    Now we proceed to bound $\|\frac{\delta a}{a}\|_{\infty}$. For the SHK gradient descent direction, the weight update follows
    $$
    \delta a_m=-\eta \frac{a_m}{\lambda^2} u_{X_i}\left(x_m\right) \quad \Longrightarrow \quad \frac{\delta a_m}{a_m}=-\frac{\eta}{\lambda^2} u_{X_i}\left(x_m\right) .
    $$
    where \[u_{\xi} \coloneqq \frac{\delta E}{\delta\xi}(\xi) - \Big\langle \frac{\delta E}{\delta\xi}(\xi), \xi \Big\rangle = \frac{\delta E}{\delta\xi}(\xi) - \int \frac{\delta E}{\delta\xi}(\xi) d \xi.\]
    Thus
    $$
    \left\|\frac{\delta a}{a}\right\|_{\infty} \leq \frac{\eta}{\lambda^2} \sup _{x \in \Omega}\left|u_{X_i}(x)\right| .
    $$
    Now, $\frac{\delta E}{\delta\xi}(\xi) = \sum_{j=1}^{N}w_j(\xi) \frac{\delta S_{\varepsilon}(\xi,X_j)}{\delta\xi}$ and hence $u_{\xi}(x) = \sum_{j=1}^{N}w_j(\xi) \left(\phi_i(x)- \int \phi_i(y) d\xi(y)\right)$ where 
    $$
    \phi_i(x) \coloneqq \left(\frac{\delta S_{\varepsilon}(\xi,X_i)}{\delta\xi}\right)(x)
    $$
    Following the argument presented in the proof of Lemma \ref{Open bounded convex domain implies boundedness of SHK gradient of Sinkhorn divergence}, specifically~\eqref{Bound on norm of oscillation of first variation}, we have that, for any $x,y \in \Omega$,  $ |\phi_i(x) - \phi_i(y)| \leq D^2$ and hence
    for any $\xi \in \ProbM$, we have that $\sup_{x \in \Omega} |u_{\xi}(x)| \leq D^2$. Consequently, we have that
    $$
    \left\|\frac{\delta a}{a}\right\|_{\infty} \leq \frac{\eta D^2}{\lambda^2}.
    $$
    Using Lemma \ref{Open bounded convex domain implies boundedness of SHK gradient of Sinkhorn divergence}, the fact that $S_{\varepsilon}(X_i,X_i) = 0$ and following the same argument as in the proof of Theorem \ref{Geometric convergence in Sinkhorn divergence to the local minimizer}, we have that $$
    \begin{aligned}
   S_{\varepsilon}(X_i,\Phi_\eta(X_i)) \leq & G d_{\operatorname{SHK}}(X_i, \Phi_\eta(X_i)) \\
   \leq & G\operatorname{Length}\left(\gamma_i\right) \\
   \leq &  \min\left\{e^{\eta D^2 / \lambda^2},\frac{1}{\sqrt{\amin}}\right\} \eta G\left\|\operatorname{grad}_{\operatorname{SHK}} E\left(X_i\right)\right\|_{\operatorname{SHK},X_i} .
   \end{aligned}
    $$
    Now, since $F_i(\xi) = S_{\varepsilon}(\xi,X_i)$ has a global minimum at $\xi = X_i$ and $F_i$ is differentiable in the SHK sense, we must have that $\operatorname{grad}_{\operatorname{SHK}} F_i(X_i) = 0$. Now, using Lemma \ref{Expression of SHK gradient oof energy functional E in terms of SHK gradients of Sinkhorn divergence}, we have that
    $$
    \operatorname{grad}_{\operatorname{SHK}} E(X_i) = \sum_{j=1}^{N}w_j(X_i) \operatorname{grad}_{\operatorname{SHK}} F_j(X_i)= \sum_{j\neq i}^{N}w_j(X_i) \operatorname{grad}_{\operatorname{SHK}} F_j(X_i).
    $$
    Consequently, using the gradient bound from Lemma \ref{Open bounded convex domain implies boundedness of SHK gradient of Sinkhorn divergence}, we have that
    $$
    \begin{aligned}
        \left\|\operatorname{grad}_{\operatorname{SHK}} E(X_i)\right\|_{\operatorname{SHK},X_i} \leq & \sum_{j \neq i} w_j (X_i)\left\|\operatorname{grad}_{\operatorname{SHK}} F_j(X_i)\right\|_{\operatorname{SHK},X_i} \\
        \leq & G \sum_{j \neq i} w_j (X_i) \leq \frac{G(N-1) e^{-\beta \Delta}}{1+(N-1) e^{-\beta \Delta}}
    \end{aligned}
    $$
    Therefore, we have that
    $$
    \begin{aligned}
         S_{\varepsilon}(X_i,\Phi_\eta(X_i)) 
   \leq &\min\left\{e^{\eta D^2 / \lambda^2},\frac{1}{\sqrt{\amin}}\right\} \eta G\left\|\operatorname{grad}_{\operatorname{SHK}} E\left(X_i\right)\right\|_{\operatorname{SHK},X_i} \\
   \leq & \frac{\min\left\{e^{\eta D^2 / \lambda^2},\frac{1}{\sqrt{\amin}}\right\} \eta G^2(N-1) e^{-\beta \Delta}}{1+(N-1) e^{-\beta \Delta}} \\
   \leq & \min\left\{e^{\eta D^2 / \lambda^2},\frac{1}{\sqrt{\amin}}\right\} \eta G^2(N-1) e^{-\beta \Delta}
    \end{aligned}
    $$
    where we used $ \frac{u}{1+u} \leq u$ for $u \geq 0$ in the last inequality.

This completes the proof.
\end{proof}

\section{Discrete retrieval algorithm for empirical measures}\label{Discrete retrieval algorithm for empirical measures}

We now give a fully explicit deterministic retrieval scheme obtained by an explicit Euler discretization of~\eqref{eq:particle-ode}.
The resulting update alternates a Kantorovich (support) step and a spherical Hellinger (weight) step, both driven by the same Sinkhorn computations.

\subsection{Discrete Sinkhorn objects}

Let $\xi=\sum_{m=1}^M a_m\delta_{x_m}$ and $X_i=\sum_{n=1}^{M} b_{i,n}\delta_{y_{i,n}}$.
An entropic coupling is a matrix $P_i\in\mathbb{R}_+^{M\times M}$ with row/column sums $a$ and $b_i$.
The barycentric projection map (as derived in Section \ref{App: entropic potentials}) on the query support is
\[
T^\varepsilon_{\xi\to X_i}(x_m)=\sum_{n=1}^{M}\frac{P_i[m,n]}{a_m}\,y_{i,n},
\qquad
T^\varepsilon_{\xi\to \xi}(x_m)=\sum_{\ell=1}^{M}\frac{P_0[m,\ell]}{a_m}\,x_\ell.
\]
Moreover, the dual potentials returned by Sinkhorn (in the first argument) provide the discrete values $f_{\xi,X_i}(x_m)$ and $f_{\xi,\xi}(x_m)$ needed for~\eqref{eq:z-def}.
For the self-coupling $\operatorname{OT}_\varepsilon(\xi,\xi)$, $\xi$ appears in \emph{both} marginals, so the weight update requires the symmetric combination
\[
f^{\mathrm{sym}}_{\xi,\xi}(x_m)\coloneqq\tfrac12\bigl(f_{\xi,\xi}(x_m)+g_{\xi,\xi}(x_m)\bigr),
\]
where $g_{\xi,\xi}$ is the Sinkhorn potential in the second argument.
For the self-coupling, the correct (gauge-invariant) contribution to the \emph{weight} gradient is the symmetric average $\tfrac12(f_{\xi,\xi}+g_{\xi,\xi})$.

\subsection{Explicit Euler (Kantorovich transport) update}

Using~\eqref{eq:velocity-bary}, a step of size $\eta>0$ updates the support points by
\begin{equation}
\label{eq:x-update}
x_m^{k+1}=x_m^k+\eta\left(\sum_{i=1}^N w_i^k\,T_i^k(x_m^k)-T_0^k(x_m^k)\right),
\end{equation}
where $T_i^k$ denotes the barycentric map computed from the Sinkhorn coupling between $\xi^k$ and $X_i$, and $T_0^k$ is the barycentric map from the self-coupling between $\xi^k$ and itself.

\subsection{Multiplicative (spherical Hellinger) weight update}

A first-order discretization of~\eqref{eq:replicator-ode} can be implemented as the multiplicative update
\begin{equation}
\label{eq:replicator-discrete}
a_m^{k+1}\;\propto\;a_m^k\exp\!\left(-\frac{\eta}{\lambda^2}\,z_m^k\right),
\qquad \sum_{m=1}^M a_m^{k+1}=1,
\end{equation}
where $z_m^k$ is defined by~\eqref{eq:z-def} at $(x^k,a^k)$.
The normalization removes any additive-constant ambiguity in the potentials and guarantees $a_m^{k+1}>0$ whenever $a_m^k>0$.

\subsection{Full deterministic retrieval algorithm}

\textbf{Algorithm 1: Entropic DDAM with spherical Hellinger-Kantorovich retrieval (empirical measures)}

\textbf{Inputs:} stored empirical measures $\{X_i\}_{i=1}^N$; query $\xi^0=\sum_m a_m^0\delta_{x_m^0}$;
parameters $\beta>0$, $\varepsilon>0$, step size $\eta>0$, spherical Hellinger scale $\lambda>0$;
number of iterations $K$ (or a stopping criterion).

\textbf{For $k=0,1,\dots,K-1$ do:}
\begin{enumerate}
    \item \textbf{Sinkhorn couplings and costs.}
    For each $i$, run Sinkhorn between $\xi^k$ and $X_i$ to obtain:
    (a) coupling matrix $P_i^k$,
    (b) source potential values $f_i^k[m]=f_{\xi^k,X_i}(x_m^k)$,
    and (c) the entropic OT cost $\operatorname{OT}_\varepsilon(\xi^k,X_i)$.
    Compute also the self-coupling between $\xi^k$ and itself to obtain $P_0^k$, both potential vectors
    $f_0^k[m]=f_{\xi^k,\xi^k}(x_m^k)$ and $g_0^k[m]=g_{\xi^k,\xi^k}(x_m^k)$ and the entropic self-OT cost $\operatorname{OT}_\varepsilon(\xi^k,\xi^k)$. Finally, for each $i$, compute the OT cost $\operatorname{OT}_\varepsilon(X_i,X_i)$.
    \item \textbf{Sinkhorn divergences and Gibbs weights.}
    For each $i$, compute $S_\varepsilon(\xi^k,X_i)$ using~\eqref{eq:sinkhorn-div}, and set
    \[
    w_i^k=\frac{\exp(-\beta S_\varepsilon(\xi^k,X_i))}{\sum_{j=1}^N\exp(-\beta S_\varepsilon(\xi^k,X_j))}.
    \]
    \item \textbf{Barycentric maps.}
    For each particle $m$ and each pattern $i$, compute
    \[
    T_i^k(x_m^k)=\sum_n \frac{P_i^k[m,n]}{a_m^k}\,y_{i,n},
    \qquad
    T_0^k(x_m^k)=\sum_\ell \frac{P_0^k[m,\ell]}{a_m^k}\,x_\ell^k.
    \]
    \item \textbf{Support update (transport).} Update $x_m^{k+1}$ using~\eqref{eq:x-update}.
    \item \textbf{Weight update (reaction).}
    Compute
    \[
    z_m^k=\sum_{i=1}^N w_i^k\Bigl(f_i^k[m]-\tfrac12\bigl(f_0^k[m]+g_0^k[m]\bigr)\Bigr),
    \]
    then update $(a_m^{k+1})$ by~\eqref{eq:replicator-discrete}.
\end{enumerate}

\textbf{Output:} retrieved empirical measure $\xi^K=\sum_m a_m^K\delta_{x_m^K}$.

All steps above are deterministic given the Sinkhorn solver (which itself is deterministic for fixed initialization and tolerance).
The support update is a pushforward by a deterministic barycentric map, and the weight update is a deterministic multiplicative reweighting on the simplex.
Thus the overall retrieval operator is deterministic.

\section{Auxiliary results}

\subsection{Dual and optimal potentials}

A standard dual form of $\operatorname{OT}_\varepsilon$ can be written (up to equivalent normalizations) in terms of potentials $f,g\in C(\Omega)$.One defines the entropic soft c-transform operator $A_\varepsilon$ via an expression of the form
\[
A_\varepsilon(g,\nu)(x)\coloneqq-\varepsilon\log\int_\Omega \exp\!\Big(\frac{g_{\mu,\nu}(y)-c(x,y)}{\varepsilon}\Big)\,d\nu(y)
\quad \text{(defined up to an additive constant)}.
\]
Then the optimal potentials $(f_{\mu,\nu},g_{\mu,\nu})$ (Schr\"{o}dinger potentials) satisfy the Schr\"{o}dinger system
\[
f_{\mu,\nu}=A_\varepsilon(g_{\mu,\nu},\nu)\quad \mu\text{-a.e.},
\qquad
g_{\mu,\nu}=A_\varepsilon(f_{\mu,\nu},\mu)\quad \nu\text{-a.e.}.
\]

These potentials are unique up to adding a constant to $f_{\mu,\nu}$ and subtracting the same constant from $g_{\mu,\nu}$ (gauge invariance). This does not affect any gradient $\nabla f_{\mu,\nu}(x)$ or $\nabla g_{\mu,\nu}(x)$, which is what we ultimately use.

\subsection{Barycentric projection map}

Let $\pi^\varepsilon_{\mu,\nu}$ be the optimal entropic coupling between $\mu$ and $\nu$, which can be disintegrated into a marginal and conditional distribution as follows :
\[
\pi^\varepsilon_{\mu,\nu}(dx,dy)=\mu(dx)\,\pi^\varepsilon_{\mu,\nu}(dy\mid x).
\]
Define the barycentric projection (conditional mean) map:
\[
T^\varepsilon_{\mu\to\nu}(x)
\coloneqq
\int_\Omega y\,\pi^\varepsilon_{\mu,\nu}(dy\mid x).
\]
This is defined $\mu$-a.e. and takes values in $\mathrm{conv}(\Omega)\subset\mathbb{R}^d$.

\subsection{Computing gradient of Schrödinger potentials explicitly for quadratic costs}\label{sec: Computing gradient of Schrödinger potentials explicitly for quadratic costs}

Recall (from the Schrödinger system) that
\[
f_{\mu,\nu}(x)=
-\varepsilon\log\int_\Omega
\exp\Big(\frac{g_{\mu,\nu}(y)-c(x,y)}{\varepsilon}\Big)\,d\nu(y)
\quad (\text{up to constant}).
\]
Differentiate with respect to $x$. Denote
\[
Z(x)\coloneqq\int_\Omega
\exp\Big(\frac{g_{\mu.\nu}(y)-c(x,y)}{\varepsilon}\Big)\,d\nu(y).
\]
Then $f_{\mu,\nu}(x)=-\varepsilon\log Z(x)$, so
\[
\nabla f_{\mu,\nu}(x)=
-\varepsilon \frac{1}{Z(x)}\nabla Z(x).
\]
Compute $\nabla Z(x)$:
\[
\nabla Z(x)
=\int_\Omega
\exp\Big(\frac{g_{\mu,\nu}(y)-c(x,y)}{\varepsilon}\Big)\cdot
\frac{1}{\varepsilon}\big(-\nabla c(x,y)\big)\,d\nu(y).
\]
Therefore
\[
\begin{aligned}
\nabla f_{\mu,\nu}(x)=&-\varepsilon \frac{1}{Z(x)}
\int
\exp\Big(\frac{g_{\mu,\nu}(y)-c(x,y)}{\varepsilon}\Big)\cdot
\frac{1}{\varepsilon}\big(-\nabla c(x,y)\big)\,d\nu(y)\\
=&\frac{1}{Z(x)}
\int
\exp\Big(\frac{g_{\mu,\nu}(y)-c(x,y)}{\varepsilon}\Big)\,\nabla c(x,y)\,d\nu(y).
\end{aligned}
\]
But the conditional distribution $\pi^\varepsilon(dy\mid x)$ has density proportional to $\exp((g_{\mu,\nu}(y)-c(x,y))/\varepsilon),d\nu(y)$. Hence
\[
\nabla f_{\mu,\nu}(x)=\int \nabla c(x,y)\,\pi^\varepsilon_{\mu,\nu}(dy\mid x).
\]
Now plug $c(x,y)=\frac12\|x-y\|_2^2$, so $\nabla c(x,y)=x-y$. Then
\[
\nabla f_{\mu,\nu}(x)=\int (x-y)\,\pi^\varepsilon_{\mu,\nu}(dy\mid x)=x-\int y\,\pi^\varepsilon_{\mu,\nu}(dy\mid x)=x - T^\varepsilon_{\mu\to\nu}(x).
\]
Thus we have the fundamental identity:
\[
\nabla f_{\mu,\nu}(x)=x-T^\varepsilon_{\mu\to\nu}(x)
\quad\text{for }c(x,y)=\tfrac12\|x-y\|_2^2.
\]
This is precisely why barycentric projections give a transport-map-like representation of entropic OT gradients.

Similarly, we have that $\nabla g_{\mu,\nu}(y) = y - T_{\nu \to \mu}^{^\varepsilon}(y)$. In particular, we have that
\begin{equation}\label{eq: idntity involving self Schordinger potentials}
    \frac{1}{2} \left(\nabla f_{\xi,\xi}(x) + \nabla g_{\xi,\xi}(x)\right) = x - T_{\xi \to \xi}^{^\varepsilon}(x).
\end{equation}

\subsection{Additional lemmas}\label{sec: Additional lemmas}

\begin{lemma}[One step descent of softmin energy]\label{One step descent of energy}
Let Assumptions \ref{Ass: margin sep in Sinkhorn} and \ref{Ass: L-smoothness of energy functional E in SHK geometry} hold. Let $\xi^{+}=\Phi_\eta(\xi)=\operatorname{Ret}_{\xi}(-\eta \operatorname{grad}_{\operatorname{SHK}}E(\xi))$. Then
$$
E\left(\xi^{+}\right) \leq E(\xi)-\eta\left(1-\frac{L \eta}{2}\right)\|\operatorname{grad}_{\operatorname{SHK}}E(\xi)\|_{\operatorname{SHK},\xi}^2 .
$$
In particular, if $0<\eta \leq \frac{1}{L}$, then
$$
E\left(\xi^{+}\right) \leq E(\xi)-\frac{\eta}{2}\|\operatorname{grad}_{\operatorname{SHK}} E(\xi)\|_{\operatorname{SHK},\xi}^2,
$$
so $E$ strictly decreases unless $\operatorname{grad}_{\operatorname{SHK}} E(\xi)=(0,0)$.
\end{lemma}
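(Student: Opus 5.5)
This is a direct application of Assumption \ref{Ass: L-smoothness of energy functional E in SHK geometry} with a particular tangent direction. Set $g \coloneqq \operatorname{grad}_{\operatorname{SHK}}E(\xi)$ and take the tangent vector $(r,v) \coloneqq -g$ at $\xi$. For this direction, the retraction $\operatorname{Ret}_\xi(\eta(r,v)) = \operatorname{Ret}_\xi(-\eta g)$ is exactly $\xi^{+}=\Phi_\eta(\xi)$. Implicitly we need $\xi\in B_i(r)$ and $\eta$ small enough that the retraction is well-defined. I would state this as the standing hypothesis of the lemma, in the same way Assumption \ref{Ass: L-smoothness of energy functional E in SHK geometry} is phrased. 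Where concrete step sizes are needed, Lemma \ref{uniform retraction domain and local metric upper bound} supplies $\eta<\etaretarg{i}$.

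Plugging into \eqref{eq: L-smoothness of energy functional E in SHK geometry}, I would use $\langle g,-g\rangle_{\operatorname{SHK},\xi}=-\|g\|^2_{\operatorname{SHK},\xi}$ and $\|-g\|^2_{\operatorname{SHK},\xi}=\|g\|^2_{\operatorname{SHK},\xi}$. This gives
\[
E(\xi^{+})\le E(\xi)-\eta\|g\|^2_{\operatorname{SHK},\xi}+\frac{L\eta^2}{2}\|g\|^2_{\operatorname{SHK},\xi}
=E(\xi)-\eta\Bigl(1-\frac{L\eta}{2}\Bigr)\|g\|^2_{\operatorname{SHK},\xi}.
\]
Note that the inner product appearing in the assumption is the Riemannian one, $g_{(a,x)}$. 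By the defining property of the Riemannian gradient derived in the appendix, this pairing is exactly $dE(\xi)[-g]$, so the sign computation is legitimate.

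For the second claim, $0<\eta\le 1/L$ gives $1-L\eta/2\ge 1/2$, and hence $E(\xi^{+})\le E(\xi)-\frac{\eta}{2}\|g\|^2_{\operatorname{SHK},\xi}$. Strict decrease follows because the SHK norm is positive definite on the tangent space: the weights satisfy $a_m>\amin>0$, so $\|(\delta a,\delta x)\|^2_{(a,x)}=\sum_m\bigl(\lambda^2\delta a_m^2/a_m+a_m\|\delta x_m\|^2\bigr)$ vanishes only when $(\delta a,\delta x)=(0,0)$.

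There is no real obstacle here. The only care point is bookkeeping: the assumption's notation switches between $(w,v)$ and $(r,v)$, and one must check that $-\eta g$ lies in the retraction domain $U_{(a,x)}$. I would handle the latter by citing the well-definedness part of Lemma \ref{uniform retraction domain and local metric upper bound}. Assumption \ref{Ass: margin sep in Sinkhorn} is used only to fix the basin radius $r$ on which smoothness holds.
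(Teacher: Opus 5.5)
Your proposal is correct and follows the paper's proof: substitute $(r,v)=-\operatorname{grad}_{\operatorname{SHK}}E(\xi)$ into the retraction $L$-smoothness inequality of Assumption~\ref{Ass: L-smoothness of energy functional E in SHK geometry}, then use $1-L\eta/2\ge 1/2$ for $0<\eta\le 1/L$. Your additional remarks spell out points the paper leaves implicit: that $\xi\in B_i(r)$ and well-definedness of the retraction are standing hypotheses, and that strict decrease relies on positive definiteness of the SHK metric, since $a_m>\amin>0$.
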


\begin{proof}
Using Equation \ref{eq: L-smoothness of energy functional E in SHK geometry}, with $(r,v) = - \operatorname{grad}_{\operatorname{SHK}} E(\xi)$, we have that
\[
\begin{aligned}
E\left(\xi^{+}\right) =& E\left(\operatorname{Ret}_{\xi}(- \operatorname{grad}_{\operatorname{SHK}} E(\xi)\right)\\
\leq & E(\xi)-\eta\langle\operatorname{grad}_{\operatorname{SHK}} E(\xi), \operatorname{grad}_{\operatorname{SHK}} E(\xi)\rangle_{\operatorname{SHK},\xi}+\frac{L \eta^2}{2}\| \operatorname{grad}_{\operatorname{SHK}} E(\xi)\|_{\operatorname{SHK},\xi}^2\\
=& E(\xi)-\eta\left(1-\frac{L \eta}{2}\right)\|\operatorname{grad}_{\operatorname{SHK}}E(\xi)\|_{\operatorname{SHK},\xi}^2.
\end{aligned}
\]

If $\eta \leq \frac{1}{L}$, then $1-\frac{L \eta}{2} \geq \frac{1}{2}$, giving the stated bound.
\end{proof}

\begin{lemma}[Energy gap contraction]\label{Energy gap contraction}
Let Assumptions \ref{Ass: margin sep in Sinkhorn}, \ref{Ass: L-smoothness of energy functional E in SHK geometry} and \ref{Ass: PL inequality in local basin} hold and choose $0<\eta \leq \min\left\{\frac{1}{L},\frac{1}{\mu}\right\}$. Then for $i=1,\dots,N$, if $\xi^{(0)} \in B_i(r)$, $\xi^{(k+1)}=\Phi_\eta\left(\xi^{(k)}\right)$ and $\xi^{(k)} \in B_i(r)$ for $k \in \mathbb{N}$, we have

$$
E\left(\xi^{(k+1)}\right)-E_i^{*}(r) \leq\left(1-\mu \eta\right)\left(E\left(\xi^{(k)}\right)-E_i^{*}(r)\right) .
$$

Hence, for any $k \in \mathbb{N}$,

$$
E\left(\xi^{(k)}\right)-E_i^{*}(r)\leq\left(1-\mu \eta\right)^k\left(E\left(\xi^{(0)}\right)-E_i^{*}(r)\right)
$$
and consequently, $\lim_{k \to \infty} E(\xi^{(k)})= E_i^{*}(r)$.

Further, for each $k \in \mathbb{N}$,

$$
\left\|\operatorname{grad}_{\operatorname{SHK}} E\left(\xi^{(k)}\right)\right\|_{\operatorname{SHK}, \xi^{(k)}}^2 \leq \frac{2}{\eta}\left(E\left(\xi^{(k)}\right)-E\left(\xi^{(k+1)}\right)\right) .
$$

Consequently, for each $k \in \mathbb{N}$,
$$
\left\|\operatorname{grad}_{\operatorname{SHK}} E\left(\xi^{(k)}\right)\right\|_{\operatorname{SHK}, \xi^{(k)}} \leq \sqrt{\frac{2}{\eta}\left(E\left(\xi^{(k)}\right)-E_i^{*}(r)\right)} \leq \sqrt{\frac{2 \left(E\left(\xi^{(0)}\right)-E_i^{*}(r)\right)}{\eta}} (1-\eta \mu)^\frac{k}{2} .
$$
    
\end{lemma}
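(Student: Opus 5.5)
The plan is the standard Polyak--Łojasiewicz argument, carried out step by step along the iterates, which stay in the basin by hypothesis.

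\textbf{Step 1 (one-step contraction).} Fix $k$ and use $\xi^{(k)} \in B_i(r)$ together with $\eta \le 1/L$. Lemma \ref{One step descent of energy} then gives
$$E(\xi^{(k+1)}) \le E(\xi^{(k)}) - \frac{\eta}{2}\|\operatorname{grad}_{\operatorname{SHK}}E(\xi^{(k)})\|^2_{\operatorname{SHK},\xi^{(k)}}.$$
Subtract $E_i^*(r)$ from both sides. Because $\xi^{(k)} \in B_i(r)$, Assumption \ref{Ass: PL inequality in local basin} gives $\tfrac12\|\operatorname{grad}_{\operatorname{SHK}}E(\xi^{(k)})\|^2 \ge \mu(E(\xi^{(k)})-E_i^*(r))$. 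Substituting this bound for the squared gradient norm yields $E(\xi^{(k+1)})-E_i^*(r) \le (1-\mu\eta)(E(\xi^{(k)})-E_i^*(r))$.

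The condition $\eta \le 1/\mu$ guarantees $1-\mu\eta \in [0,1)$, so the factor is a genuine contraction. The energy gap is also nonnegative: by hypothesis $\xi^{(k)} \in B_i(r)$, and $E_i^*(r)$ is the infimum over $B_i(r)$.

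\textbf{Step 2 (iterate to the geometric rate).} Induct on $k$ using Step 1 to obtain $E(\xi^{(k)})-E_i^*(r) \le (1-\mu\eta)^k(E(\xi^{(0)})-E_i^*(r))$. Since $1-\mu\eta<1$, the right side tends to $0$. Combined with $E(\xi^{(k)}) \ge E_i^*(r)$, this gives $E(\xi^{(k)}) \to E_i^*(r)$.

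\textbf{Step 3 (gradient bounds).} Rearrange the descent inequality of Step 1 to get $\|\operatorname{grad}_{\operatorname{SHK}}E(\xi^{(k)})\|^2 \le \frac{2}{\eta}(E(\xi^{(k)})-E(\xi^{(k+1)}))$. Next use $E(\xi^{(k+1)}) \ge E_i^*(r)$, which holds because $\xi^{(k+1)} \in B_i(r)$; this replaces $E(\xi^{(k+1)})$ by $E_i^*(r)$ and bounds the right side by $\frac{2}{\eta}(E(\xi^{(k)})-E_i^*(r))$. Take square roots and insert the rate from Step 2; the square root of $(1-\mu\eta)^k$ produces the factor $(1-\eta\mu)^{k/2}$.

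The only delicate point is bookkeeping rather than analysis. Both the PL inequality and the smoothness-based descent lemma must be applied at points lying in $B_i(r)$. That is exactly what the hypothesis $\xi^{(k)}\in B_i(r)$ for all $k$ supplies, and it is also what makes the energy gaps nonnegative. In addition, the retraction must be well-defined at each step; I would take this as part of the admissibility of $\eta$ in Assumption \ref{Ass: L-smoothness of energy functional E in SHK geometry}.
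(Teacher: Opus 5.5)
Your proposal is correct and follows the paper's proof essentially step for step. Like the paper, you combine the one-step descent lemma with the PL inequality at $\xi^{(k)}\in B_i(r)$ to get the contraction, iterate it, and then rearrange the descent inequality, using $E(\xi^{(k+1)})\ge E_i^{*}(r)$ (from $\xi^{(k+1)}\in B_i(r)$) to obtain the gradient-norm bounds.
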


\begin{proof}
    Using Lemma \ref{One step descent of energy}, we have that, provided $\xi^{(k)} \in B_i(r)$,

$$
E\left(\xi^{(k+1)}\right) \leq E\left(\xi^{(k)}\right)-\frac{\eta}{2}\left\|\operatorname{grad}_{\operatorname{SHK}} E\left(\xi^{(k)}\right)\right\|_{\operatorname{SHK}, \xi^{(k)}}^2 .
$$

Subtract $E_i^{*}(r)$ from both sides:

$$
E\left(\xi^{(k+1)}\right)-E_i^{*}(r) \leq E\left(\xi^{(k)}\right)-E_i^{*}(r)-\frac{\eta}{2}\left\|\operatorname{grad}_{\operatorname{SHK}}  E\left(\xi^{(k)}\right)\right\|_{\operatorname{SHK}, \xi^{(k)}}^2 .
$$

Using Equation \ref{eq: PL inequality in local basin}, we have that

$$
\begin{aligned}
\frac{1}{2}\left\|\operatorname{grad}_{\operatorname{SHK}}  E\left(\xi^{(k)}\right)\right\|_{\operatorname{SHK}, \xi^{(k)}}^2 &\geq \mu\left(E\left(\xi^{(k)}\right)-E_i^{*}(r)\right) \\
\Longrightarrow \quad\left\|\operatorname{grad}_{\operatorname{SHK}}  E\left(\xi^{(k)}\right)\right\|_{\operatorname{SHK}, \xi^{(k)}}^2 &\geq 2 \mu\left(E\left(\xi^{(k)}\right)-E_i^{*}(r)\right) .
\end{aligned}
$$

Thus, provided $\xi^{(k)} \in B_i(r)$,

$$
E\left(\xi^{(k+1)}\right)-E_i^{*}(r) \leq E\left(\xi^{(k)}\right)-E_i^{*}(r)-\eta \mu\left(E\left(\xi^{(k)}\right)-E_i^{*}(r)\right)=\left(1-\mu \eta\right)\left(E\left(\xi^{(k)}\right)-E_i^{*}(r)\right) .
$$

Iterating the inequality, we obtain 
$$
0 \leq E\left(\xi^{(k)}\right)-E_i^{*}(r)\leq\left(1-\mu \eta\right)^k\left(E\left(\xi^{(0)}\right)-E_i^{*}(r)\right)
$$
and consequently $\lim_{k \to \infty} E(\xi^{(k)}) = E_i^{*}(r)$.

Using Lemma \ref{One step descent of energy}, since $0 < \eta \leq \frac{1}{L}$, we have that 
$$
E\left(\xi^{(k+1)}\right) \leq E\left(\xi^{(k)}\right)-\frac{\eta}{2}\left\|\operatorname{grad}_{\operatorname{SHK}} E\left(\xi^{(k)}\right)\right\|_{\operatorname{SHK}, \xi^{(k)}}^2
$$
which upon rearrangement gives
$$
\left\|\operatorname{grad}_{\operatorname{SHK}} E\left(\xi^{(k)}\right)\right\|_{\operatorname{SHK}, \xi^{(k)}}^2 \leq \frac{2}{\eta}\left(E\left(\xi^{(k)}\right)-E\left(\xi^{(k+1)}\right)\right) .
$$

Again, since we assume that $\xi^{(k+1)} \in B_i(r)$, we have that $E_i^{*}(r) \leq E\left(\xi^{(k+1)}\right)$. Consequently,
$$
E\left(\xi^{(k)}\right)-E\left(\xi^{(k+1)}\right) \leq E\left(\xi^{(k)}\right)-E_i^{*}(r) .
$$

This completes the proof.

\end{proof}

\begin{lemma}[Control on basin interference and softmin perturbation bounds of energy E inside local basin using separation margin]\label{Control on basin interference and softmin perturbation bounds of energy E inside local basin using separation margin}
    Let Assumption \ref{Ass: margin sep in Sinkhorn} hold. Define the Gibbs weights $w_i(\xi)$ corresponding to any fixed $\xi \in B_i(r)$ as in Equation \ref{softmax weights of energy E}. Then, we have that
    \begin{enumerate}
        \item The weights satisfy
        $$
        w_i(\xi) \geq \frac{1}{1+(N-1) e^{-\beta \Delta}}, \quad \sum_{j \neq i} w_j(\xi) \leq \frac{(N-1) e^{-\beta \Delta}}{1+(N-1) e^{-\beta \Delta}} .
        $$
        \item  The gap between $F_i(\xi)$ and $E(\xi)$ satisfies
        $$
        0 \leq F_i(\xi)-E(\xi) \leq \frac{1}{\beta} \log \left(1+(N-1) e^{-\beta \Delta}\right) \leq \frac{N-1}{\beta} e^{-\beta \Delta}.
        $$

    \end{enumerate}
\end{lemma}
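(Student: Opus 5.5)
The plan is to work directly with the log-sum-exp structure of $E$ and use Assumption \ref{Ass: margin sep in Sinkhorn} to control every competitor term $j\neq i$. Fix $\xi \in B_i(r)$ and write $F_j = F_j(\xi)$. The margin condition gives $F_j \ge F_i + \Delta$ for all $j \neq i$, so
$$
e^{-\beta F_j} \le e^{-\beta F_i} e^{-\beta \Delta}.
$$

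For part 1, write the Gibbs weight as
$$
w_i(\xi) = \Bigl(1 + \sum_{j\neq i} e^{-\beta(F_j - F_i)}\Bigr)^{-1}.
$$
Each exponent satisfies $F_j - F_i \ge \Delta$, so the sum is at most $(N-1)e^{-\beta\Delta}$. This gives the lower bound on $w_i(\xi)$. Since the weights sum to one, $\sum_{j\neq i} w_j(\xi) = 1 - w_i(\xi)$, and the lower bound on $w_i$ turns into the stated upper bound $\frac{(N-1)e^{-\beta\Delta}}{1+(N-1)e^{-\beta\Delta}}$.

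For part 2, factor $e^{-\beta F_i}$ out of the partition function:
$$
E(\xi) = F_i - \frac{1}{\beta}\log\Bigl(1 + \sum_{j\neq i} e^{-\beta(F_j-F_i)}\Bigr).
$$
The logarithm's argument is at least $1$, so $F_i - E(\xi) \ge 0$. Using the same bound on the sum as in part 1, $F_i - E(\xi) \le \frac{1}{\beta}\log(1+(N-1)e^{-\beta\Delta})$. The final inequality follows from $\log(1+u) \le u$ for $u \ge 0$.

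None of these steps is a real obstacle, since the whole argument is elementary softmax algebra. The only point needing care is that the nonnegativity $F_i - E(\xi) \ge 0$ holds for every $\xi$ and does not need the margin assumption. The margin assumption is used only for the upper bounds, and it must be applied with $\xi \in B_i(r)$ specifically, which is exactly the hypothesis of the lemma.
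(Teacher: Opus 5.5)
Your proposal is correct and follows the paper's proof essentially step for step. Both arguments bound the competitor sum $\sum_{j\neq i} e^{-\beta(F_j-F_i)}$ by $(N-1)e^{-\beta\Delta}$ using the margin, pass to $\sum_{j\neq i} w_j = 1-w_i$, and factor $e^{-\beta F_i}$ out of the partition function. They then finish with $\log(1+u)\le u$.
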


\begin{proof}
    Using Equation \ref{eq: margin sep in Sinkhorn}, we have that, for $\xi \in B_{i}(r)$, 
$$
\sum_{j \neq i} e^{-\beta F_j(\xi)}= e^{-\beta F_i(\xi)} \times \sum_{j \neq i} e^{- \beta\left(F_j(\xi)-F_i(\xi)\right)} \leq e^{-\beta F_i(\xi)} \times\sum_{j \neq i} e^{-\beta \Delta_{r}} \leq e^{-\beta F_i(\xi)} \times (N-1) e^{-\beta \Delta} .
$$

Therefore, we have that
$$
w_i(\xi) = \frac{e^{-\beta F_i(\xi)}}{e^{-\beta F_i(\xi)} + \sum_{j \neq i} e^{- \beta F_j(\xi)}} \geq \frac{1}{1+ (N-1)e^{-\beta \Delta}}
$$

Therefore, we have that $$
\sum_{j \neq i} w_j(\xi) = 1- w_i(\xi) \leq \frac{(N-1)e^{-\beta \Delta}}{1+ (N-1)e^{-\beta \Delta}}.
$$

Now, we have that

$$
E(\xi)=-\frac{1}{\beta} \log \left(e^{-\beta F_i(\xi)}\left[1+\sum_{j \neq i} e^{-\beta\left(F_j(\xi)-F_i(\xi)\right)}\right]\right)=F_i(\xi)-\frac{1}{\beta} \log \left(1+\sum_{j \neq i} e^{-\beta\left(F_j(\xi)-F_i(\xi)\right)}\right) .
$$

Since each $F_j(\xi)-F_i(\xi) \geq \Delta$,

$$
0 \leq \frac{1}{\beta} \log \left(1+\sum_{j \neq i} e^{-\beta\left(F_j-F_i\right)}\right) \leq \frac{1}{\beta} \log \left(1+(N-1) e^{-\beta \Delta}\right) .
$$

Finally, we use $\log (1+u) \leq u$ to get the last bound.
\end{proof}

\begin{lemma}[Expression of SHK gradient oof energy functional E in terms of SHK gradients of Sinkhorn divergence]\label{Expression of SHK gradient oof energy functional E in terms of SHK gradients of Sinkhorn divergence}
    $$
    \operatorname{grad}_{\operatorname{SHK}} E(\xi) = \sum_{j=1}^{N}w_j(\xi) \operatorname{grad}_{\operatorname{SHK}} F_j(\xi)
    $$
\end{lemma}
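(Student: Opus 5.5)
The identity is linearity of the Riesz representation applied to the chain rule for the softmin. I would work in particle coordinates on $\mathcal{M}_M$ with $\xi=\Xi(a,x)$. Write $F_j(\xi)=S_\varepsilon(\xi,X_j)$. From \eqref{eq:first-variation-sinkhorn} the first variation is
\[
\frac{\delta F_j}{\delta\xi}(\xi)=f_{\xi,X_j}-\tfrac12\bigl(f_{\xi,\xi}+g_{\xi,\xi}\bigr).
\]
From \eqref{eq:first-variation-energy}, $\frac{\delta E}{\delta\xi}(\xi)=\sum_j w_j(\xi)\frac{\delta F_j}{\delta\xi}(\xi)$ in $C(\Omega)/\mathbb{R}$. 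This follows because the Gibbs weights are exactly the derivatives of the log-sum-exp: $\partial_{s_j}\bigl(-\beta^{-1}\log\sum_k e^{-\beta s_k}\bigr)=w_j$.

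\textbf{Step 1: centered potentials.} Define the centered first variation of each $F_j$ by $u^{(j)}_\xi\coloneqq \frac{\delta F_j}{\delta\xi}(\xi)-\int \frac{\delta F_j}{\delta\xi}(\xi)\,d\xi$. Integration against $\xi$ is linear and $\sum_j w_j(\xi)=1$, so the additive constants combine consistently and
\[
u_\xi=\sum_j w_j(\xi)\,u^{(j)}_\xi
\]
pointwise on $\Omega$, with no residual gauge constant. Differentiating in $x$ then gives $\nabla u_\xi=\sum_j w_j(\xi)\nabla u^{(j)}_\xi$. Here $w_j(\xi)$ depends on $\xi$ but not on the spatial variable, so it passes through $\nabla$.

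\textbf{Step 2: the explicit gradient formula.} The derivation of $\operatorname{grad}_{\SHK}$ in Appendix~\ref{App:SHK flow} applies verbatim to any functional with a $C^1$ first variation, in particular to each $F_j$. It yields
\[
\operatorname{grad}_{\SHK}F_j(a,x)=\Bigl(\bigl(\tfrac{a_m}{\lambda^2}u^{(j)}_\xi(x_m)\bigr)_m,\ \bigl(\nabla u^{(j)}_\xi(x_m)\bigr)_m\Bigr).
\]
Substituting Step 1 into the formula $\operatorname{grad}_{\SHK}E=\bigl((\tfrac{a_m}{\lambda^2}u_m)_m,(\nabla u_m)_m\bigr)$, each component becomes the $w_j(\xi)$-weighted sum of the corresponding components of $\operatorname{grad}_{\SHK}F_j$. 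This gives the claim.

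\textbf{Alternative route.} One can avoid the explicit formula. The differential $dE(\xi)[\delta a,\delta x]=\sum_j w_j(\xi)\,dF_j(\xi)[\delta a,\delta x]$ follows from the chain rule. The Riesz map of the inner product $g_{(a,x)}$ is linear, and a convex combination of elements of $T_{(a,x)}\mathcal{M}_M$ still satisfies $\sum_m\delta a_m=0$. Uniqueness of the Riemannian gradient then gives the identity directly.

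\textbf{Main obstacle.} The only delicate point is justifying that each $F_j$ admits a $C^1$ first variation, so that its SHK gradient is well defined, and that the chain rule for $E=\operatorname{softmin}(F_1,\dots,F_N)$ holds along curves in $\mathcal{M}_M$. I would cite the smoothness of the Schrödinger potentials for the quadratic cost on bounded $\Omega$ from \cite{feydy2019interpolating,hardion2025gradient}, together with the barycentric formula \eqref{eq:grad-potential-bary}. With that in place, $t\mapsto F_j(\xi_t)$ is $C^1$, and the composition with the smooth log-sum-exp is handled by the ordinary chain rule.
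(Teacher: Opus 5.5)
Your proposal is correct and takes the same route as the paper, whose proof is the one-line remark that the result is obvious. You are filling in exactly the intended reasoning: the softmin chain rule gives $\frac{\delta E}{\delta\xi}=\sum_j w_j(\xi)\frac{\delta F_j}{\delta\xi}$, and both centering against $\xi$ and the Riesz representation defining $\operatorname{grad}_{\operatorname{SHK}}$ are linear, so the SHK gradients combine with the same Gibbs weights.
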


\begin{proof}
    Proof is obvious.
\end{proof}

\begin{lemma}[Bounding SHK distance in terms of SHK gradient norm along retraction curve]\label{Bounding SHK distance in terms of SHK gradient norm along retraction curve}
    Fix any $(a, x)$ and any tangent increment $(\delta a, \delta x) \in T_{(a, x)} \mathcal{M}_M$. Define the retraction curve

$$
\gamma(t)\coloneqq\operatorname{Ret}_{(a, x)}(t \delta a, t \delta x), \quad t \in[0,1] .
$$

Let $d_{\operatorname{SHK}}$ denote the Riemannian distance induced by the SHK metric on $\mathcal{M}_M = \aspace \times \locspace$. Then

$$
d_{\operatorname{SHK}}\left((a, x), \operatorname{Ret}_{(a, x)}(\delta a, \delta x)\right) \leq \operatorname{Length}(\gamma) \leq \min\left\{e^{\|\frac{\delta a}{a}\|_{\infty}},\frac{1}{\sqrt{\amin}}\right\}\|(\delta a, \delta x)\|_{\operatorname{SHK},(a, x)} .
$$

where $\operatorname{Length}(\gamma) \coloneqq \int_{0}^{1} \|\frac{d}{dt}\gamma(t)\|_{\operatorname{SHK},\gamma(t)} dt$ and $(\frac{\delta a}{a})_m\coloneqq \frac{\delta a_m}{a_m}$.
\end{lemma}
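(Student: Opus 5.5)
The plan is to compute the SHK speed of the retraction curve $\gamma$ explicitly in particle coordinates and bound it pointwise in $t$ by a multiple of $\|(\delta a,\delta x)\|_{\operatorname{SHK},(a,x)}$. The first inequality, $d_{\operatorname{SHK}}\le \operatorname{Length}(\gamma)$, is immediate from the definition of the Riemannian distance as an infimum of lengths of admissible curves. For this I will assume, as the statement implicitly does, that $(t\delta a,t\delta x)$ lies in the retraction domain $U_{(a,x)}$ for all $t\in[0,1]$. This holds because $U_{(a,x)}$ is star-shaped about $0$, since its defining constraints are sup-norm balls. Hence $\gamma$ is a smooth curve in $\mathcal{M}_M$ joining $(a,x)$ to $\operatorname{Ret}_{(a,x)}(\delta a,\delta x)$.

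For the speed, write $v_m\coloneqq \delta a_m/a_m$. The position component is $x_m(t)=x_m+t\delta x_m$, so $\dot x_m(t)=\delta x_m$. The weight component is $a_m(t)=a_m e^{tv_m}/Z(t)$ with $Z(t)=\sum_j a_j e^{tv_j}$. Differentiating gives the replicator form $\dot a_m(t)=a_m(t)\bigl(v_m-\bar v(t)\bigr)$, where $\bar v(t)\coloneqq\sum_j a_j(t)v_j$. Plugging this into the pulled-back metric yields
\[
\|\dot\gamma(t)\|^2_{\operatorname{SHK},\gamma(t)}=\sum_{m=1}^M a_m(t)\Bigl(\lambda^2\bigl(v_m-\bar v(t)\bigr)^2+\|\delta x_m\|^2\Bigr)\le \sum_{m=1}^M a_m(t)\bigl(\lambda^2 v_m^2+\|\delta x_m\|^2\bigr).
\]
The inequality holds because a weighted variance is at most the weighted second moment. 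I also note that $\|(\delta a,\delta x)\|^2_{\operatorname{SHK},(a,x)}=\sum_m a_m(\lambda^2v_m^2+\|\delta x_m\|^2)$. So it suffices to bound the ratios $a_m(t)/a_m$ uniformly in $m$.

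There are two ratio bounds, which give the two entries of the minimum. First, since $\sum_j a_j=1$, we have $Z(t)\ge e^{-t\|v\|_\infty}$, and hence $a_m(t)/a_m=e^{tv_m}/Z(t)\le e^{2t\|v\|_\infty}$. Second, $a_m(t)\le 1$ and $a_m>\amin$ give $a_m(t)/a_m\le 1/\amin$. Each bound holds for every $t$, so the speed is at most $\min\{e^{t\|v\|_\infty},1/\sqrt{\amin}\}\,\|(\delta a,\delta x)\|_{\operatorname{SHK},(a,x)}$. This is at most $\min\{e^{\|\delta a/a\|_\infty},1/\sqrt{\amin}\}\,\|(\delta a,\delta x)\|_{\operatorname{SHK},(a,x)}$ for all $t\in[0,1]$. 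Integrating over $[0,1]$ gives the claim.

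The main point needing care is the variance step. It relies on the metric at the moving base point $a(t)$ only seeing the centred log-derivative $v_m-\bar v(t)$, together with the tangent constraint $\sum_m a_m v_m=\sum_m\delta a_m=0$. That constraint is what makes the comparison with the norm at $(a,x)$ valid. The remaining steps are elementary.
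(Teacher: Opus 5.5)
Your proposal is correct and follows essentially the same route as the paper: the replicator form $\dot a_m(t)=a_m(t)\bigl(v_m-\bar v(t)\bigr)$, dropping the centring via variance $\le$ second moment, and then two ratio bounds that give the two entries of the minimum, namely $a_m(t)/a_m\le e^{2\|v\|_\infty}$ and $a_m(t)/a_m\le 1/\amin$ (the paper phrases the second one as a $\max_m$ step). One small correction to your closing remark: your chain never uses the tangent constraint $\sum_m a_m v_m=0$, since both the variance bound and the identity $\|(\delta a,\delta x)\|^2_{\operatorname{SHK},(a,x)}=\sum_m a_m(\lambda^2v_m^2+\|\delta x_m\|^2)$ hold for any $v$; the constraint would only matter if you sharpened $Z(t)\ge e^{-t\|v\|_\infty}$ to $Z(t)\ge 1$ via Jensen, which would improve the exponential factor to $e^{\|v\|_\infty/2}$.
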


\begin{proof}
    By definition of the Riemannian distance as the infimum of lengths over all curves connecting the points,

    $$
    d_{\operatorname{SHK}}(\gamma(0), \gamma(1)) \leq \operatorname{Length}(\gamma) .
    $$

    So it suffices to bound the length of $\gamma$. We have that
    $\gamma(t)=(a(t), x(t))$ with

    $$
    x(t)=\operatorname{Ret}_x^{\mathrm{pos}}(t\delta x)=x+t \delta x, \quad a(t)=\operatorname{Ret}_a^w(t \delta a) .
    $$

    Then, $x^{\prime}(t) \coloneqq \frac{d}{dt}x(t) = \delta x$. For the weights, introduce
    $$
    s_m\coloneqq\frac{\delta a_m}{a_m}
    $$
    Then, $s \in \mathbb{R}^M$ and $\sum_m a_m s_m=0$ since $\sum_m \delta a_m=0$. By the definition of $\operatorname{Ret}_a^w$, we have that
    $$
    a_m(t)=\frac{a_m e^{t s_m}}{Z(t)}, \quad Z(t)\coloneqq\sum_{\ell=1}^M a_{\ell} e^{t s_{\ell}} .
    $$
    Differentiating, we have that
    $$
    \frac{d}{d t} \log a_m(t)=s_m-\frac{Z^{\prime}(t)}{Z(t)}, \quad \frac{Z^{\prime}(t)}{Z(t)}=\sum_{\ell=1}^M a_{\ell}(t) s_{\ell}=: \bar{s}(t) .
    $$
    Thus
    $$
    a_m^{\prime}(t)\coloneqq \frac{d}{dt} a_m(t)=a_m(t)\left(s_m-\bar{s}(t)\right)
    $$
    The SHK gradient norm at $\gamma(t) = (a(t), x(t))$ is given by
    $$
    \|(u, v)\|_{\operatorname{SHK},(a(t), x(t))}^2=\sum_{m=1}^M\left(\frac{\lambda^2}{a_m(t)} u_m^2+a_m(t)\left\|v_m\right\|^2\right) .
    $$
    Applying it to $(u, v)=\frac{d}{dt}\gamma(t)\coloneqq \frac{d}{dt}\gamma(t) = \left(a^{\prime}(t), x^{\prime}(t)\right)$, we have that
    $$
    \begin{aligned}
    \left\|\frac{d}{dt}\gamma(t)\right\|_{\operatorname{SHK},(a(t), x(t))}^2 =& \sum_{m=1}^M \frac{\lambda^2}{a_m(t)}\left(a_m^{\prime}(t)\right)^2 + \sum_{m=1}^M a_m(t)\left\|x_m^{\prime}(t)\right\|^2\\
    =&\lambda^2 \sum_{m=1}^M a_m(t)\left(s_m-\bar{s}(t)\right)^2 + \sum_{m=1}^M a_m(t)\left\|\delta x_m\right\|^2\\
    =&\lambda^2 \left[\sum_{m=1}^M a_m(t)s_m^2 - 2\bar{s}(t)\sum_{m=1}^{M}a_m(t)s_m + \left(\bar{s}(t)\right)^2\right] + \sum_{m=1}^M a_m(t)\left\|\delta x_m\right\|^2\\
    =&\lambda^2 \left[\sum_{m=1}^M a_m(t)s_m^2- \left(\bar{s}(t)\right)^2\right] + \sum_{m=1}^M a_m(t)\left\|\delta x_m\right\|^2\\
    \leq & \lambda^2 \sum_{m=1}^M a_m(t) s_m^2 + \sum_{m=1}^M a_m(t)\left\|\delta x_m\right\|^2.
    \end{aligned}
    $$
    Let $S\coloneqq\|s\|_{\infty}=\|\delta a / a\|_{\infty}$. Then for all $t \in[0,1]$,
    $$
    e^{-S} \leq e^{t s_m} \leq e^S, \quad e^{-S} \leq Z(t)=\sum_{\ell} a_{\ell} e^{t s_{\ell}} \leq e^S .
    $$
    Hence
    $$
    \frac{a_m e^{-S}}{e^S} \leq a_m(t)=\frac{a_m e^{t s_m}}{Z(t)} \leq \frac{a_m e^S}{e^{-S}},
    $$
    i.e.
    $$
    a_m e^{-2 S} \leq a_m(t) \leq a_m e^{2 S}.
    $$
    Using $a_m(t) \leq a_m e^{2 S}$,
    $$
    \left\|\frac{d}{dt}\gamma(t)\right\|_{\operatorname{SHK},(a(t), x(t))}^2 \leq e^{2 S}\left(\lambda^2 \sum_{m=1}^M a_m s_m^2+\sum_{m=1}^M a_m\left\|\delta x_m\right\|^2\right)=e^{2 S}\|(\delta a, \delta x)\|_{\operatorname{SHK},(a, x)}^2 .
    $$
    Taking square roots gives, for all $t \in [0,1]$,
    $$
    \left\|\frac{d}{dt}\gamma(t)\right\|_{(a(t), x(t))} \leq e^S\|(\delta a, \delta x)\|_{\operatorname{SHK},(a, x)} .
    $$
    Therefore,
    $$
    \operatorname{Length}(\gamma)=\int_0^1\left\|\frac{d}{dt}\gamma(t)\right\|_{\operatorname{SHK},(a(t), x(t))} d t \leq \int_0^1 e^S\|(\delta a, \delta x)\|_{\operatorname{SHK},(a, x)} d t=e^S\|(\delta a, \delta x)\|_{\operatorname{SHK},(a, x)}.
    $$
    Now, note that.
    $$
    \sum_{m=1}^M a_m(t)\left\|\delta x_m\right\|^2 \leq \max _m\left\|\delta x_m\right\|^2 \leq \frac{1}{\amin} \sum_{m=1}^{M} a_m\left\|\delta x_m\right\|^2 
    $$
    and
    $$
    \begin{aligned}
        \lambda^2 \sum_{m=1}^M a_m(t)\left(s_m-\bar{s}(t)\right)^2 \leq & \lambda^2 \sum_{m=1}^M a_m(t)\left(s_m-\bar{s}(0)\right)^2 \\
        \leq & \lambda^2 \max_{m} \left(s_m-\bar{s}(0)\right)^2 \\
        \leq & \frac{\lambda^2}{\amin}\sum_{m=1}^{M} a_m \left(s_m-\bar{s}(0)\right)^2 = \frac{\lambda^2}{\amin}\sum_{m=1}^{M} \frac{(\delta a_m)^{2}}{a_m}.
    \end{aligned}
    $$
    Combining, we have that, 
    $$
        \left\|\frac{d}{dt}\gamma(t)\right\|_{\operatorname{SHK},(a(t), x(t))} \leq \frac{1}{\sqrt{\amin}} \|(\delta a, \delta x)\|_{\operatorname{SHK},(a, x)}
    $$
    and hence,
    $$
        \operatorname{Length}(\gamma) \leq  \frac{1}{\sqrt{\amin}} \|(\delta a, \delta x)\|_{\operatorname{SHK},(a, x)} .
    $$
\end{proof}

\begin{lemma}[Open bounded convex domain implies boundedness of SHK gradient of Sinkhorn divergence]\label{Open bounded convex domain implies boundedness of SHK gradient of Sinkhorn divergence} Assume that the domain $\Omega$ is open, bounded and convex with diameter $D\coloneqq\sup _{x, y \in \Omega}\|x-y\| < \infty$. Then, for $\xi,\nu \in \ProbM$, we have that
\[
\| \operatorname{grad}_{\operatorname{SHK}  }S_{\varepsilon}(\xi,\nu)\|_{\operatorname{SHK},\xi} \leq G.
\]
where $G \coloneqq D \sqrt{1+\frac{D^2}{\lambda^2}}$. In particular, for $i=1,\dots,N$, we consequently obtain 
\[
\|\operatorname{grad}_{\operatorname{SHK}  }F_i(\xi)\|_{\operatorname{SHK},\xi} \leq G.
\]

Further, for any fixed $\xi,\xi^{\prime},\nu \in \ProbM$, the following holds
$$
\left|S_{\varepsilon}(\xi,\nu)-S_{\varepsilon}(\xi^{\prime},\nu)\right| \leq G d_{\operatorname{SHK}}\left(\xi, \xi^{\prime}\right)
$$

where $d_{\operatorname{SHK}}$ is the SHK metric restricted to $\ProbM$. Therefore, $S_{\epsilon}(\cdot,\nu)$ is $G$-Lipschitz continuous in the SHK metric.
    
\end{lemma}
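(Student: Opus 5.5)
The plan is to compute the SHK gradient of $S_\varepsilon(\cdot,\nu)$ at $\xi=\Xi(a,x)$ explicitly, bound its two components separately, and then obtain the Lipschitz statement by integrating along curves.

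\textbf{Step 1 (gradient formula).} By the Riemannian gradient computation in the appendix applied to $F=S_\varepsilon(\cdot,\nu)$ in place of $E$, we have
\[
\operatorname{grad}_{\operatorname{SHK}} S_\varepsilon(\xi,\nu)=\Bigl(\bigl(\tfrac{a_m}{\lambda^2}\bar\phi(x_m)\bigr)_m,\bigl(\nabla\phi(x_m)\bigr)_m\Bigr),
\]
where $\phi=f_{\xi,\nu}-\tfrac12(f_{\xi,\xi}+g_{\xi,\xi})$ by \eqref{eq:first-variation-sinkhorn}, and $\bar\phi=\phi-\int\phi\,d\xi$. The squared norm is therefore
\[
\sum_m a_m\Bigl(\|\nabla\phi(x_m)\|^2+\tfrac{1}{\lambda^2}\bar\phi(x_m)^2\Bigr).
\]
It suffices to show $\sup\|\nabla\phi\|\le D$ and $\sup|\bar\phi|\le D^2$. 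Since $\sum_m a_m=1$, these give the norm bound $\sqrt{D^2+D^4/\lambda^2}=G$.

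\textbf{Step 2 (gradient bound).} Using \eqref{eq:grad-potential-bary} and \eqref{eq: idntity involving self Schordinger potentials}, we get
\[
\nabla\phi(x)=T^\varepsilon_{\xi\to\xi}(x)-T^\varepsilon_{\xi\to\nu}(x).
\]
This is a difference of two conditional means of probability measures supported in $\Omega$. Both lie in $\overline{\mathrm{conv}}(\Omega)=\bar\Omega$ by convexity, so $\|\nabla\phi\|\le D$. These are the bounds labelled \eqref{Bound on norm of gradient of first variation}.

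\textbf{Step 3 (oscillation bound).} Since $\Omega$ is convex, for $x,y\in\Omega$ the segment between them stays in $\Omega$. The mean value theorem then gives $|\phi(x)-\phi(y)|\le D\|x-y\|\le D^2$, which is \eqref{Bound on norm of oscillation of first variation}. Because $\bar\phi(x)=\int(\phi(x)-\phi(y))\,d\xi(y)$, it follows that $|\bar\phi|\le D^2$.

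The main subtlety is that the potentials are only determined $\mu$-a.e. To apply the mean value theorem we must use their canonical smooth extensions to all of $\Omega$ via the soft $c$-transform, which is exactly what the formula in Section~\ref{sec: Computing gradient of Schrödinger potentials explicitly for quadratic costs} does. I would note explicitly that the barycentric formula for $\nabla\phi$ holds for every $x\in\Omega$ under this extension, not only $\xi$-a.e.

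\textbf{Step 4 (Lipschitz bound).} Take any piecewise $C^1$ curve $\zeta:[0,1]\to\ProbM$ from $\xi'$ to $\xi$. By the chain rule \eqref{eq: differential of E composed with measure mapping using centered first variation} and the definition of the Riemannian gradient,
\[
\tfrac{d}{ds}S_\varepsilon(\zeta(s),\nu)=\bigl\langle\operatorname{grad}_{\operatorname{SHK}}S_\varepsilon(\zeta(s),\nu),\dot\zeta(s)\bigr\rangle_{\operatorname{SHK},\zeta(s)}.
\]
Cauchy–Schwarz bounds this in absolute value by $G\|\dot\zeta(s)\|_{\operatorname{SHK},\zeta(s)}$. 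Integrating in $s$ gives $|S_\varepsilon(\xi,\nu)-S_\varepsilon(\xi',\nu)|\le G\,\mathrm{Length}(\zeta)$. Taking the infimum over curves yields $G\,d_{\operatorname{SHK}}(\xi,\xi')$.

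This last step requires the differentiability of $\xi\mapsto S_\varepsilon(\xi,\nu)$ along particle curves, which I take from the first-variation formula cited from \cite{hardion2025gradient}. The specialization to $F_i$ is the case $\nu=X_i$.
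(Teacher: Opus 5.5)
Your proposal is correct and follows the paper's proof essentially step for step. You use the same ingredients:
\begin{itemize}
\item the explicit SHK gradient in particle coordinates;
\item the barycentric-difference bound $\|\nabla\phi\|\le D$;
\item the mean-value oscillation bound giving $|\bar\phi|\le D^2$, which yields $G=\sqrt{D^2+D^4/\lambda^2}$;
\item the chain rule, Cauchy--Schwarz and an infimum over curves for the Lipschitz claim.
\end{itemize}
Your remarks on extending the potentials to all of $\Omega$ via the soft $c$-transform, and on working in $\bar\Omega$, make explicit two points the paper leaves implicit.
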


\begin{proof}
    Note that, for any fixed $\nu \in \ProbM$ and $\xi \in \ProbM$, we have that
    \[
    \phi(x) \coloneqq \left(\frac{\delta S_{\varepsilon}(\xi,\nu)}{\delta\xi}\right)(x)
=f_{\xi,\nu}(x)-\tfrac12\bigl(f_{\xi,\xi}(x)+g_{\xi,\xi}(x)\bigr)
    \]
    where $w_i(\xi)$ are defined as in Equation \ref{softmax weights of energy E}.

    Note that $\operatorname{grad}_{\operatorname{SHK}  }S_{\varepsilon}(\xi,\nu)(x) = (r(x),v(x))$ where $v(x) = \nabla \phi(x)$ and $r(x) = \frac{1}{\lambda^2} \left(\phi(x) - \int \phi(y) d\xi(y)\right)$.
    
    Then,
    $$
    \left\|\operatorname{grad}_{\operatorname{SHK}} S_{\varepsilon}(\xi,\nu)\right\|_{\operatorname{SHK},\xi}^2=\int_{\Omega}\|v(x)\|^2 d \xi(x)+\lambda^2 \int_{\Omega} \left[r(x)\right]^2 d \xi(x).
    $$
    Note that,
    $$
        v(x) =  \nabla \left(f_{\xi,\nu}(x) - \frac{1}{2} \left( f_{\xi,\xi}(x) +  g_{\xi,\xi}(x)\right)\right) = - \left[T_{\xi \rightarrow \nu}^\epsilon(x)-T_{\xi \rightarrow \xi}^\epsilon(x) \right].
    $$
    Each barycentric map $T_{\xi \rightarrow \nu}^\epsilon(x)$ lies in $\Omega$, since it is a conditional expectation defined over the domain $\Omega$, and similarly $T_{\xi \rightarrow \xi}^\epsilon(x) \in \Omega$ as well. Hence
    \begin{equation}\label{Bound on norm of gradient of first variation}
    \|v(x)\|= \|\nabla \phi(x) \|=\left\|T_{\xi \rightarrow \nu}^\epsilon(x)-T_{\xi \rightarrow \xi}^\epsilon(x)\right\| \leq D .
    \end{equation}
    Since $\Omega$ is convex and open, every line segment joining two points of $\Omega$ stays in $\Omega$. By the mean value theorem and the bound $\|\nabla \phi(x)\| \leq D$, for any $x, y \in \Omega$ ,
    \begin{equation}\label{Bound on norm of oscillation of first variation}
    |\phi(x)-\phi(y)| \leq \sup _{z \in[x, y]}\|\nabla \phi(z)\|_2\|x-y\|_2 \leq D\|x-y\|_2 \leq D^2 .
    \end{equation}
    Therefore, we have that, 
    $$
    |r(x)| = \frac{1}{\lambda^2}\left|\int \left[\phi(x) -  \phi(y)\right] d\xi(y) \right| \leq \frac{1}{\lambda^2}\int\left| \phi(x) -  \phi(y) \right| d\xi(y)\leq \frac{D^2}{\lambda^2}.
    $$
    Therefore, we have that
    $$
    \left\|\operatorname{grad}_{\operatorname{SHK}} S_{\varepsilon}(\xi,\nu)\right\|_{\operatorname{SHK},\xi}^2 \leq D^2 + \lambda ^2 \times \frac{D^4}{\lambda^4} \leq D^2 +  \frac{D^4}{\lambda^2}.
    $$
    In particular, we can choose $\nu = X_i$. 
    
    For the proof of the Lipschitz continuity of $F_{\nu}(\cdot) \coloneqq S_{\varepsilon}(\cdot,\nu)$, consider $\gamma: [0,1] \to \ProbM$ to be an absolutely continuous path in $\ProbM$ between $\xi$ and $\xi^{\prime}$. Then, by the chain rule,
    $$
    \frac{d}{d t} F_\nu(\gamma(t))=\left\langle\operatorname{grad} F_\nu(\gamma(t)), \frac{d}{dt}\gamma(t)\right\rangle_{\operatorname{SHK}, \gamma(t)}
    $$
    Apply Cauchy-Schwarz and the gradient bound:
    $$
    \left|\frac{d}{d t} F_\nu(\gamma(t))\right| \leq\left\|\operatorname{grad}_{\operatorname{SHK}} F_\nu(\gamma(t))\right\|_{\operatorname{SHK}, \gamma(t)}\left\|\frac{d}{dt}\gamma(t)\right\|_{\operatorname{SHK}, \gamma(t)} \leq G \left\|\frac{d}{dt}\gamma(t)\right\|_{\operatorname{SHK}, \gamma(t)}
    $$
    Integrating from 0 to 1, we have that,
    $$
    \left|F_\nu\left(\xi^{\prime}\right)-F_\nu(\xi)\right| \leq G \int_0^1\left\|\frac{d}{dt}\gamma(t)\right\|_{\mathrm{SHK}, \gamma(t)} d t=G \text { Length }(\gamma)
    $$
    Taking the infimum over all curves $\gamma$ from $\xi$ to $\xi^{\prime}$ gives
    $$
    \left|F_\nu\left(\xi^{\prime}\right)-F_\nu(\xi)\right| \leq G d_{\SHK}\left(\xi, \xi^{\prime}\right)
    $$
\end{proof}

\begin{lemma}[Softmin function sensitivity]\label{softmin function sensitivity}

Define $\softmin_\beta\left(z_1, \ldots, z_N\right)\coloneqq -\frac{1}{\beta} \log \left(\sum_{i=1}^N \exp\left(-\beta z_i\right)\right)$. Then, we have that
$$
\left|\softmin_\beta(z)-\softmin_\beta\left(z^{\prime}\right)\right| \leq\left\|z-z^{\prime}\right\|_{\infty} \quad \forall z, z^{\prime} \in \R^N .
$$
\end{lemma}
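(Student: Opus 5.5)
The softmin is a smooth function of $z$, so the natural route is the mean value theorem combined with the fact that its gradient is a probability vector. Write $h(z) \coloneqq \softmin_\beta(z) = -\frac{1}{\beta}\log\sum_{i=1}^N e^{-\beta z_i}$, which is $C^\infty$ on $\R^N$. Differentiating gives
\[
\frac{\partial h}{\partial z_i}(z) = \frac{e^{-\beta z_i}}{\sum_{j=1}^N e^{-\beta z_j}} \eqqcolon p_i(z),
\]
so $\nabla h(z) = p(z)$ lies in the probability simplex: $p_i(z) > 0$ and $\sum_i p_i(z) = 1$. These are exactly the Gibbs weights of \eqref{softmax weights of energy E} with $z_i = S_\varepsilon(\xi, X_i)$, which I will note for later use with $E$.

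Next, fix $z, z' \in \R^N$ and set $g(t) \coloneqq h(z' + t(z - z'))$ for $t \in [0,1]$. By the fundamental theorem of calculus,
\[
h(z) - h(z') = \int_0^1 \bigl\langle p(z' + t(z-z')),\, z - z' \bigr\rangle\, dt .
\]
Hölder's inequality with the $\ell^1/\ell^\infty$ pairing bounds each integrand by $\|p(\cdot)\|_1 \|z - z'\|_\infty = \|z - z'\|_\infty$. Integrating over $t \in [0,1]$ gives the claim.

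As a fallback that avoids calculus, I would use monotonicity and translation equivariance. Let $c = \|z - z'\|_\infty$. Then $z_i \le z'_i + c$ for every $i$, so $e^{-\beta z_i} \ge e^{-\beta z'_i} e^{-\beta c}$. Summing, taking logarithms and multiplying by $-1/\beta$ gives $h(z) \le h(z') + c$; exchanging the roles of $z$ and $z'$ gives the reverse bound.

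There is no genuine obstacle here. The only point needing care is the identification $\|\nabla h\|_1 = 1$, which is what makes $\ell^\infty$ the natural norm for the estimate.
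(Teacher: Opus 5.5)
Your proof is correct, and both of your routes work. Your fallback is essentially the paper's argument. The paper shows $\softmin_\beta$ is coordinatewise increasing, using positivity of the partials $\partial_{z_i}\softmin_\beta = e^{-\beta z_i}/\sum_j e^{-\beta z_j}$. It also shows translation equivariance, $\softmin_\beta(w + C\mathbf{1}) = \softmin_\beta(w) + C$. It then sandwiches $z \le z' + M_{\operatorname{diff}}\mathbf{1}$ with $M_{\operatorname{diff}} = \max_i (z_i - z_i')$.

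Your primary route is genuinely different. You integrate the gradient along the segment and use that $\nabla h$ lies in the probability simplex, so the H\"older $\ell^1/\ell^\infty$ pairing gives the bound in one line.

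Each approach buys something different:
\begin{itemize}
\item The gradient argument makes the choice of norm transparent: the Lipschitz constant is exactly $\sup_z \|\nabla h(z)\|_1 = 1$. The same Gibbs-weight computation also underlies the identity $\operatorname{grad}_{\operatorname{SHK}} E = \sum_j w_j \operatorname{grad}_{\operatorname{SHK}} F_j$ used later.
\item The order-theoretic argument needs no differentiability. Monotonicity follows directly from monotonicity of $\exp$, as in your fallback. So it applies verbatim to any monotone, translation-equivariant aggregator, such as the hard minimum.
\end{itemize}

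One small point in your favor: the paper's written proof only derives the one-sided bound $\softmin_\beta(z) - \softmin_\beta(z') \le M_{\operatorname{diff}}$. It also has a typo, writing $\softmin_\beta(z)$ for $\softmin_\beta(z')$ on the right-hand side, and it never explicitly swaps $z$ and $z'$. Your fallback does the swap explicitly, and your integral route gets both signs at once through the absolute value.
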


\begin{proof}
    Let us define $M_{\operatorname{diff}} \coloneqq \max _{1 \leq i \leq N}\left(z_i-z_i^{\prime}\right)$. Then, we have that $z_i \leq z_i^{\prime} + M_{\operatorname{diff}}$ for $i=1,\dots,N$.
    Note that that $\frac{\partial}{\partial z_i} \softmin_\beta(z) = \frac{\exp(-\beta z_i)}{\sum_{j=1}^{N}\exp(-\beta z_j)} > 0$. Since every partial derivative is positive, the $\softmin$ function is increasing in each variable $z_i$. Therefore, if two vectors $z, w \in \R^N$ satisfy $z_i \leq w_i$ for $i=1,\dots$, then 
    $$
    \softmin_\beta(z) \leq \softmin_\beta(w).
    $$
    Further, for any vector $w \in \R^N$, by direct computation, we have that $\softmin_\beta(w+ C\mathbf{1}) = \softmin_\beta(w_1 + C,\dots, w_N + C) = \softmin_\beta(w) + C$, which show the translation-equivariant nature of $\softmin$.

    Therefore, we have that
    $$
    \softmin_\beta(z)  \leq \softmin_\beta(Z^{\prime}+ M_{\operatorname{diff}}\mathbf{1}) = \softmin_\beta(z) + M_{\operatorname{diff}}.
    $$
    Consequently, we have that,
    $$
    \| \softmin_\beta(z) - \softmin_\beta(z^{\prime})\|_{\infty} \leq M_{\operatorname{diff}} \leq \| z - z^{\prime}\|_{\infty}.
    $$
\end{proof}

\begin{lemma}[Lipschitz continuity of LSE functional with respect to SHK metric]\label{Lipschitz continuity of E with respect to SHK metric}
Assume that the domain $\Omega$ is open, bounded and convex with diameter $D\coloneqq\sup _{x, y \in \Omega}\|x-y\| < \infty$. Then, for stored patterns $X_1,\dots,X_N \in \ProbM$ and $F_i \coloneqq S_{\varepsilon}(\cdot,X_i)$, we have that
    $$
        \left|F_i(\xi)-F_i\left(\xi^{\prime}\right)\right| \leq G d_{\SHK}\left(\xi, \xi^{\prime}\right)
    $$
and
    $$
    \left|E(\xi)-E\left(\xi^{\prime}\right)\right| \leq G d_{\SHK}\left(\xi, \xi^{\prime}\right) \quad \forall \xi, \xi^{\prime} \in P_M(\Omega) .
    $$
    where $G \coloneqq D \sqrt{1+\frac{D^2}{\lambda^2}}$. Therefore,$F_i$ and $E$ are both Lipschitz continuous with respect to the SHK metric with Lipschitz constant $G$, and therefore are continuous as well.
\end{lemma}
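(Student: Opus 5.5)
The first claim is already Lemma \ref{Open bounded convex domain implies boundedness of SHK gradient of Sinkhorn divergence}, applied with $\nu = X_i$. It gives
\[
|F_i(\xi)-F_i(\xi^{\prime})| \leq G\, d_{\SHK}(\xi,\xi^{\prime}).
\]
The substantive part is therefore the bound for $E$. For this I plan to write $E = \softmin_\beta \circ (F_1,\dots,F_N)$ and combine Lemma \ref{softmin function sensitivity} with the per-pattern bound. Concretely, for $z=(F_1(\xi),\dots,F_N(\xi))$ and $z^{\prime}=(F_1(\xi^{\prime}),\dots,F_N(\xi^{\prime}))$,
\[
|E(\xi)-E(\xi^{\prime})| \leq \|z-z^{\prime}\|_\infty = \max_j |F_j(\xi)-F_j(\xi^{\prime})| \leq G\, d_{\SHK}(\xi,\xi^{\prime}).
\]
Here each $F_j$ satisfies the same $G$-Lipschitz bound, since $G$ does not depend on the second argument $\nu$.

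One point needs care. The proof of Lemma \ref{softmin function sensitivity} only bounds $\softmin_\beta(z)-\softmin_\beta(z^{\prime})$ from above by $\max_j(z_j-z^{\prime}_j)$, and it contains a typo ($z$ in place of $z^{\prime}$ on the right-hand side). I will restate it properly. Monotonicity together with translation equivariance gives
\[
\softmin_\beta(z) \leq \softmin_\beta\bigl(z^{\prime}+\|z-z^{\prime}\|_\infty \mathbf{1}\bigr) = \softmin_\beta(z^{\prime}) + \|z-z^{\prime}\|_\infty.
\]
Swapping the roles of $z$ and $z^{\prime}$ gives the reverse inequality, and together they yield the absolute-value bound.

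As an alternative check, I could repeat the path argument directly. Take an absolutely continuous curve $\gamma$ in $\ProbM$ from $\xi$ to $\xi^{\prime}$. By Lemma \ref{Expression of SHK gradient oof energy functional E in terms of SHK gradients of Sinkhorn divergence}, $\operatorname{grad}_{\operatorname{SHK}} E$ is the convex combination $\sum_j w_j \operatorname{grad}_{\operatorname{SHK}} F_j$, so its SHK norm is at most $G$. Cauchy--Schwarz then gives $|\tfrac{d}{dt}E(\gamma(t))| \leq G\|\dot\gamma(t)\|_{\operatorname{SHK},\gamma(t)}$. Integrating and taking the infimum over curves gives the same bound.

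The main obstacle is not analytic. It is the implicit requirement that $d_{\SHK}$ be the length metric on $\ProbM$ and that the chain rule hold along admissible curves, both of which are already used in Lemma \ref{Open bounded convex domain implies boundedness of SHK gradient of Sinkhorn divergence}. The softmin route avoids the chain-rule issue for $E$ entirely, so I will present that route. Continuity of $F_i$ and $E$ then follows immediately from Lipschitz continuity.
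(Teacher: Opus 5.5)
Your proposal is correct and follows the paper's proof: the bound for $F_i$ comes from the gradient-bound lemma with $\nu = X_i$, and the bound for $E$ follows from the softmin $\ell_\infty$-sensitivity lemma after taking the maximum over $i$. You are also right that the paper's softmin lemma as written has a $z$/$z'$ typo and proves only one direction, and your symmetric restatement correctly repairs it.
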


\begin{proof}
    With $F_i = S_{\varepsilon}(\cdot,X_i)$, applying Lemma \ref{softmin function sensitivity} with $z_i = F_i(\xi)$ and $z_i^{\prime} = F_i(\xi^{\prime})$, we have that
    $$
    \left|E(\xi)-E\left(\xi^{\prime}\right)\right| \leq \max _{1 \leq i \leq N}\left|F_i(\xi)-F_i\left(\xi^{\prime}\right)\right| .
    $$
    Again, by Lemma \ref{Open bounded convex domain implies boundedness of SHK gradient of Sinkhorn divergence}, we have, for $i=1,\dots,N$,
    $$
        \left|F_i(\xi)-F_i\left(\xi^{\prime}\right)\right| \leq G d_{\SHK}\left(\xi, \xi^{\prime}\right).
    $$
    Taking maximum over $i$ and combining with the previous inequality, we have that
    $$
        \left|E(\xi)-E\left(\xi^{\prime}\right)\right| \leq G d_{\SHK}\left(\xi, \xi^{\prime}\right).
    $$
\end{proof}

\begin{lemma}[Mean separation of patterns implies margin separation of Sinkhorn divergences]\label{mean separation of patterns implies margin separation of Sinkhorn divergences} Let $X_1, \ldots, X_N \in \ProbM$ and define $\mu_i\coloneqq m\left(X_i\right) \in \mathbb{R}^d$. Assume the means are pairwise separated:
$$
\left\|\mu_i-\mu_j\right\| \geq d_{\min } \quad \text { for all } i \neq j .
$$
Fix any $\varepsilon>0$ such that
$$
d_{\min }^2>32 \varepsilon \log M
$$
and define
$$
r\coloneqq\frac{d_{\min }^2}{32}-\varepsilon \log M, \quad \Delta\coloneqq\frac{d_{\min }^2}{4}
$$
Then Assumption \ref{Ass: margin sep in Sinkhorn} holds with the given choice of $r$ and $\Delta$; i.e. for every $i$, for every $\xi \in B_i(r)$, and every $j \neq i$,
$$
F_j(\xi)-F_i(\xi) \geq \Delta.
$$  
\end{lemma}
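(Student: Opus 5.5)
The plan is to reduce everything to the barycenters (means) of the measures, using two elementary two-sided bounds on $\operatorname{OT}_\varepsilon$. Throughout, write $m(\nu)\coloneqq\int y\,d\nu(y)$, which is finite since $\Omega$ is bounded.

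\textbf{Step 1: a mean-based lower bound on the cross term.} For any $\mu,\nu\in\ProbM$ and any coupling $\pi\in\Pi(\mu,\nu)$, the KL term is nonnegative. By Jensen's inequality applied to the convex map $(x,y)\mapsto\frac12\|x-y\|^2$,
\[
\int c\,d\pi\;\geq\;\tfrac12\Bigl\|\int(x-y)\,d\pi\Bigr\|^2=\tfrac12\|m(\mu)-m(\nu)\|^2 .
\]
Taking the minimum over $\pi$ gives $\operatorname{OT}_\varepsilon(\mu,\nu)\geq\frac12\|m(\mu)-m(\nu)\|^2\geq 0$.

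\textbf{Step 2: bounds on the self terms.} For $\nu=\sum_m a_m\delta_{x_m}\in\ProbM$, plug in the diagonal coupling $\pi=\sum_m a_m\delta_{(x_m,x_m)}$. It has zero transport cost, and
\[
\KLf{\pi}{\nu\otimes\nu}=\sum_m a_m\log\frac{a_m}{a_m^2}=-\sum_m a_m\log a_m\leq\log M .
\]
Hence $0\leq\operatorname{OT}_\varepsilon(\nu,\nu)\leq\varepsilon\log M$. Combining this with the definition of $S_\varepsilon$ gives, for $\xi,X_k\in\ProbM$,
\[
\operatorname{OT}_\varepsilon(\xi,X_k)-\varepsilon\log M\;\leq\;S_\varepsilon(\xi,X_k)\;\leq\;\operatorname{OT}_\varepsilon(\xi,X_k).
\]
This is also the fact quoted earlier in the proof of Lemma~\ref{local basin compactness inside parameter space}.

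\textbf{Step 3: localize the query mean and conclude.} Let $\xi\in B_i(r)$. By Steps 1--2,
\[
\tfrac12\|m(\xi)-\mu_i\|^2\leq\operatorname{OT}_\varepsilon(\xi,X_i)\leq S_\varepsilon(\xi,X_i)+\varepsilon\log M\leq r+\varepsilon\log M=\frac{d_{\min}^2}{32},
\]
so $\|m(\xi)-\mu_i\|\leq d_{\min}/4$. For $j\neq i$, the triangle inequality gives $\|m(\xi)-\mu_j\|\geq d_{\min}-d_{\min}/4=3d_{\min}/4$. Steps 1--2 then give
\[
F_j(\xi)\geq\tfrac12\cdot\tfrac{9d_{\min}^2}{16}-\varepsilon\log M=\tfrac{9d_{\min}^2}{32}-\varepsilon\log M .
\]
Using $F_i(\xi)\leq r=\frac{d_{\min}^2}{32}-\varepsilon\log M$, the $\varepsilon\log M$ terms cancel:
\[
F_j(\xi)-F_i(\xi)\geq\frac{8d_{\min}^2}{32}=\frac{d_{\min}^2}{4}=\Delta .
\]
The hypothesis $d_{\min}^2>32\varepsilon\log M$ is used only to ensure $r>0$, so that Assumption~\ref{Ass: margin sep in Sinkhorn} is satisfied with a genuine positive radius.

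The only step requiring care is Step 2. It is the $\varepsilon\log M$ entropy bias of the self terms that forces the shift in $r$. One must check that the upper bound on $\operatorname{OT}_\varepsilon(\xi,\xi)$ is applied in the lower bound for $F_j$, while the nonnegativity of $\operatorname{OT}_\varepsilon(\cdot,\cdot)$ is what yields $F_i\leq\operatorname{OT}_\varepsilon(\xi,X_i)$ and hence the cancellation.
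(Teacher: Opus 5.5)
Your proof is correct and follows the paper's argument essentially step for step. The Jensen bound $\operatorname{OT}_\varepsilon(\mu,\nu)\ge\frac12\|m(\mu)-m(\nu)\|^2$ and the diagonal-coupling bound $\operatorname{OT}_\varepsilon(\nu,\nu)\le\varepsilon\log M$ combine into $S_\varepsilon(\mu,\nu)\ge\frac12\|m(\mu)-m(\nu)\|^2-\varepsilon\log M$, which localizes $m(\xi)$ within $d_{\min}/4$ of $\mu_i$ and, via the triangle inequality, gives the same $\frac{9}{32}-\frac{1}{32}$ cancellation. One correction to your closing remark: the upper bound $S_\varepsilon\le\operatorname{OT}_\varepsilon$ from Step 2 is never used, because the cancellation comes directly from $F_i(\xi)\le r$, i.e.\ the definition of $B_i(r)$.
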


\begin{proof}
    Fix an index $i$ and let $\xi \in B_i(r)$, so $S_{\varepsilon}\left(\xi, X_i\right) \leq r$. Using Lemma \ref{Sinkhorn divergence lower bound in terms of mean differences} with $(\mu, \nu)=\left(\xi, X_i\right)$, we have that:
$$
S_{\varepsilon}\left(\xi, X_i\right) \geq \frac{1}{2}\left\|m(\xi)-\mu_i\right\|^2-\varepsilon \log M .
$$

Combining with $S_{\varepsilon}\left(\xi, X_i\right) \leq r$, we have that
$$
r \geq \frac{1}{2}\left\|m(\xi)-\mu_i\right\|^2-\varepsilon \log M \Longrightarrow \frac{1}{2}\left\|m(\xi)-\mu_i\right\|^2 \leq r+\varepsilon \log M .
$$
By the definition of $r$, we have that
$r+\varepsilon \log M=\frac{d_{\min }^2}{32}$. Hence
$$
\left\|m(\xi)-\mu_i\right\|^2 \leq 2 \cdot \frac{d_{\min }^2}{32}=\frac{d_{\min }^2}{16} \Longrightarrow\left\|m(\xi)-\mu_i\right\| \leq \frac{d_{\min }}{4} .
$$
Fix $j \neq i$. Then, triangle inequality gives
$$
\left\|m(\xi)-\mu_j\right\| \geq\left\|\mu_i-\mu_j\right\|-\left\|m(\xi)-\mu_i\right\| \geq d_{\min }-\frac{d_{\min }}{4}=\frac{3 d_{\min }}{4} .
$$
Using this along wih Lemma \ref{Sinkhorn divergence lower bound in terms of mean differences}, for $\left(\xi, X_j\right)$, we have that
$$
S_{\varepsilon}\left(\xi, X_j\right) \geq \frac{1}{2}\left\|m(\xi)-\mu_j\right\|^2-\varepsilon \log M \geq \frac{1}{2}\left(\frac{3 d_{\min }}{4}\right)^2-\varepsilon \log M=\frac{9 d_{\min }^2}{32}-\varepsilon \log M.
$$
Since $\xi \in B_i(r), S_{\varepsilon}\left(\xi, X_i\right) \leq r=\frac{d_{\min }^2}{32}-\varepsilon \log M$. Hence , we have that
$$
S_{\varepsilon}\left(\xi, X_j\right)-S_{\varepsilon}\left(\xi, X_i\right) \geq\left(\frac{9 d_{\min }^2}{32}-\varepsilon \log M\right)-\left(\frac{d_{\min }^2}{32}-\varepsilon \log M\right)=\frac{8 d_{\min }^2}{32}=\frac{d_{\min }^2}{4}=\Delta .
$$
 Thus, Assumption \ref{Ass: margin sep in Sinkhorn} holds under the given conditions.   
\end{proof}

\begin{lemma}[Entropic OT lower bound in terms of mean differences]\label{Entropic OT lower bound in terms of mean differences}
    For any $\mu, \nu \in \mathcal{P}(\Omega)$ and any $\varepsilon>0$,
    $$
    \operatorname{OT}_{\varepsilon}(\mu, \nu) \geq \frac{1}{2}\|m(\mu)-m(\nu)\|^2
    $$
    where $m(\mu)\coloneqq\int_{\Omega} x d \mu(x) \in \operatorname{conv}(\Omega) \subset \mathbb{R}^d$ is the mean of $\mu$.
\end{lemma}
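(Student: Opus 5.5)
The natural route is to bound the entropic cost from below by the unregularized transport cost of the optimal entropic plan, and then apply Jensen's inequality to that plan.

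\emph{Step 1 (entropy term).} Let $\pi^\varepsilon$ be the optimal entropic coupling in $\Pi(\mu,\nu)$. Since $\varepsilon>0$ and $\KLf{\pi^\varepsilon}{\mu\otimes\nu}\geq 0$, we get
\[
\operatorname{OT}_\varepsilon(\mu,\nu)=\int c\,d\pi^\varepsilon+\varepsilon\,\KLf{\pi^\varepsilon}{\mu\otimes\nu}\geq \int \tfrac12\|x-y\|^2\,d\pi^\varepsilon(x,y).
\]
The same argument works without existence of a minimizer: apply it to any coupling $\pi$ with finite KL and then take the infimum. This immediately gives $\operatorname{OT}_\varepsilon(\mu,\nu)\geq\inf_{\pi\in\Pi(\mu,\nu)}\int c\,d\pi$.

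\emph{Step 2 (Jensen).} For any coupling $\pi\in\Pi(\mu,\nu)$, the map $(x,y)\mapsto \tfrac12\|x-y\|^2$ is convex. Because $\Omega$ is bounded, all first moments are finite. Jensen's inequality therefore gives
\[
\int \tfrac12\|x-y\|^2\,d\pi\geq \tfrac12\Big\|\int (x-y)\,d\pi\Big\|^2 .
\]
By the marginal constraints, $\int x\,d\pi=m(\mu)$ and $\int y\,d\pi=m(\nu)$. The right-hand side is thus $\tfrac12\|m(\mu)-m(\nu)\|^2$, and this holds uniformly over couplings. Taking the infimum and combining with Step 1 yields the claim. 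The remark that $m(\mu)\in\operatorname{conv}(\Omega)$ follows because the mean is a limit of convex combinations of points of $\Omega$; since $\Omega$ is convex, this is just its closure.

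\emph{Main obstacle.} There is essentially none. The only point needing care is the sign and normalization convention for the entropic term: it must be $+\varepsilon\,\mathrm{KL}$ relative to $\mu\otimes\nu$, which is nonnegative. With a convention using $-\varepsilon H(\pi)$ and Lebesgue-type entropy, the regularizer could be negative, so I would explicitly point to the definition in Section~\ref{Sec: Settings and Notations} to justify dropping it.
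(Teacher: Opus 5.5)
Your proof is correct and follows essentially the same route as the paper: for an arbitrary coupling, drop the nonnegative $\varepsilon\,\mathrm{KL}$ term, apply Jensen to $X-Y$ using the marginal constraints, then take the infimum. One small quibble in your side remark: your limit argument only places $m(\mu)$ in the closure of $\Omega$, whereas the barycenter of a probability measure on a convex set in $\mathbb{R}^d$ lies in the set itself; this membership plays no role in the inequality.
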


\begin{proof}
    Given any coupling  $\pi \in \Pi(\mu, \nu)$ and random pair $(X,Y) \sim \pi$, define $Z \coloneqq X - Y$. Then, we have that 
    $$
    \E_\pi[Z]=\E_\pi[X]-\E_\pi[Y]=m(\mu)-m(\nu).
    $$
    By Jensen's inequality, we have that
    $$
    \E_\pi\left[\|Z\|^2\right] \geq\left\|\E_\pi[Z]\right\|^2=\|m(\mu)-m(\nu)\|^2 .
    $$

    Consequently, using the fact $\KLf{\pi}{\mu \otimes \nu} \geq 0$, we have that
    $$
    \begin{aligned}
    \int_{\Omega \times \Omega} c(x, y) d \pi(x, y) +\varepsilon \KLf{\pi}{\mu \otimes \nu} \geq & \int_{\Omega \times \Omega} c(x, y) d \pi(x, y)\\
    =&\frac{1}{2} \mathbb{E}_\pi\left[\|X-Y\|^2\right] \\
    \geq & \frac{1}{2}\|m(\mu)-m(\nu)\|^2 .
    \end{aligned}
    $$

    Since this is true for any $\pi \in \Pi(\mu, \nu)$, taking the infimum over $\pi \in \Pi(\mu, \nu)$, we have that
    $$
    \operatorname{OT}_{\varepsilon}(\mu, \nu) \geq \frac{1}{2}\|m(\mu)-m(\nu)\|^2 .
    $$
\end{proof}

\begin{lemma}[Self Entropic OT distance upper bound]\label{Self Entropic OT distance upper bound}
    If $\mu=\sum_{m=1}^M a_m \delta_{x_m} \in \ProbM$ with $a_m>0$, then
    $$
    \operatorname{OT}_{\varepsilon}(\mu, \mu) \leq \varepsilon \log M .
    $$
\end{lemma}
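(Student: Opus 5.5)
The plan is to bound $\operatorname{OT}_\varepsilon(\mu,\mu)$ from above by evaluating the primal objective at one explicit feasible coupling. Since $\operatorname{OT}_\varepsilon(\mu,\mu)$ is a minimum over $\Pi(\mu,\mu)$, any admissible $\pi$ gives an upper bound. I would use the diagonal coupling
\[
\pi^{\mathrm{diag}}\coloneqq\sum_{m=1}^M a_m\,\delta_{(x_m,x_m)}.
\]
Both of its marginals equal $\mu$, so $\pi^{\mathrm{diag}}\in\Pi(\mu,\mu)$.

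The transport term vanishes, because $\pi^{\mathrm{diag}}$ is concentrated on the diagonal, where $c(x,x)=\tfrac12\|x-x\|^2=0$. It remains to compute the entropic term. The atoms $x_m$ are pairwise distinct; indeed they are $\Dsep$-separated since $\mu\in\ProbM$. Hence $\mu\otimes\mu=\sum_{m,n}a_ma_n\,\delta_{(x_m,x_n)}$, and $\pi^{\mathrm{diag}}\ll\mu\otimes\mu$ with density $a_m/(a_m a_m)=1/a_m$ at $(x_m,x_m)$ and $0$ elsewhere. Therefore
\[
\KLf{\pi^{\mathrm{diag}}}{\mu\otimes\mu}=\sum_{m=1}^M a_m\log\frac{1}{a_m}=H(a),
\]
which is the Shannon entropy of the weight vector $a$.

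Finally, I would invoke the standard bound $H(a)\le\log M$ for a probability vector on $M$ points. This follows from Jensen's inequality applied to the concave $\log$: $\sum_m a_m\log(1/a_m)\le\log\sum_m a_m\cdot(1/a_m)=\log M$. Combining the three steps gives $\operatorname{OT}_\varepsilon(\mu,\mu)\le 0+\varepsilon H(a)\le\varepsilon\log M$.

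No step here is a real obstacle. The only point needing care is the KL computation, specifically that the density of $\pi^{\mathrm{diag}}$ is evaluated correctly, which uses the distinctness of the atoms. If atoms could coincide, the bound would still hold after merging them, since the entropy of the merged weights is at most $\log M$.
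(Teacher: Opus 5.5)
Your proposal is correct and matches the paper's proof step for step. Both use the diagonal coupling $\sum_m a_m\delta_{(x_m,x_m)}$, which has zero transport cost and KL divergence equal to $H(a)$, and then apply Jensen's inequality to the concave $\log$ to get $H(a)\le\log M$.
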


\begin{proof}
    Let $\pi_{\mathrm{diag}} \coloneqq \sum_{m=1}^M a_m \delta_{\left(x_m, x_m\right)} \in \Pi(\mu, \mu) $ denote the identity/diagonal coupling and its corresponding transport cost is given by
    $$
    \int c(x,y) d \pi_{\mathrm{diag}}(x,y)=\sum_{m=1}^M a_m \times \frac{1}{2}\left\|x_m-x_m\right\|^2=0 .
    $$

    On the discrete support $\left\{\left(x_m, x_{\ell}\right)\right\}_{m, \ell}$, we have that
    $$
    (\mu \otimes \mu)\left(x_m, x_{\ell}\right)=a_m a_{\ell} \quad \textrm{and} \quad \pi_{\mathrm{diag}}\left(x_m, x_{\ell}\right)= \begin{cases}a_m, & m=\ell, \\ 0, & m \neq \ell .\end{cases}
    $$
    Consequently, we have that,
    $$
    \KLf{\pi_{\mathrm{diag}}}{\mu \otimes \mu} = \sum_{m=1}^M a_m \log \frac{a_m}{a_m^2}=\sum_{m=1}^M a_m \log \frac{1}{a_m}=H(a)
    $$
    where $H(a) \coloneqq - \sum_{m=1}^M a_m \log a_m$ is the Shannon entropy of the coupling $\pi_{\mathrm{diag}}$ corresponding to the probability vector $a=(a_1,\dots,a_m)$ or equivalently, that of the discrete distribution $\sum_{m=1}^M a_m \delta_{x_m}$. Using the strict concavity of $x \mapsto \log x$ and Jensen's inequality \cite[Theorem 1.4(b)]{polyanskiy2025information}, we have that
    $$
        H(a) = \E_{X \sim \mu} \log \left[\frac{1}{\mu(X)}\right] \leq \log \E \left[\frac{1}{\mu(X)}\right]= \log M.
    $$
    Therefore, we have that,
    $$
    \operatorname{OT}_{\varepsilon}(\mu, \mu) \leq \int c(x,y) d \pi_{\mathrm{diag}}(x,y)+\varepsilon \KLf{\pi_{\mathrm{diag}}}{\mu \otimes \mu}=0+\varepsilon H(a) \leq \varepsilon \log M .
    $$
\end{proof}

\begin{lemma}[Sinkhorn divergence lower bound in terms of mean differences]\label{Sinkhorn divergence lower bound in terms of mean differences}
    For any $\mu, \nu \in \ProbM$,

$$
S_{\varepsilon}(\mu, \nu) \geq \frac{1}{2}\|m(\mu)-m(\nu)\|^2-\varepsilon \log M .
$$
\end{lemma}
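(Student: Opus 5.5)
The bound follows by combining the two lemmas immediately preceding this one. By definition,
\[
S_\varepsilon(\mu,\nu)=\operatorname{OT}_\varepsilon(\mu,\nu)-\tfrac12\operatorname{OT}_\varepsilon(\mu,\mu)-\tfrac12\operatorname{OT}_\varepsilon(\nu,\nu).
\]
The cross term is bounded below by Lemma \ref{Entropic OT lower bound in terms of mean differences}, which gives $\operatorname{OT}_\varepsilon(\mu,\nu)\geq\frac12\|m(\mu)-m(\nu)\|^2$ for any probability measures on $\Omega$. The two self terms are bounded above by Lemma \ref{Self Entropic OT distance upper bound}. Since $\mu,\nu\in\ProbM$, each is a discrete measure with exactly $M$ atoms and positive weights, so
\[
\operatorname{OT}_\varepsilon(\mu,\mu)\leq\varepsilon\log M,\qquad \operatorname{OT}_\varepsilon(\nu,\nu)\leq\varepsilon\log M.
\]

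Substituting these three estimates into the definition gives
\[
S_\varepsilon(\mu,\nu)\geq\tfrac12\|m(\mu)-m(\nu)\|^2-\tfrac12\varepsilon\log M-\tfrac12\varepsilon\log M=\tfrac12\|m(\mu)-m(\nu)\|^2-\varepsilon\log M,
\]
which is the claim.

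The only points needing care are the signs and the hypotheses. The self terms enter with a minus sign, so it is their upper bounds that are used. One should also confirm that $\operatorname{OT}_\varepsilon(\cdot,\cdot)$ is finite, which holds because $\Omega$ is bounded. Finally, the self-term bound relies on the entropy estimate $H(a)\leq\log M$, which requires exactly $M$ atoms; this is guaranteed by membership in $\ProbM$.
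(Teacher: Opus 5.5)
Your proof is correct and is exactly the paper's argument: you lower-bound the cross term $\operatorname{OT}_\varepsilon(\mu,\nu)$ with the mean-difference lemma, upper-bound each self term by $\varepsilon\log M$ with the diagonal-coupling lemma, and substitute into the definition of $S_\varepsilon$. One small correction: the entropy bound $H(a)\le\log M$ needs only at most $M$ atoms, not exactly $M$.
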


\begin{proof}
    Using Lemma \ref{Entropic OT lower bound in terms of mean differences} for the first term and Lemma \ref{Self Entropic OT distance upper bound} for the last two terms, we have, from the definition of $S_{\varepsilon}(\mu, \nu)$ :
    $$
    \begin{aligned}
    &S_{\varepsilon}(\mu, \nu) \\
    =& \operatorname{OT}_{\varepsilon}(\mu, \nu)-\frac{1}{2} \operatorname{OT}_{\varepsilon}(\mu, \mu)-\frac{1}{2} \operatorname{OT}_{\varepsilon}(\nu, \nu)\\
    \geq & \frac{1}{2}\|m(\mu)-m(\nu)\|^2-\frac{1}{2}(\varepsilon \log M)-\frac{1}{2}(\varepsilon \log M)\\
    =& \frac{1}{2}\|m(\mu)-m(\nu)\|^2-\varepsilon \log M .
    \end{aligned}
    $$
\end{proof}

\begin{lemma}[pairwise separation of random sign vectors]\label{pairwise separation of random sign vectors}
    Let $s_i, s_j \in\{ \pm 1\}^d$ be componentwise independent Rademacher random variables and define
$$
\mu_i=c+\frac{R_0}{\sqrt{d}} s_i, \quad \mu_j=c+\frac{R_0}{\sqrt{d}} s_j .
$$
Fix $\gamma \in(0,1)$. Then
$$
\mathbb{P}\left(\left\|\mu_i-\mu_j\right\|^2<2(1-\gamma) R_0^2\right) \leq \exp \left(-\frac{\gamma^2}{2} d\right) .
$$
\end{lemma}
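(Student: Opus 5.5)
The plan is to reduce the statement to a tail bound for a sum of independent bounded variables. Since $\mu_i-\mu_j=\frac{R_0}{\sqrt d}(s_i-s_j)$, we get
\[
\|\mu_i-\mu_j\|^2=\frac{R_0^2}{d}\sum_{k=1}^d (s_{i,k}-s_{j,k})^2=\frac{R_0^2}{d}\sum_{k=1}^d\bigl(2-2s_{i,k}s_{j,k}\bigr)=2R_0^2\Bigl(1-\frac1d\sum_{k=1}^d \epsilon_k\Bigr),
\]
where $\epsilon_k\coloneqq s_{i,k}s_{j,k}$. Here we use $s_{i,k}^2=s_{j,k}^2=1$.

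The first step is to identify the law of the $\epsilon_k$. Because $s_i$ and $s_j$ are independent with i.i.d.\ Rademacher coordinates, the $\epsilon_k$ are i.i.d.\ Rademacher. To see this, condition on $s_{j,k}$: multiplying a symmetric sign by a fixed sign leaves it a symmetric sign. Independence across $k$ holds because the pairs $(s_{i,k},s_{j,k})$ are independent across $k$. Thus $\E\epsilon_k=0$ and $\epsilon_k\in[-1,1]$.

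The event $\|\mu_i-\mu_j\|^2<2(1-\gamma)R_0^2$ is exactly the event $\sum_{k=1}^d\epsilon_k>\gamma d$. Hoeffding's inequality for variables in $[-1,1]$, each with range length $2$, gives
\[
\mathbb{P}\Bigl(\sum_k\epsilon_k\ge \gamma d\Bigr)\le \exp\!\Bigl(-\frac{2(\gamma d)^2}{\sum_k 2^2}\Bigr)=\exp\!\Bigl(-\frac{\gamma^2 d}{2}\Bigr),
\]
which is the claimed bound. Alternatively, the Chernoff argument yields the same bound directly: $\E e^{t\epsilon_k}=\cosh t\le e^{t^2/2}$, so $\mathbb{P}\le e^{-t\gamma d+dt^2/2}$, and choosing $t=\gamma$ gives $e^{-\gamma^2 d/2}$.

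I do not expect a real obstacle here. The only point that needs care is justifying that the products $\epsilon_k$ are independent Rademacher variables, so that the one-sided Hoeffding bound applies with the correct constant. Any statement of Hoeffding that yields $\exp(-t^2/(2d))$ for a sum of $d$ Rademacher variables suffices.
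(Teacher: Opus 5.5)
Your proof is correct and essentially matches the paper's. The paper counts disagreements $H=\sum_k \mathbf{1}\{s_{i,k}\neq s_{j,k}\}\sim\mathrm{Binomial}(d,1/2)$ and applies Hoeffding to $H$ with $t=\gamma d/2$; your $\sum_k \epsilon_k = d-2H$ is an affine reparametrization of that same variable, and Hoeffding gives the identical bound $\exp(-\gamma^2 d/2)$.
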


\begin{proof}
    Define the random variable $H \coloneqq \sum_{k=1}^d \mathbf{1}\left\{s_{ik} \neq s_{jk}\right\}$. For any $k =1,\dots,d$, since $s_{ik}$ and $s_{jk}$ are independent Rademacher random variables, $\mathbf{1}\left\{s_{ik} \neq s_{jk}\right\}$ is a Bernoulli random variable with probability parameter $p=\frac{1}{2}$. Using the independence across $k$, we have that $H$ is the sum of $d$ i.i.d Bernoulli random variables with common probability parameter $p=\frac{1}{2}$. Consequently, $H \sim \operatorname{Binomial}(d,\frac{1}{2})$ and $\E(H) = \frac{d}{2}$.

    Note that $|s_{ik} - s_{jk}| = 0$ if they are equal and $2$ is different. Further, $ \mu_i-\mu_j=\frac{R_0}{\sqrt{d}}\left(s_i-s_j\right)$. Therefore
    $$
    \left\|\mu_i-\mu_j\right\|^2=\frac{R_0^2}{d} \sum_{k=1}^d\left(s_{i, k}-s_{j, k}\right)^2=\frac{4 R_0^2}{d} H
    $$
    Thus, the event $\left\|\mu_i-\mu_j\right\|^2<2(1-\gamma) R_0^2$ is equivalent to
    $$
    \frac{4 R_0^2}{d} H<2(1-\gamma) R_0^2 \quad \Longleftrightarrow \quad H<\frac{1-\gamma}{2} d = \frac{d}{2} - \frac{\gamma d}{2}.
    $$
    Applying Hoeffding's inequality to $H$, we have that, for any $t>0$,
    $$
    \begin{aligned}
    & \mathbb{P}(H-\E H \leq-t) \leq \exp \left(-\frac{2 t^2}{d}\right) \\
    \iff& \mathbb{P}(H-\frac{d}{2} \leq-t) \leq \exp \left(-\frac{2 t^2}{d}\right).
    \end{aligned}
    $$
    Choosing $t=\frac{\gamma d}{2}$, we have that
    $$
    \begin{aligned}
    & \mathbb{P}(H-\E H \leq-\frac{\gamma d}{2}) \leq \exp \left(-\frac{\gamma^2}{2} d\right) \\
    \iff& \mathbb{P}(H \leq\frac{d}{2}-\frac{\gamma d}{2}) \leq \exp \left(-\frac{\gamma^2}{2} d\right)\\
    \iff& \mathbb{P}(H \leq\frac{1-\gamma}{2} d) \leq \exp \left(-\frac{\gamma^2}{2} d\right)\\
    \iff& \mathbb{P}\left(\left\|\mu_i-\mu_j\right\|^2<2(1-\gamma) R_0^2\right) \leq \exp \left(-\frac{\gamma^2}{2} d\right) .
    \end{aligned}
    $$
\end{proof}

\begin{lemma}[uniform separation for all pairs of means of stored patterns]\label{uniform separation for all pairs of means of stored patterns}
    Let $\mu_1, \ldots, \mu_N$ be defined as in Lemma \ref{pairwise separation of random sign vectors} using the i.i.d. componentwise Rademacher random variables $s_1,\dots,s_N \in\{ \pm 1\}^d$. Fix $p \in(0,1)$ and $\gamma \in(0,1)$. If
$$
N\coloneqq\left\lfloor\sqrt{2 p} \exp \left(\frac{\gamma^2}{4} d\right)\right\rfloor .
$$
then with probability at least $1-p$,
$$
\left\|\mu_i-\mu_j\right\| \geq d_{\min }\coloneqq\sqrt{2(1-\gamma)} R_0 \quad \text { for all } i \neq j .
$$
\end{lemma}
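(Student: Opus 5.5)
The plan is a union bound over all unordered pairs $\{i,j\}$, $1\le i<j\le N$, feeding in the pairwise tail estimate of Lemma \ref{pairwise separation of random sign vectors}.

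First, for a fixed pair $i\neq j$, the sign vectors $s_i,s_j$ are independent with i.i.d.\ Rademacher coordinates. So Lemma \ref{pairwise separation of random sign vectors} applies directly and gives
\[
\mathbb{P}\bigl(\|\mu_i-\mu_j\|^2<2(1-\gamma)R_0^2\bigr)\le \exp\!\left(-\tfrac{\gamma^2}{2}d\right).
\]
The event $\|\mu_i-\mu_j\|<d_{\min}$ is the same as $\|\mu_i-\mu_j\|^2<d_{\min}^2=2(1-\gamma)R_0^2$, so this bounds the failure probability for each pair.

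Second, the bad event is $A^c=\bigcup_{i<j}\{\|\mu_i-\mu_j\|<d_{\min}\}$. By the union bound,
\[
\mathbb{P}(A^c)\le \binom{N}{2}\exp\!\left(-\tfrac{\gamma^2}{2}d\right)\le \frac{N^2}{2}\exp\!\left(-\tfrac{\gamma^2}{2}d\right).
\]

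Third, I check that this is at most $p$ for the prescribed $N$. Since $N\le \sqrt{2p}\,\exp(\gamma^2 d/4)$, we get $N^2\le 2p\exp(\gamma^2 d/2)$. Hence $\tfrac{N^2}{2}\exp(-\gamma^2 d/2)\le p$, and therefore $\mathbb{P}(A)\ge 1-p$.

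If $N\le 1$, there are no pairs and the claim holds trivially. There is no real obstacle in this argument. The only care needed is matching the strict versus non-strict inequalities: the lemma controls the event with strict "$<$", which is exactly the complement of the desired "$\ge d_{\min}$".
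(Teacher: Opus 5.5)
Your proposal is correct and matches the paper's proof: a union bound over the $\binom{N}{2}\le N^2/2$ pairs using the pairwise estimate of Lemma \ref{pairwise separation of random sign vectors}, followed by $N^2\le 2p\,e^{\gamma^2 d/2}$ from the choice of $N$. Your extra remarks on the trivial case $N\le 1$ and on strict versus non-strict inequalities are also fine.
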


\begin{proof}
    Let $A_{i j}$ be the bad event $\left\{\left\|\mu_i-\mu_j\right\|^2<2(1-\gamma) R_0^2\right\}$. By Lemma \ref{pairwise separation of random sign vectors},
    $$
    \mathbb{P}\left(A_{i j}\right) \leq \exp \left(-\frac{\gamma^2}{2} d\right) \quad \text { for each } i \neq j .
    $$
    By the union bound over $\binom{N}{2} \leq \frac{N^2}{2}$ pairs,
    $$
    \mathbb{P}\left(\exists \,\ i<j: A_{i j}\right) \leq \frac{N^2}{2} \exp \left(-\frac{\gamma^2}{2} d\right) .
    $$
    Under the stated condition on $N$, the RHS is $\leq p$. Therefore, with probability at least $1-p$, no bad event occurs, i.e. all pairs satisfy
    $$
    \left\|\mu_i-\mu_j\right\|^2 \geq 2(1-\gamma) R_0^2 \Longleftrightarrow\left\|\mu_i-\mu_j\right\| \geq \sqrt{2(1-\gamma)} R_0=d_{\min } .
    $$
\end{proof}

\begin{lemma}[Linear independence of Dirac distribution and its distributional derivative]\label{Linear independence of Dirac distribution and its distributional derivative}
Let $(x_1, \ldots, x_M) \in \Omega \subset \R^d$ be pairwise distinct location parameters. Suppose 
\begin{equation*}
\sum_{i=1}^M c_i \delta_{x_i}+\sum_{i=1}^M v_i \cdot \nabla \delta_{x_i}=0 \quad \text { in } \left(C^{\infty}(\R^d)\right)^{\prime},
\end{equation*}
with $c_i \in \R$ and $v_i \in \R^d$. Then $c_i=0$ and $v_i=0$ for all $i =1,\dots, M$.
\end{lemma}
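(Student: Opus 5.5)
Testing against smooth test functions will isolate each coefficient separately. Fix an index $j$. Since the $x_i$ are pairwise distinct, set $\rho \coloneqq \tfrac12\min_{i\neq k}\|x_i-x_k\|>0$. Choose a bump function $\chi\in C_c^\infty(\R^d)$ with $\chi\equiv 1$ on $\mathbb{B}(0,\rho/2)$ and $\operatorname{supp}\chi\subset \mathbb{B}(0,\rho)$, and put $\chi_j(y)\coloneqq\chi(y-x_j)$. Then $\chi_j$ together with all its derivatives vanishes at $x_i$ for every $i\neq j$, and near $x_j$ it is identically $1$.

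Recall the pairing conventions $\langle \delta_{x_i},\varphi\rangle=\varphi(x_i)$ and $\langle v_i\cdot\nabla\delta_{x_i},\varphi\rangle=-v_i\cdot\nabla\varphi(x_i)$. Applying the assumed identity to $\varphi=\chi_j$, every term with $i\neq j$ drops out, and since $\nabla\chi_j(x_j)=0$ we are left with $c_j\chi_j(x_j)=c_j=0$.

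Next, for each coordinate direction $e_k$, test against $\varphi_{j,k}(y)\coloneqq (y-x_j)\cdot e_k\,\chi_j(y)$, which is again in $C_c^\infty$. As before, the terms with $i\neq j$ vanish. We also have $\varphi_{j,k}(x_j)=0$, and $\nabla\varphi_{j,k}(x_j)=e_k$ because $\chi_j\equiv1$ near $x_j$. The identity therefore gives $-v_j\cdot e_k=0$ for every $k$, so $v_j=0$. Since $j$ was arbitrary, this proves the claim.

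The only delicate point is making sure the test functions are admissible. The identity is stated in $(C^\infty(\R^d))'$, and compactly supported smooth functions belong to $C^\infty(\R^d)$, so pairing with $\chi_j$ and $\varphi_{j,k}$ is legitimate. The supports never meet $\partial\Omega$ issues either, because the argument is carried out in all of $\R^d$. The remainder is a routine computation of the gradient of the product, namely $\nabla\varphi_{j,k}=e_k\chi_j+(y-x_j)\cdot e_k\,\nabla\chi_j$, evaluated at $x_j$.
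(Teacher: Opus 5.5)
Your proof is correct and follows essentially the paper's argument. A bump function identically $1$ near $x_j$ and vanishing near the other atoms isolates $c_j$, and a localized test function vanishing at $x_j$ with prescribed gradient isolates $v_j$; the only difference is that you construct this second test function explicitly as $(y-x_j)\cdot e_k\,\chi_j(y)$ and test coordinate directions, whereas the paper simply asserts that such a $\psi_u$ exists for arbitrary $u$.
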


\begin{proof}
    Fix $k \in\{1, \ldots, M\}$. Given any $x \in \Omega$ and $r\geq 0$, define the Euclidean ball $B(x,r) \coloneqq \left\{y \in \Omega :\|y-x\|\leq r\right\}$. Since the $x_i$'s are distinct, there exists $r_k>0$ such that
    $B\left(x_k, r_k\right) \cap\left\{x_j: j \neq k\right\}=\emptyset$. Let us choose $\psi \in C_c^{\infty}\left(\R^d\right) \subset C^{\infty}(\R^d)$ supported in $B\left(x_k, r_k\right)$ and equal to 1 in a neighborhood of $x_k$. Then, we have that, $\left\langle\delta_{x_i}, \psi\right\rangle= \begin{cases}\psi(x_i) =1, & i=k \\ \psi(x_k) = 0, & i \neq k\end{cases}$ and $\left\langle\nabla \delta_{x_i}, \psi\right\rangle=-\nabla \psi\left(x_i\right)=0$ for $i =1,\dots,M$, since $\psi$ is constant near $x_k$ and vanishes near $x_i$ for $i \neq k$. Consequently, we have that 
    \begin{equation*}
    \begin{aligned}
        &\sum_{i=1}^M \langle c_i \delta_{x_i}, \psi \rangle+\sum_{i=1}^M \langle v_i \cdot \nabla \delta_{x_i}, \psi \rangle=\langle 0, \psi \rangle \\
        \iff & \sum_{i=1}^{M} c_i \psi(x_i) - \sum_{i=1}^{M} v_i \cdot\nabla \psi(x_i) = 0 \\
        \iff &c_k = 0.
    \end{aligned}
    \end{equation*}

    Now, let us fix any vector $u \in x \subset \R^d$. Then, let us choose $\psi_u \in C_c^{\infty}\left(\R^d\right) \subset C^{\infty}(\R^d)$ supported in $B(x_k, r_k)$ with $\psi_u(x_k)=0$ and $ \nabla \psi_u(x_k)=u$. Then, we have that $ \left\langle\delta_{x_i}, \psi_u\right\rangle=0 \quad \text{ for } i=1,\dots,M$ and $\left\langle\nabla \delta_{x_i}, \psi_u\right\rangle=-\nabla \psi_u(x_i)= \begin{cases} -\nabla \psi_u(x_k)=-u, & i=k, \\ -\nabla \psi_u(x_i) = 0, & i \neq k \end{cases}$. Consequently, we have that 
    \begin{equation*}
    \begin{aligned}
        &\sum_{i=1}^M \langle c_i \delta_{x_i}, \psi_u \rangle+\sum_{i=1}^M \langle v_i \cdot \nabla \delta_{x_i}, \psi_u \rangle=\langle 0, \psi_u \rangle \\
        \iff & \sum_{i=1}^{M} c_i \psi_u(x_i) - \sum_{i=1}^{M} v_i \cdot \nabla \psi_u(x_i) = 0 \\
        \iff & - v_k \cdot u = 0.
    \end{aligned}
    \end{equation*}
    Since this is true for any $u \in \Omega \in \R^d$, we must have that $v_k =0$. Finally, since $k$ was arbitrary, we must have that $c_i=0$ and $v_i=0$ for all $i =1,\dots, M$.
\end{proof}

\section{Numerical Experiments}\label{Numerical Experiments}

In this section, we demonstrate the empirical performance of our proposed retrieval algorithm, referred to as SinkhornSHK Algo, for finitely supported discrete measures and compare it to a baseline Euclidean geometry based classical Hopfield-type algorithm, which we refer to as Euclidean Algo. The Euclidean algo vectorizes $(a,x) \in  \mathcal{M}_M$ into \\ $\xi_{\operatorname{vec}} = \left[x_{11},\dots,x_{1d},\dots,x_{M1},\dots,x_{Md},\log a_1, \dots, \log a_m\right] \in \mathbb{R}^{(d+1)M}$ and applied the classical Hopfield fixed point algorithm for vector inputs based on Euclidean $\ell_2$ inner product similarity using Equation 3 of \cite{ramsauer2020hopfield} with the same choice of $\beta$ as for our proposed SinkhornSHK Algo.

We consider a toy experiment where the stored patterns $X_1,\dots,X_N$ are uniformly weighted and the support points are sampled from Gaussian distributions. We choose $N=5$ and data dimension $d=2$. 

In Experiment 1, we choose the means of the Gaussian distributions to be $(-4.0, -1.0)$,$(-2.0, 2.2)$,\\$(1.0, -6.0)$,$(4.0, -4.2)$ and $(4.2, -0.8)$, while the covariance matrices were chosen to be $
\begin{bmatrix}
0.60 & 0.20 \\
0.20 & 0.90
\end{bmatrix}$,\\$\begin{bmatrix}
0.80 & -0.15 \\
-0.15 & 0.55
\end{bmatrix},\begin{bmatrix}
0.65 & 0.00 \\
0.00 & 0.65
\end{bmatrix},\begin{bmatrix}
0.55 & 0.10 \\
0.10 & 1.00
\end{bmatrix}$ and $\begin{bmatrix}
0.95 & 0.00 \\
0.00 & 0.50
\end{bmatrix}$. $M=30$ support points were sampled in i.i.d manner from the 5 Gaussian distributions determined by each pair of mean and covariance parameters and the resulting uniformly weighted discrete distributions were set as the patterns to be stored.

In Experiment 2, we choose the means of the 5 Gaussian distributions to be all equal to $(0,0)$ and the covariance matrices were randomly sampled using random orthogonal matrices coupled with uniformly sampled eigenvalues between 0.15 and 1.75. We sample $M=25$ support points in i.i.d manner from each of these Gaussian distributions and the resulting uniformly weighted discrete distributions were set as the patterns to be stored.

In both the experiments, we first fix a pattern that we want to retrieve, then perturb the support points individually using i.i.d Gaussian noise (sd 0.5 in Experiment 1 and sd 0.2 in Experiment 2) to generate a query distribution that serves as the initial iterate $\xi^{(0)}$ for both algorithms. We chose $\beta=50$, $\varepsilon=0.05$ and step-size $\eta = 1.3$ for the SinkhornSHK Algo, and the same $\beta$ for Euclidean Algo. We do not use the spherical Hellinger update step in these simple experiments since the all discrete measures involved are uniformly weighted. We use a maximum iteration threshold $k \leq 200$ for both algorithms, and the Sinkhorn algorithm for computing entropic OT transport plans were capped at 120 iterations.

In Experiment 1, we see that both algorithms are able to retrieve the correct discrete distributions when considering the support points that are returned by either algorithm. However, Experiment 2 clearly shows the superiority of SinkhornSHK Algo over Euclidean Algo, since the Sinkhorn Algo is able to converge to the correct pattern even when a noisy query is given. We believe that the ability of Sinkhorn ALgo to leverage the distributional perspective gives it the advantage over Euclidean Algo, since the latter relies on Euclidean inner products and is expected to fail in cases where Euclidean separation between support points is small, but separation in distributional metrics is still feasible.
\begin{figure}[!htbp]
  \centering
    \includegraphics[width=1.2\linewidth]{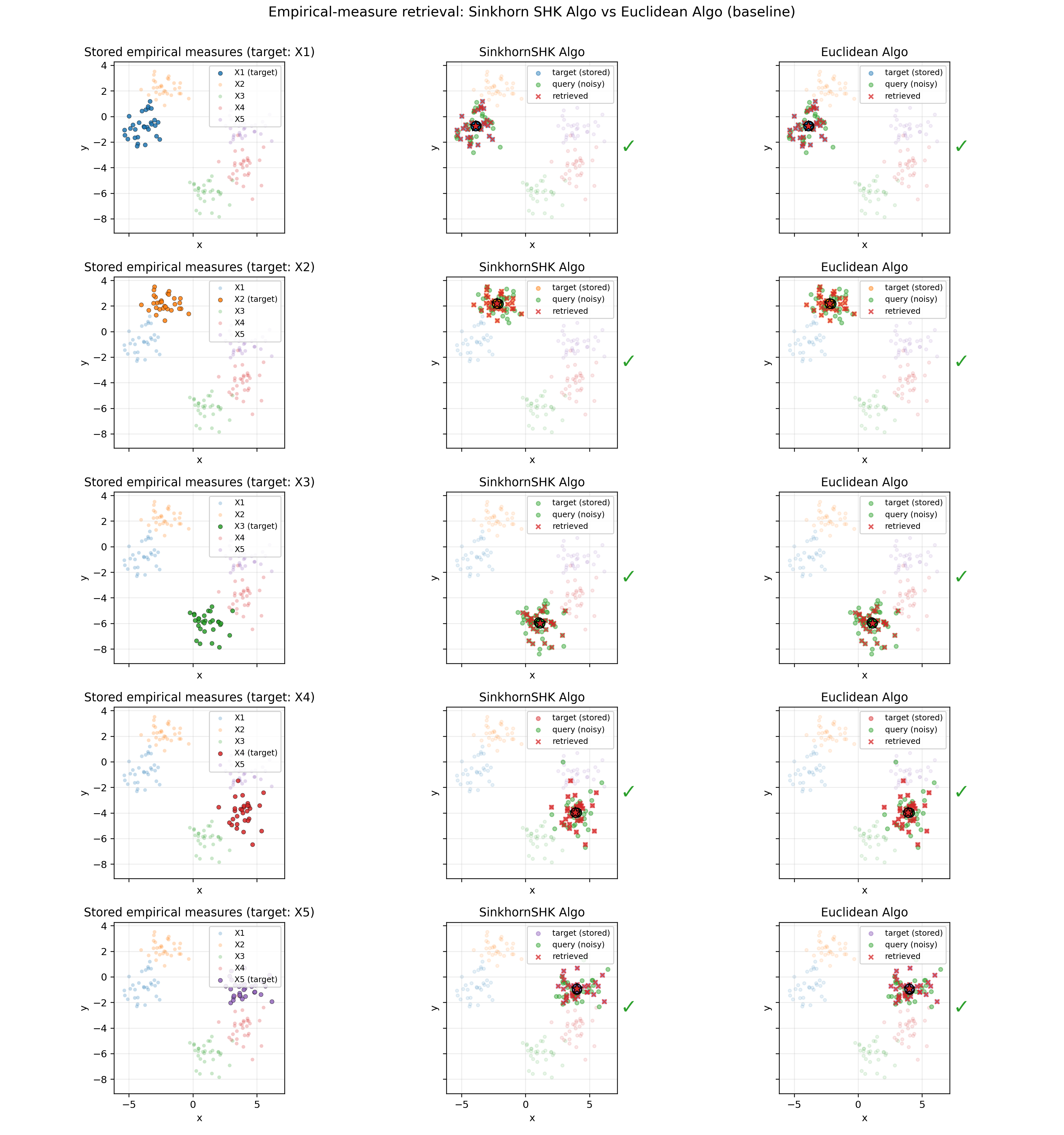}
   \caption{Experiment 1: Sinkhorn Algo and Euclidean Algo are both able to retrieve correct patterns from noisy queries}
    \label{Experiment 1: Sinkhorn Algo and Euclidean Algo are both able to retrieve correct patterns from noisy queries}
\end{figure}

\begin{figure}[!htbp]
  \centering
    \includegraphics[width=1.2\linewidth]{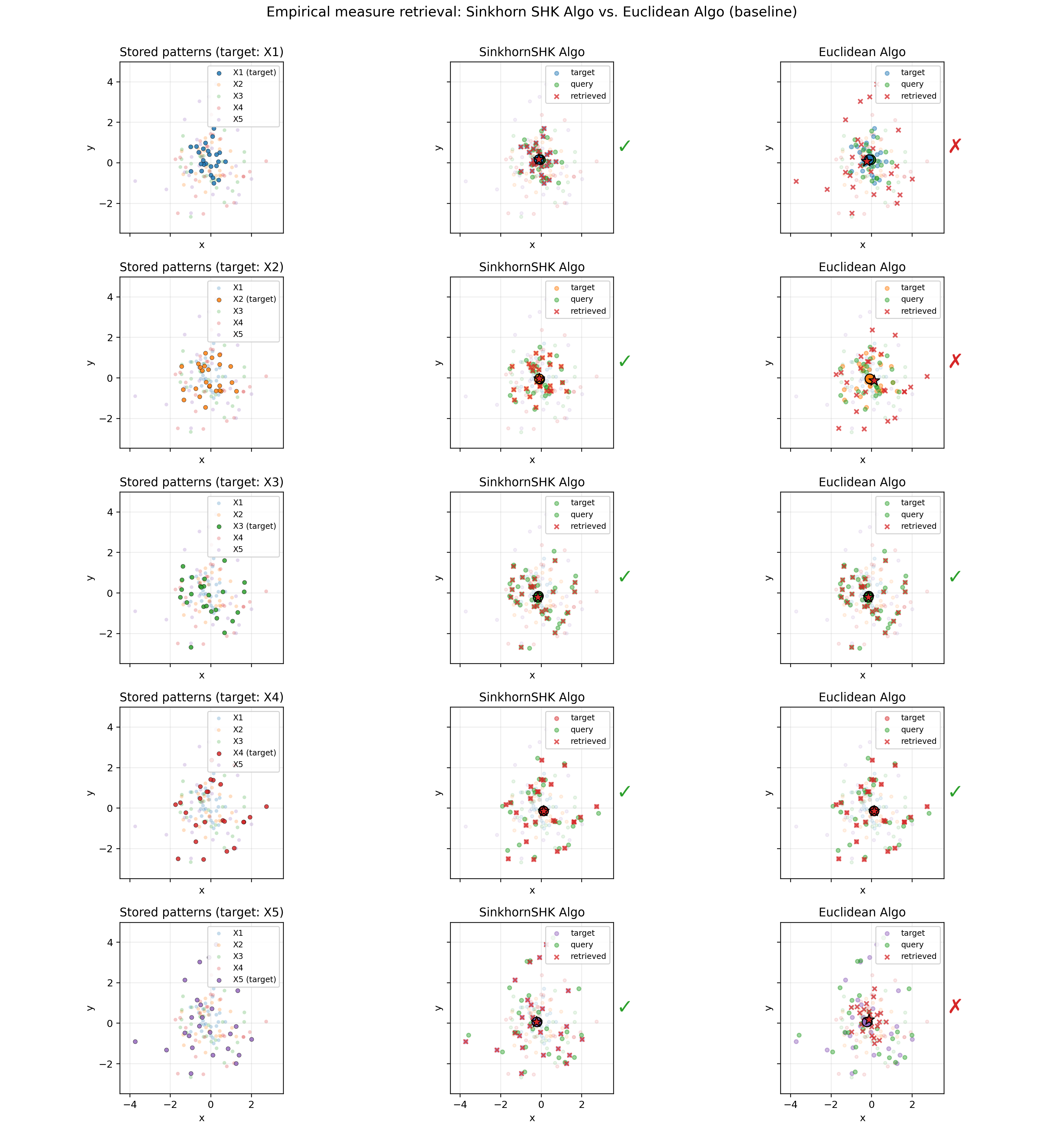}
    \caption{Experiment 2: Sinkhorn Algo succeeds in retrieving correct patterns from noisy queries in all instances, but Euclidean Algo fails in 3 cases.}
\label{Experiment 2: Sinkhorn Algo succeeds in retrieving correct patterns from noisy queries in all instances, but Euclidean Algo fails in 3 cases.}
\end{figure}
%%%%%%%%%%%%%%%%%%%%%%%%%%%%%%%%%%%%%%%%%%%%%%%%%%%%%%%%%%%%

\end{document}